\documentclass[mnsc,nonblindrev]{informs3}

\RRHFirstLine{}
\RRHSecondLine{}
\LRHFirstLine{}
\LRHSecondLine{}

\OneAndAHalfSpacedXI %

\usepackage{endnotes}

\usepackage{enumitem}
\usepackage{amsmath}
\usepackage{bm} 
\usepackage{framed}
\usepackage[outdir=./]{epstopdf}
\usepackage{makecell}

\usepackage{graphicx}
\usepackage{subfig}
\usepackage{xcolor,cancel}
\usepackage{tabularx}
\usepackage{multirow}
\usepackage{threeparttable}
\usepackage{booktabs}  %

\usepackage{algorithm}
\usepackage{algorithmicx}
\usepackage{algpseudocode}

\usepackage{url}
\usepackage{tabularx}

\newcolumntype{C}[1]{>{\centering\arraybackslash}p{#1}}

\newcommand{\E}{\mathbb{E}}
\newcommand{\Var}{\mathrm{Var}}
\newcommand{\Cov}{\mathrm{Cov}}

\usepackage{natbib}
 \bibpunct[, ]{(}{)}{,}{a}{}{,}%
 \def\bibfont{\small}%

\TheoremsNumberedThrough     %
\ECRepeatTheorems

\EquationsNumberedThrough    %

\begin{document}

\RUNAUTHOR{Bekci, Jasin, and Uichanco}

\RUNTITLE{Borrowing Strength}

\TITLE{Borrowing Strength Across Stores: Transfer Learning for Choice-Based Revenue Management\thanks{Authors are listed in alphabetical order.}}

\ARTICLEAUTHORS{%
\AUTHOR{Recep Bekci}
\AFF{Department of Management Science and Engineering, University of Waterloo, Waterloo, Ontario, Canada\\
\EMAIL{recep.bekci@uwaterloo.ca}}
\AUTHOR{Stefanus Jasin}
\AFF{Stephen M. Ross School of Business, University of Michigan, Ann Arbor, Michigan, USA\\
\EMAIL{sjasin@umich.edu}}
\AUTHOR{Joline Uichanco}
\AFF{Department of Technology Management and Innovation, Tandon School of Engineering,\\
New York University, Brooklyn, New York, USA\\
\EMAIL{ju441@nyu.edu}}
} %

\ABSTRACT{Large retail chains must tailor assortment and pricing decisions to local customer preferences, yet even retailers with extensive sales records often have limited data for estimating demand at the store or local market level, particularly in product categories with short selling windows. Local estimation\textemdash fitting a demand model for each store on its own\textemdash can therefore be unreliable, while pooling data across stores improves estimation precision at the cost of ignoring local demand heterogeneity. We study how retailers can balance these competing effects through transfer learning under a multinomial logit (MNL) demand model. We propose a shrinkage estimator that combines store-specific demand estimates with information pooled from other stores. Unlike the linear or quadratic settings underlying much of the existing theoretical literature on information transfer in operations, the nonlinear MNL likelihood does not admit a straightforward bias–variance decomposition. By exploiting its self-concordant-like properties, we establish finite-sample performance guarantees that characterize when information transfer improves demand estimation and how the appropriate degree of shrinkage depends on local sample size, feature dimensionality, and cross-store heterogeneity. We further derive regret bounds for downstream assortment and pricing decisions. Numerical experiments demonstrate the operational benefits of shrinkage across different data environments. An empirical study using data from several hundred stores of a European apparel retailer shows that shrinkage improves out-of-sample product prioritization and revenue capture relative to both local estimation and full pooling. Our findings demonstrate how retailers can leverage information across their store networks to improve revenue management decisions while preserving local demand differences.
}%

\maketitle

\section{Introduction}
\label{sec:introduction}

Retailers often make revenue management decisions across many local markets where customers differ substantially in how they substitute across products. Since local covariates such as demographics, climate, and competitive intensity shape these substitution patterns, a retail chain carrying the same catalog across hundreds of stores may adapt to local tastes by varying its assortments and prices or by featuring different products within each store's stocked assortment. Yet the data available at any single store are often sparsest 
precisely where accurate estimation matters most, as in style-sensitive categories with short selling windows or new product launches. Retailers therefore face a tradeoff between store-level estimation, which captures local preferences but can be noisy when data are scarce, and pooling data across stores, which improves precision but may mask differences that matter for local decisions. 

This tradeoff is especially sharp when the retailer estimates these substitution patterns through a structural choice model, such as the multinomial logit (MNL) model, a workhorse in assortment and pricing optimization. In this model, demand for a product depends on its own features (e.g., price, brand, style) and on the features of the other products in the offered set, so a store's choice model is governed by a whole vector of coefficients that must be estimated jointly. With so many coefficients to recover, a store-level maximum likelihood estimator (MLE) can exhibit high variance when local data are scarce. Pooling the data, by estimating a single coefficient vector from the combined sales of all stores, reduces this variance by borrowing strength across the chain, but a fully pooled estimator is biased whenever stores differ in their underlying preferences. Both sources of estimation error can lead to assortment and pricing decisions that are poorly suited to the local market. This creates a natural opportunity for transfer learning, in which information from other stores is used to improve store-specific demand estimation without assuming that all stores have identical preferences.

This paper studies that tension by asking:
\emph{when local data are limited, how much data should a retailer borrow from other stores when estimating a store-specific choice model?} We formulate a multi-store MNL setting in which each store has its own coefficient vector, but all such vectors lie within a distance $\Delta$ of a common center. This captures the feature that local markets share a common demand structure but are not identical, with $\Delta$ measuring how much they differ. We study three estimators for each store: a \emph{local estimator} that uses only store-specific data, a \emph{pooled estimator} that fits a single coefficient vector to the combined data of all stores, and a \emph{shrinkage estimator} that combines the local estimate with a pooled estimate constructed from the other stores. The shrinkage weight determines how much information is transferred from the rest of the chain to each individual store. Our finite-sample analysis characterizes how the value of such information transfer depends on local sample size, feature dimensionality, and cross-store heterogeneity.

Related work on data pooling and transfer learning shares our intuition: when each estimation unit has little data, borrowing across units can reduce variance. How tractable that analysis becomes depends on the object being compared. For a mean-squared-error criterion, the effect of pooling admits an explicit quadratic risk decomposition \citep{li2023mostly}; for prescriptive problems with linear objectives, it can be handled through a tractable bias correction \citep{gupta2021small}; and in stochastic optimization more broadly, the benefit of pooling is known to be structure-dependent rather than universal \citep{gupta2022data}. Hierarchical choice models have also long exploited information sharing across markets, but our focus is on establishing finite-sample performance guarantees for such transfer under structural choice demand. Our setting falls outside this tractable regime in two respects. The statistical object is the store-level expected log-likelihood loss of the MNL model, which is nonlinear, with curvature that varies across the parameter space, so the risk comparison admits no exact bias-variance identity. And the decision objects, assortment and pricing regret, depend on the estimated coefficient vector through a nonlinear optimization rather than a fixed quadratic or linear map. Classical quadratic shrinkage arguments therefore do not extend directly, and establishing finite-sample guarantees for transfer learning in this choice-model setting requires new analytical machinery. 

We develop this machinery by adapting self-concordant-like (SCL) analysis to the MNL setting. We specialize the SCL property of the MNL negative log-likelihood to our shared-feature multi-store formulation, showing how local curvature controls the third derivative. This property gives a structured way to handle the likelihood's nonlinearity. It lets us track how the curvature of the likelihood changes across nearby parameter values, derive local quadratic approximations with explicit remainder bounds, and obtain nonasymptotic concentration results for store-level estimators without relying on asymptotic normality. %

The analysis finds that shrinkage towards the pooled estimate reduces the variance of a store's estimate but adds bias when the store differs from the rest of the chain. We derive a shrinkage weight based on a local quadratic approximation and introduce a damped version that controls the error arising from the nonlinearity of the MNL likelihood. Our finite-sample analysis identifies conditions under which the resulting estimator improves on local estimation, and shows that the degree of shrinkage decreases as local sample size or cross-store heterogeneity increases. Hence, our analysis yields an interpretable finite-sample characterization of when and how much information retailers should transfer across stores when estimating structural choice models.

The paper ties this statistical analysis directly to revenue management decisions. What ultimately matters is how parameter error propagates through the downstream optimization to the decision. In assortment, a small change in an estimated coefficient can change the assortment and cause a discrete jump in revenue. In pricing, the effect is smoother but still costly, as a biased estimate of price sensitivity can leave a store systematically overpriced or underpriced. We derive regret bounds that characterize how local estimation error, pooled estimation error, and cross-store heterogeneity affect the performance of assortment and pricing decisions. A simulation study confirms these insights. Across levels of data scarcity and cross-store heterogeneity, shrinkage generally stays close to whichever of local estimation and full pooling attains lower regret, and can outperform both in intermediate regimes.

We then ask whether this advantage holds on observational data in a case study of a large multi-category apparel retailer. We rank products by each estimator's predicted revenue contribution and compare the share of realized revenue captured by the top-ranked products on held-out data. Adaptive shrinkage outperforms both local and pooled estimation across product categories and store-weeks with different amounts of local training data. Averaged across category-store-weeks, adaptive shrinkage improves revenue capture by 2.6 percentage points relative to the local estimator and by 3.1 percentage points relative to the pooled estimator. It outperforms each benchmark in more than 80\% of category-store-weeks. The improvement over the pooled estimator is statistically reliable in every
category, and the improvement over the local estimator, while smaller on
average, is reliable overall and reaches as much as 11 percentage points in
some category and volume combinations. These results demonstrate the practical value of transferring information across stores for product prioritization while preserving meaningful differences in local demand.

This paper makes four main contributions. First, we formulate a transfer-learning problem for multi-store MNL estimation under data scarcity and bounded cross-store heterogeneity, and introduce a family of shrinkage estimators indexed by a weight that controls how much each store borrows from the chain, with local estimation and full pooling as its two extremes.
Second, we develop the analytical apparatus required for this nonlinear setting. By adapting self-concordant-like analysis to the MNL log-likelihood, we obtain finite-sample control of the likelihood geometry that does not rely on asymptotic normality. This matters because local samples are small relative to the parameter dimension, the regime in which asymptotic approximations can misrepresent both the estimation risk and the resulting decisions. Third, using this apparatus, we develop a shrinkage weight with finite-sample risk guarantees that characterize when cross-store information transfer is beneficial as a function of sample size, dimension, and heterogeneity.
Fourth, we extend the analysis from estimation to decisions, establishing operational regret bounds for the assortment and pricing problems that quantify how estimation error propagates into revenue. Simulation and an empirical study with retailer data then confirm that adaptive shrinkage improves these downstream decisions in practice.

The rest of the paper is organized as follows. Section~\ref{sec:literature} reviews the related literature on data scarcity, pooling, and structural demand estimation. Section~\ref{sec:multistore_mnl} introduces the multi-store MNL model and its two key features, limited local data and bounded cross-store heterogeneity, and Section~\ref{sec:estimators} defines the local, pooled, and shrinkage estimators. Section~\ref{sec:theory} establishes the finite-sample performance guarantees, and Section~\ref{sec:operational} translates them into operational regret bounds for assortment and pricing. Sections~\ref{sec:simulation} and~\ref{sec:empirical} report the simulation and empirical studies, and Section~\ref{sec:conclusion} concludes.

\section{Literature Review}
\label{sec:literature}

This paper sits at the intersection of structural revenue management
and small-data learning across related markets. One relevant literature
studies choice-based assortment and pricing, where structural demand
models are used to support revenue-management decisions. A second
literature studies how to borrow information across related units when
local data are scarce, through hierarchical pooling, aggregation,
clustering, or transfer. A final methodological strand develops
finite-sample tools for nonlinear likelihood models. The discussion
below is necessarily selective rather than exhaustive and focuses on
the papers most closely related to our contribution.

\paragraph{Choice-based assortment and pricing.} A broad stream of work uses structural demand models to support assortment and pricing decisions in retail, including demand estimation and store-level assortment optimization with observed sales and scanner data \citep{fisher2014demand,rooderkerk2013optimizing}. More recent papers study high-dimensional and personalized choice models, dynamic learning of joint assortment and pricing decisions, and large-scale field implementations in online retail \citep{jiang2025high,chen2022statistical,miao2021dynamic,feldman2022customer}. Another branch enriches the underlying choice model through default effects, consideration sets, and sequential or patience-based search \citep{najafi2025pricing,aouad2021assortment,gao2021assortment}. Our paper contributes to this broad stream by focusing on a different problem: how to estimate store-specific MNL models when a retailer operates many related stores with limited local data, and how that estimation problem affects downstream assortment and pricing decisions.

\paragraph{Data scarcity.}
The literature on data scarcity is broad, so we focus on the strands most closely related to our setting. In demand estimation, a long line of work uses hierarchical or partial-pooling models to stabilize local estimates across related markets, stores, or customer segments \citep{ramaswamy1993empirical,montgomery1997creating,arora1998hierarchical,bradlow2000hierarchical}. More recent work shows that hierarchical choice models remain strong practical benchmarks and that conventional out-of-sample fit measures can be poorly aligned with profit-based model selection \citep{smith2023optimal}. Our paper is related to this literature in its use of cross-unit borrowing, but differs in emphasis: we study finite-sample shrinkage in a multi-store MNL model and characterize explicitly how the local-versus-pooled tradeoff depends on local sample size and cross-store heterogeneity.

A second line of work studies small-data decision making through pooling, shrinkage, and clustering. In operations, \cite{gupta2021small} formalize the small-data, large-scale regime, \cite{gupta2022data} study data pooling in stochastic optimization, and \cite{gupta2024debiasing} develop debiasing methods for policy evaluation in this setting. Related retail analytics papers use aggregation or clustering to improve prediction and pricing when unit-level data are sparse \citep{cohen2022data,miao2022context,li2023mostly}. These papers are close in spirit to ours, but they mainly focus on linear-objective stochastic optimization, reduced-form demand prediction, or clustering-based decision rules rather than store-specific structural choice estimation.

A third line uses transfer, meta-, or cross-learning to move information across related products, tasks, or experiments \citep{bastani2021predicting,bastani2022meta,lei2025pooling,feng2023transfer}. Our paper is complementary to this literature. Rather than transferring a predictor or learning a shared prior across tasks, we study adaptive partial pooling for store-specific MNL estimation, make heterogeneity explicit through the radius \(\Delta\), exploit the nonlinear geometry of the MNL likelihood, and evaluate the resulting shrinkage weights through downstream assortment and pricing regret.

\paragraph{Methodological positioning.}
Methodologically, the closest lineage is the self-concordant analysis
of binary logistic regression \citep{bach2010self}, its later
generalization to broader nonlinear loss classes
\citep{sun2019generalized}, and finite-sample analyses of
\(M\)-estimators based on this geometry \citep{ostrovskii2021finite}.
Our paper builds on this line in three directions. First, we specialize
the self-concordant-like geometry of the MNL negative log-likelihood to
our shared-feature multi-store setting. Second, we use this geometry to
derive store-specific local quadratic expansions and nonasymptotic
concentration bounds with explicit remainder control under data
scarcity. Third, we embed these results in a leave-one-out shrinkage
analysis and translate the resulting finite-sample guarantees into
operational regret bounds for assortment and pricing.

\section{Multi-Store MNL Estimation Under Data Scarcity}
\label{sec:multistore_mnl}

We now formalize the multi-store demand estimation problem. We consider a retail network in which a common product category is offered across many stores whose customers may have different preferences. Estimation is challenging due to two reasons. First, each store provides only limited local choice data. Second, customers substitute across products differently from store to store, so pooling data across the network can bias the demand model for any individual store. This section introduces the multi-store MNL model and the heterogeneity structure that underpins the analysis.

\subsection{Store Network and Local Choice Models}

Consider a network of $K\ge2$ stores offering a common catalog of $N$ products. We denote by $\mathcal N$ the set of products. Product $i\in\mathcal N$ is associated with a feature vector $X_i\in\mathbb{R}^d$, and the outside option is indexed by $0$ with $X_0=0$. Each product's feature vector $X_i$ is the same across all stores; what varies from store to store is how customers value those features.

We encode these store-specific valuations in a parameter vector $\beta_k^\star\in\mathbb{R}^d$, so that customer choices at store $k\in\{1,\dots,K\}$ follow a multinomial logit model with parameter $\beta_k^\star$. The probability that a store-$k$ customer chooses product $i$ is
\begin{equation}
P_i(\beta_k^\star)
=
\frac{\exp\!\left(X_i^\top\beta_k^\star\right)}
{1 + \sum_{j \in \mathcal N} \exp\!\left(X_j^\top \beta_k^\star\right)},
\qquad i \in \mathcal N.
\label{eq:sec3_mnl_prob}
\end{equation}
This specifies a complete demand system at store $k$, determined entirely by its parameter $\beta_k^\star$. 

Consider choice data of $n_k$ store-$k$ customers, indexed by $t \in \{1, \ldots, n_k\}$. Let $y_{t,k} \in \mathcal N \cup \{0\}$ denote the product chosen by customer $t$ (with $y_{t,k} = 0$ if customer $t$ selects the outside option), and collect these into vector $Y_k=(y_{1,k},\dots,y_{n_k,k})$. These choices are independently drawn from the store-$k$ model, 
\[
\mathbb{P}(y_{t,k}=i)=P_i(\beta_k^\star), \qquad i\in \mathcal N \cup \{0\}.
\]
Let $c_{ik}:=\sum_{t=1}^{n_k}\mathbf{1}\{y_{t,k}=i\}$ denote the number of times product $i$ is chosen at store $k$. The empirical negative log-likelihood for store $k$ is
\begin{equation}
L_k(\beta;Y_k)
=
-\sum_{i\in \mathcal N} c_{ik}\,\beta^\top X_i
+
n_k \log\!\left(1+\sum_{j \in \mathcal N} \exp(\beta^\top X_j)\right).
\label{eq:sec3_store_loss}
\end{equation}
Its population counterpart is
\begin{equation}
L_k^\star(\beta):=\mathbb{E}\!\left[L_k(\beta;Y_k)\right],
\label{eq:sec3_pop_loss}
\end{equation}
where the expectation is over $Y_k$ drawn from the true store-$k$ model $P(\beta_k^\star)$. We refer to
\(L_k(\beta;Y_k)\) as the store-$k$ \textit{empirical loss} and \(L_k^\star(\beta)\) as the store-$k$ \textit{population loss}.  A superscript $\star$ denotes a population quantity defined at the true parameter.

Under correct specification, \(\beta_k^\star\) minimizes \(L_k^\star(\beta)\). To see this, note that
\[
\frac{1}{n_k} L_k^\star(\beta)
=
- \sum_{i \in \mathcal N \cup \{0\}}
P_i(\beta_k^\star)\,\log P_i(\beta).
\]
The right-hand side is the cross-entropy between the true model \(P(\beta_k^\star)\) and the candidate distribution
\(P(\beta)\). It can be decomposed as
\[
- \sum_{i \in \mathcal N \cup \{0\}}
P_i(\beta_k^\star)\,\log P_i(\beta_k^\star)
\;+\;
D_{\mathrm{KL}}\!\left(
P(\beta_k^\star)\,\|\,P(\beta)
\right).
\]
The first term is constant in \(\beta\), while the KL divergence is
nonnegative and vanishes if and only if the two choice distributions
coincide. Provided the choice model is identifiable (formalized in Assumption~\ref{ass:fisher}), the divergence vanishes only at \(\beta = \beta_k^\star\), so \(\beta_k^\star\) is the unique
minimizer of \(L_k^\star\). For this reason, we evaluate any estimator $\widehat\beta$ through the population loss $L_k^\star(\widehat\beta)$.

Since $\beta_k^\star$ determines how assortment and pricing decisions translate 
into revenue at store $k$, estimating it accurately is central to store-level revenue management. The difficulty is that, in many product categories, the local sample size $n_k$ is too small for local estimation to be reliable.

\subsection{Data Scarcity and Cross-Store Heterogeneity}

The data-scarce regime arises naturally in retail. Demand is fragmented across stores, so even large chains
generate only a modest number of informative observations for any given store--category pair. The effective sample size is smaller still when the objective is to recover substitution patterns rather than aggregate sales, because the analyst must infer relative preferences across products instead of simply tracking overall volume. This scarcity is most acute in categories with short selling windows or frequent assortment turnover, such as seasonal products, fashion items, and new launches. These are precisely the categories where structural demand estimation is most valuable, and where store-level estimation is most fragile.

Data scarcity motivates pooling, but store-level demands are structurally heterogeneous. Stores differ in customer demographics, local competition, climate, and format. These differences shift the underlying taste parameters. Hence, we model stores as related but not identical, assuming a common center \(\beta_c^\star \in \mathbb{R}^d\)
and a radius \(\Delta \ge 0\) such that
\begin{equation}
\|\beta_k^\star - \beta_c^\star\|_2 \le \Delta,
\qquad k = 1, \dots, K,
\label{eq:sec3_heterogeneity}
\end{equation}
as formalized in Assumption \ref{ass:hetero}. The radius \(\Delta\) measures the degree of cross-store
heterogeneity.

This condition imposes neither identical coefficients, discrete clusters, nor a hierarchical prior. Rather, it
formalizes the idea that stores share a common demand structure as part of the same network, while still allowing
for systematic local variation.

\subsection{Assumptions}
\label{subsec:assumptions}

Our analysis relies on a set of regularity conditions on the data-generating model, which we state next. Throughout, let $X^\beta$ denote the random feature vector of the alternative chosen by a customer whose choices follow the MNL model \eqref{eq:sec3_mnl_prob} with parameter $\beta$, so that $\mathbb{P}(X^\beta = X_i) = P_i(\beta)$ for $i \in \mathcal N$. For each store $k$, write
$\Sigma_X^k := \Cov(X^{\beta_k^\star})$.

\begin{assumption}[Bounded features]
\label{ass:bounded}
There exists \(B>0\) such that
$
\max_{i \in \mathcal N} \|X_i\|_2 \le B.
$
\end{assumption}

\begin{assumption}[Compact parameter region]
\label{ass:parameter}
The parameter space \(\Theta\subset\mathbb R^d\) is compact and convex,
with finite diameter
\[
D_\Theta
:=
\sup_{\beta,\beta'\in\Theta}\|\beta-\beta'\|_2.
\]
There exists \(\rho_\Theta>0\) such that
\[
\beta_k^\star\in\operatorname{int}(\Theta),
\qquad
\operatorname{dist}(\beta_k^\star,\partial\Theta)
\ge \rho_\Theta,
\qquad k=1,\ldots,K.
\]
The quantities $D_\Theta$ and $\rho_\Theta$ are fixed independently of
$d$, $K$, and the sample sizes.
\end{assumption}

\begin{assumption}[Nondegenerate local information]
\label{ass:fisher}
There exist constants \(0<m\le M<\infty\), independent of \(n_k\),
such that
$
m I_d \preceq \Sigma_X^k \preceq M I_d
$
for all $k$.
\end{assumption}

\begin{assumption}[Sampling independence]
\label{ass:indep}
Observations are independent across stores and i.i.d.\ within each store.
\end{assumption}

\begin{assumption}[Bounded cross-store heterogeneity]
\label{ass:hetero}
There exist a common center \(\beta_c^\star\in\mathbb{R}^d\) and a
radius \(\Delta\ge0\) such that
$
\|\beta_k^\star-\beta_c^\star\|_2 \le \Delta
$
for all $k$.
\end{assumption}

\begin{assumption}[Balanced store sample sizes]
\label{ass:ratio}
There exists a constant \(C_{\max}\ge 1\) such that
$
\max_{1\le k\le K} n_k
\le
C_{\max}\,\min_{1\le k\le K} n_k.
$
\end{assumption}

\begin{assumption}[Minimum local sample size]
\label{ass:sample}
There exists a constant \(C_0\ge 1\), depending only on
\((B,m,M,C_{\max},D_\Theta,\rho_\Theta)\), such that
$
n_{\min}:=\min_{1\le k\le K} n_k \ge C_0 d.
$
\end{assumption}

These assumptions have simple interpretations in the present setting.
Assumption~\ref{ass:bounded} fixes the scale of the product features.
This is mainly a normalization condition. It rules out products whose
feature vectors are arbitrarily large and ensures that the MNL
likelihood changes smoothly as the parameter changes.

Assumption~\ref{ass:parameter} ensures that every empirical estimator
is well defined, including on samples for which an unrestricted MNL
likelihood would not have a finite minimizer. The fixed interior margin
keeps the population parameters away from the boundary. Convexity also
ensures that every shrinkage combination remains in the same parameter
region. A convenient implementation is the Euclidean ball
$\Theta=\{\beta:\|\beta\|_2\le B_\beta\}$, with
$\|\beta_k^\star\|_2\le B_\beta-\rho_\Theta$.

Assumption~\ref{ass:fisher} requires each store's choice data to contain meaningful variation
in the product-feature space. A direct calculation (see
Corollary~\ref{cor:app_population_hessian} in the Appendix) shows that
$\nabla^2 L_k^\star(\beta_k^\star) = n_k \Sigma_X^k$. Thus \(\Sigma_X^k\) is the
per-observation information matrix of the store-\(k\) choice model. For
any unit vector \(v\), $v^\top \Sigma_X^k v = 
\Var(v^\top X^{\beta_k^\star})$, so the lower bound \(m I_d\preceq \Sigma_X^k\) rules out feature directions along which the alternatives chosen at store \(k\) provide
almost no variation. Without such variation, the likelihood would be
nearly flat in that direction and the corresponding component of
\(\beta_k^\star\) would be difficult to estimate. The upper bound
\(\Sigma_X^k\preceq M I_d\) prevents the opposite degeneracy, where the
curvature scale becomes unbounded. Together, these bounds ensure that
each store-level population loss has stable local curvature, so the
norm used in the finite-sample analysis is well behaved. 

Assumptions~\ref{ass:indep} and~\ref{ass:hetero} formalize a multi-store setting in which samples are independent across stores and store-specific parameters lie within a bounded distance of a common center. Assumption~\ref{ass:ratio} rules out extreme imbalance in store-level
sample sizes. Assumption~\ref{ass:sample} keeps the analysis away from an
ultra-sparse regime in which local estimation is essentially
ill-posed. For the results below, we take $C_0$ sufficiently large. Lemma~\ref{lem:app_constants} in Appendix~\ref{app:barDelta} shows that the required threshold depends only on $(B,m,M,C_{\max},D_\Theta,\rho_\Theta)$.

Unless otherwise stated, numerical constants may depend on the
primitive regularity quantities
\((B,m,M,C_{\max},D_\Theta,\rho_\Theta)\), but not on
\(d\), \(K\), or the sample sizes \(\{n_k\}_{k=1}^K\).  

\section{Local, Pooled, and Shrinkage Estimators}
\label{sec:estimators}

We next introduce three estimators of the store-$k$ parameter $\beta_k^\star$, differing in how much they borrow from the rest of the network. The \emph{local} estimator relies solely on local data $Y_k$. At the opposite extreme, the \emph{pooled} estimator borrows as
aggressively as possible from the entire chain, effectively
imposing a common demand model across stores. The \emph{shrinkage}
estimator lies between these extremes. It anchors on the local estimator and then shrinks it toward a leave-one-out pooled estimator constructed from other stores' data $(Y_\ell)_{\ell\neq k}$. %

\subsection{Local Estimation}

The most direct way to estimate $\beta^\star_k$ is to use only store $k$'s own data $Y_k$. This leads to the store-$k$ local maximum likelihood estimator (MLE)
\begin{equation}
\widehat\beta_k
\;:=\;
\argmin_{\beta\in\Theta} L_k(\beta;Y_k),
\qquad k=1,\dots,K,
\label{eq:sec4_local_mle}
\end{equation}
where \(L_k(\beta;Y_k)\) is the empirical loss defined in \eqref{eq:sec3_store_loss}.

The appeal of the local estimator $\widehat\beta_k$ is that it uses only store $k$'s own data, so it targets the store's own model \(P(\beta_k^\star)\) and preserves local heterogeneity. When store \(k\) has a rich purchase history, it is the natural benchmark. However, in a data-scarce regime, \(\widehat\beta_k\) becomes unstable. Its variance grows as \(n_k\) shrinks, and this instability propagates into downstream decisions. Specifically, small perturbations in the estimated choice model can change which products are selected for the assortment or which prices are recommended, as Section~\ref{sec:simulation} illustrates. The local MLE is thus statistically
well-founded, but operationally fragile when local data are thin.

\subsection{Pooled Estimation}

At the opposite extreme, one may ignore cross-store
heterogeneity and estimate a single choice model for all stores using
data from the entire network $Y = (Y_k)_{k=1}^K$. Define the \textit{pooled empirical loss}
\begin{equation}
L_0(\beta;Y)
\;:=\;
\sum_{k=1}^K L_k(\beta;Y_k),
\label{eq:sec4_pooled_loss}
\end{equation}
and the corresponding pooled estimator
\begin{equation}
\widehat\beta_0
\;:=\;
\argmin_{\beta\in\Theta} L_0(\beta;Y).
\label{eq:sec4_pooled_mle}
\end{equation}
The {pooled population loss} and its minimizer are
\begin{equation}
L_0^\star(\beta)
\;:=\;
\sum_{k=1}^K L_k^\star(\beta),
\qquad
\beta_0^\star
\;:=\;
\argmin_{\beta\in\Theta} L_0^\star(\beta).
\label{eq:sec4_pooled_pop}
\end{equation}
The pooled population target \(\beta_0^\star\) need not coincide with the
common center \(\beta_c^\star\).

The pooled estimator $\widehat\beta_0$ is statistically stable, so it is appealing when local data are too scarce for reliable local estimation. When stores are heterogeneous, however, its target \(\beta_0^\star\) differs from each $\beta_k^\star$, so it is systematically biased for every store. It is therefore not ideal for estimating demand to guide pricing or assortment decisions across heterogeneous markets. Nevertheless, the pooled estimator provides a useful benchmark, since it represents the maximal variance reduction available from borrowing across the chain.

\subsection{Shrinkage Toward Leave-One-Out Pooled Targets}

We introduce a shrinkage estimator $\widehat\beta_k^{\mathrm{shr}}$ that balances the local estimator's variance against the pooled estimator's bias by shrinking the local estimator $\widehat\beta_k$ toward a pooled target. %
The target is not the full-sample estimator \(\widehat\beta_0\) but a pooled estimator constructed from $(Y_\ell)_{\ell \neq k}$, the data of all stores except $k$. Specifically, the store-$k$ leave-one-out pooled target is
\begin{equation}
\widehat\beta_0^{(-k)}
\;:=\;
\argmin_{\beta\in\Theta} L_0^{(-k)}(\beta;Y),
\label{eq:sec4_loo_mle}
\end{equation}
where $L_0^{(-k)}(\beta;Y)$ is the store-\(k\) \textit{leave-one-out pooled empirical loss}
\begin{equation}
L_0^{(-k)}(\beta;Y)
\;:=\;
\sum_{\ell\neq k} L_\ell(\beta;Y_\ell).
\label{eq:sec4_loo_loss}
\end{equation}
The leave-one-out population loss and its minimizer are
\begin{equation}
L_0^{(-k),\star}(\beta)
\;:=\;
\sum_{\ell\neq k} L_\ell^\star(\beta),
\qquad
\beta_0^{(-k),\star}
\;:=\;
\argmin_{\beta\in\Theta} L_0^{(-k),\star}(\beta).
\label{eq:sec4_loo_pop}
\end{equation}

All displayed minimizers exist by compactness of $\Theta$. They are
unique because Hessian stability and Assumptions~\ref{ass:parameter}--
\ref{ass:fisher} imply
$\Sigma_X(\beta)\succeq m e^{-2BD_\Theta}I_d$ for every
$\beta\in\Theta$, so each local or pooled objective is strictly convex
on $\Theta$.

We introduce a shrinkage weight $w_k \in [0,1]$ that controls the degree of shrinkage. The store-\(k\) shrinkage estimator is
\begin{equation}
\widehat\beta_k^{\mathrm{shr}}(w_k)
\;:=\;
(1-w_k)\,\widehat\beta_k
\;+\;
w_k\,\widehat\beta_0^{(-k)}.
\label{eq:sec4_shrinkage}
\end{equation}
When \(w_k=0\), the estimator reduces to the local MLE. When \(w_k=1\), it fully adopts the leave-one-out pooled estimator. When the weight is clear from context, we write \(\widehat\beta_k^{\mathrm{shr}}\) for brevity. 
Because \(\Theta\) is convex, \(\widehat\beta_k^{\mathrm{shr}}(w_k)\in\Theta\)
for every \(w_k\in[0,1]\).

Why shrink toward the leave-one-out estimator $\widehat\beta_0^{(-k)}$ rather than the full pooled estimator $\widehat\beta_0$? The two targets are close when the network is large and no single store dominates the pooled sample, so the choice matters little for the estimate itself. 
However, it matters for the finite-sample analysis. Since the shrinkage estimator combines $\widehat\beta_k$ with a pooled target, its error mixes the variance of the local estimator with the bias of the pooled target, and separating these contributions requires the two estimators to be independent. The leave-one-out target ensures this. The local estimator \(\widehat\beta_k\) depends only on  \(Y_k\), while \(\widehat\beta_0^{(-k)}\) depends only on \( (Y_\ell)_{\ell \neq k} \), so under independence across stores they are independent. Had the target been the full pooled estimator $\widehat\beta_0$, which reuses store $k$'s data, the two would be dependent and their contributions would be entangled.

A practical question is how to set $w_k$. If \(w_k\) is too small, the estimator inherits the noise of the local MLE. If \(w_k\) is too large, it can over-pool and blur meaningful local structure. Thus, the choice of
\(w_k\) determines how the estimator balances local variance reduction
against heterogeneity-induced bias. Section~\ref{sec:theory} quantifies this tradeoff, deriving the optimal shrinkage weight as a function of local sample size, feature dimension, and cross-store heterogeneity.

\section{Finite-Sample Guarantees for Adaptive Shrinkage}
\label{sec:theory}

Because $\beta_k^\star$ uniquely minimizes $L_k^\star$, the excess risk $L_k^\star(\widehat\beta) - L_k^\star(\beta_k^\star)$ of an estimator $\widehat\beta$ is nonnegative and vanishes only when $\widehat\beta = \beta_k^\star$. An estimator is computed from the sample, so its excess risk is a random variable. Taking the expectation over the data gives the expected excess risk, which measures the typical cost of an estimation method.

We compare the shrinkage and local estimators through their expected excess risks. For a shrinkage weight $w_k \in [0,1]$, define the expected {excess risk differential} as
\begin{equation}
\mathcal{R}_{\mathrm{shr},k}(w_k)
\;:=\;
\mathbb{E}\!\left[
L_k^\star(\widehat\beta_k^{\mathrm{shr}}(w_k)) - L_k^\star(\beta_k^\star)\right]
 \;-\; \mathbb{E}\!\left[
L_k^\star(\widehat\beta_k)
- L_k^\star(\beta_k^\star)\right],
\label{eq:excess_loss}
\end{equation}
where the expectations are taken over the data $Y$ generated by the true store models. For a weight vector \( w=(w_1,\dots,w_K) \in [0,1]^K\), summing across stores gives $\mathcal{R}_{\mathrm{shr}}(w) = \sum_{k=1}^K \mathcal{R}_{\mathrm{shr},k}(w_k)$. 

The total differential $\mathcal{R}_\mathrm{shr}(w)$ measures how much shrinkage raises or lowers the excess risk relative to separate estimation. A negative value indicates that shrinkage improves on local estimation in expectation, aggregated across stores, whereas a positive value indicates that it performs worse. Our goal in this section is to choose a shrinkage weight for which we can establish a finite-sample upper bound on $\mathcal{R}_{\mathrm{shr}}$.

The challenge is that the expected excess risk does not decompose exactly into variance and bias. In much of the small-data pooling literature, performance is measured by a mean-squared error, which is quadratic in the parameter, so the effect of pooling admits an exact bias--variance decomposition \citep{li2023mostly}. The shrinkage weight balancing the two can then be read off that decomposition, and the risk bounded in the same terms. The MNL population loss $L_k^\star$ is not quadratic, so no such exact decomposition exists. 

We overcome this challenge by expanding $L_k^\star$ around $\beta_k^\star$, thus writing the expected excess risk as a quadratic leading term plus a nonlinear remainder. The quadratic term encodes the bias--variance tradeoff, and minimizing it yields an quadratic-optimal weight. This weight can be too aggressive: it can move the estimate far enough toward the pooled target that the remainder overtakes the quadratic. We therefore propose a damped weight that shrinks less aggressively. To derive a finite-sample guarantee, we establish that $L_k^\star$ is \emph{self-concordant-like} (SCL), a curvature condition under which the remainder is bounded explicitly at any sample size. The rest of this section develops these steps in turn, culminating in a finite-sample bound on $\mathcal{R}_{\mathrm{shr}}$ for the damped weight.

\subsection{Local Quadratic Approximation of Population Loss}
\label{sec:quadratic_appx}

Fix a store $k$. For any parameter $\beta$, expanding $L_k^\star$ about $\beta_k^\star$ writes the excess risk $L_k^\star(\beta) - L_k^\star(\beta_k^\star)$ as a quadratic form in the \emph{parameter error} $\beta - \beta_k^\star$, plus a remainder.  We discuss the two in turn.

The quadratic term is set by how sharply the loss $L_k^\star$ curves at $\beta_k^\star$. Specifically, let $H_k^\star$ denote the Hessian at $\beta_k^\star$ where
\[
H_k^\star 
:= 
\nabla^2 L_k^\star(\beta_k^\star) 
= 
n_k \Sigma_X^k
\]
and $\Sigma_X^k$, defined in Section~\ref{subsec:assumptions}, is the covariance matrix of the random feature vector $X$ under choice model $P(\beta_k^\star)$. Here, the second equality follows from Corollary~\ref{cor:app_population_hessian} in the Appendix. Since $\Sigma_X^k$ is the per-observation information matrix, $H_k^\star$ is the Fisher information carried by all $n_k$ store-$k$ observations. We measure the parameter error in the geometry induced by $H_k^\star$, writing
\[
\|v\|_k := \sqrt{v^\top H_k^\star v},
\qquad
\langle u,v\rangle_k := u^\top H_k^\star v,
\qquad
u,v\in\mathbb{R}^d,
\]
for the resulting local norm and inner product. Rather than treating all directions equally, as the Euclidean norm does, $\|\cdot\|_k$ weights each direction according to the curvature of the loss at $\beta_k^\star$. %

Controlling the remainder relies on a structural property of the MNL likelihood. As we show in the Appendix (Theorem~\ref{thm:app_mnl_scl}), under bounded features (Assumption~\ref{ass:bounded}) the population loss $L_k^\star$ is \emph{self-concordant-like}, meaning its third derivative is controlled by its curvature, so the Hessian varies slowly across parameters in the neighborhood of $\beta_k^\star$. This is what makes a finite-sample theory possible. A classical Taylor expansion would render the remainder of the quadratic approximation negligible only as $n_k \rightarrow \infty$. The SCL property instead bounds it at any sample size.

The next lemma makes the expansion precise, in terms of the normalized local
displacement
\[
\eta_k(\beta)
:=
\frac{2B\|\beta-\beta_k^\star\|_k}{\sqrt{n_k m}}, 
\]
where $B$ is the feature bound of Assumption~\ref{ass:bounded} and $m$ is the curvature lower bound of Assumption~\ref{ass:fisher}. 

\begin{lemma}
\label{lem:theory_local_expansion}
For each store \(k\) and any \(\beta\in\mathbb{R}^d\),
\begin{equation}
L_k^\star(\beta)-L_k^\star(\beta_k^\star)
=
\frac{1}{2}\,
\|\beta-\beta_k^\star\|_k^2
+
r_k(\beta),
\label{eq:theory_local_expansion}
\end{equation}
where the remainder satisfies
\begin{equation}
|r_k(\beta)|
\le
\frac{1}{2}\,
\|\beta-\beta_k^\star\|_k^2\,
\omega_{\mathrm{SCL}}\!\left(\eta_k(\beta)\right),
\label{eq:theory_local_remainder_bound}
\end{equation}
with
\[
\omega_{\mathrm{SCL}}(t)
:=
\begin{cases}
\dfrac{2e^t-(t^2+2t+2)}{t^2}, & t>0,\\[6pt]
0, & t=0.
\end{cases}
\]
In particular,
$
\omega_{\mathrm{SCL}}(t)=O(t)
\text{ as } t\downarrow0.
$
Thus, whenever \(\eta_k(\beta)\to0\), the remainder is negligible
relative to the leading quadratic term:
\[
|r_k(\beta)|
=
o\!\left(
\|\beta-\beta_k^\star\|_k^2
\right).
\]
\end{lemma}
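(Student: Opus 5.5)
The plan is to restrict $L_k^\star$ to the segment from $\beta_k^\star$ to $\beta$ and apply the self-concordant-like (SCL) bound of Theorem~\ref{thm:app_mnl_scl}, in the style of \citet{bach2010self}. Set $v:=\beta-\beta_k^\star$ and $g(s):=L_k^\star(\beta_k^\star+sv)$ for $s\in[0,1]$. Since $L_k^\star(\beta)=-n_k\beta^\top\mathbb{E}[X^{\beta_k^\star}]+n_k\log(1+\sum_j e^{\beta^\top X_j})$ is smooth and $\nabla L_k^\star(\beta_k^\star)=0$ (the population target is the minimizer, a critical point by Assumption~\ref{ass:parameter}), we have $g'(0)=0$ and $g''(0)=\|v\|_k^2$. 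Taylor's formula with integral remainder then gives
\[
L_k^\star(\beta)-L_k^\star(\beta_k^\star)=\int_0^1(1-s)\,g''(s)\,ds,
\]
so the remainder is $r_k(\beta)=\int_0^1(1-s)\bigl(g''(s)-g''(0)\bigr)ds$.

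Next I control $g''(s)-g''(0)$ through the SCL property. Differentiating the log-partition function, $g''(s)=n_k\Var_{\beta_s}(v^\top X)$ and $g'''(s)=n_k\,\mathbb{E}_{\beta_s}[(v^\top X-\mathbb{E}v^\top X)^3]$, where $\beta_s=\beta_k^\star+sv$. The centered variable is bounded by $2B\|v\|_2$ by Assumption~\ref{ass:bounded}, so $|g'''(s)|\le 2B\|v\|_2\,g''(s)$, which is the SCL inequality. Gronwall then gives $g''(0)e^{-2B\|v\|_2 s}\le g''(s)\le g''(0)e^{2B\|v\|_2 s}$. It remains to convert the Euclidean norm into the local norm. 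By Assumption~\ref{ass:fisher}, $\|v\|_k^2=n_kv^\top\Sigma_X^kv\ge n_km\|v\|_2^2$, hence $2B\|v\|_2\le 2B\|v\|_k/\sqrt{n_km}=\eta_k(\beta)=:t$.

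Writing $a=\|v\|_k^2$, this yields $|g''(s)-a|\le a(e^{ts}-1)$, using $1-e^{-ts}\le e^{ts}-1$. Integrating,
\[
|r_k(\beta)|\le a\int_0^1(1-s)(e^{ts}-1)\,ds=a\,\frac{e^t-1-t-t^2/2}{t^2}=\frac{a}{2}\,\frac{2e^t-(t^2+2t+2)}{t^2},
\]
which is exactly $\tfrac12\|v\|_k^2\,\omega_{\mathrm{SCL}}(t)$. The case $t=0$ means $v=0$, where $r_k=0$ trivially. For small $t$, the series expansion $2e^t-(t^2+2t+2)=t^3/3+O(t^4)$ gives $\omega_{\mathrm{SCL}}(t)=t/3+O(t^2)=O(t)$. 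The final $o(\|v\|_k^2)$ claim then follows immediately from the remainder bound.

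The main obstacle I expect is the norm conversion step. It requires the curvature lower bound at $\beta_k^\star$ specifically, which is what Assumption~\ref{ass:fisher} provides, and I will also need to check that the SCL constant from Theorem~\ref{thm:app_mnl_scl} is $2B\|v\|_2$ rather than a different multiple; the stated $\eta_k$ suggests $2B$. Everything else is a routine one-dimensional integral.
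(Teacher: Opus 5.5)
Your proof is correct and is essentially the paper's argument. The paper applies its general SCL quadratic-approximation lemma (Lemma~\ref{lem:app_quadratic_approx}) at $x=\beta_k^\star$, which bounds the third-order integral remainder $\int_0^1\frac12(1-t)^2\phi'''(t)\,dt$ via the SCL inequality and Hessian stability, and then replaces $2B\|v\|_2$ by $\eta_k(\beta)$ using $\Sigma_X^k\succeq mI_d$; this is Fubini-equivalent to your second-order remainder computation. Your choice to enlarge the exponent to $t=\eta_k(\beta)$ before integrating is a slight improvement, since it avoids the unstated monotonicity of $\omega_{\mathrm{SCL}}$ that the paper's final substitution into the argument of $\omega_{\mathrm{SCL}}$ implicitly relies on.
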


\vspace{2mm}
The expansion of excess risk holds for every \(\beta \in \mathbb{R}^d\). Evaluating
\eqref{eq:theory_local_expansion} at \(\beta = \widehat\beta_k^{\mathrm{shr}}(u)\)
and \(\beta = \widehat\beta_k\), subtracting, then taking the expectation gives the decomposition of the excess risk differential
\begin{align}
\mathcal{R}_{\mathrm{shr},k}(u) &\;\;=\;\;
\frac{1}{2} \;
\E \left[
\|\widehat\beta_k^{\mathrm{shr}}(u) - \beta_k^\star\|_k^2
-
\|\widehat\beta_k-\beta_k^\star\|_k^2
\right] \;\;+\;\;
\E \left[ r_k(\widehat\beta_k^{\mathrm{shr}}(u))
-
r_k(\widehat\beta_k) \right].
\label{eq:theory_risk_decomposition}
\end{align}
This decomposes $\mathcal{R}_{\mathrm{shr},k}(u)$ into two parts. The first is the leading term, half the expected difference in the squared parameter error. The second is the remainder.

It remains to express the leading term of the decomposition in a form that reveals the shrinkage tradeoff. Define the parameter error vectors
\begin{equation}
e_k := \widehat\beta_k - \beta_k^\star,
\qquad
g_k := \widehat\beta_0^{(-k)} - \beta_k^\star,
\label{eq:error_vectors}
\end{equation}
where \(e_k\) is the local parameter error and \(g_k\) is the leave-one-out pooled parameter error.  By definition of the shrinkage estimator \eqref{eq:sec4_shrinkage}, the shrinkage parameter error for shrinkage weight $u$ is
\begin{equation}
\widehat\beta_k^{\mathrm{shr}}(u)-\beta_k^\star
=
(1-u)e_k+u g_k.
\label{eq:shrinkage_gap}
\end{equation}
Hence, the leading term becomes
\[
\frac{1}{2}
\E \left[
\|(1-u)e_k+u g_k\|_k^2
-
\|e_k\|_k^2
\right].
\]
The following lemma collapses this term into a one-dimensional quadratic  in the shrinkage weight~$u$, whose coefficients are the benefit and the cost of shrinking.

\begin{lemma}
\label{lem:theory_quadratic_identity}
For each store \(k\) and any scalar shrinkage weight \(u\in\mathbb{R}\),
\begin{equation}
\frac{1}{2} \; \mathbb{E}\!\left[
\|(1-u)e_k+u g_k\|_k^2-\|e_k\|_k^2
\right]
=
\frac{1}{2} \left( \alpha_k u^2-2\zeta_k u \right),
\label{eq:theory_quadratic_identity}
\end{equation}
where
\begin{equation}
\alpha_k
:=
V_k^{\mathrm{loc}}
+
V_k^{\mathrm{pool}}
-
2C_k,
\qquad
\zeta_k
:=
V_k^{\mathrm{loc}}
-
C_k,
\label{eq:theory_alpha_zeta}
\end{equation}
with
\[
V_k^{\mathrm{loc}}
:=
\mathbb{E}\|e_k\|_k^2,
\qquad
V_k^{\mathrm{pool}}
:=
\mathbb{E}\|g_k\|_k^2,
\qquad
C_k
:=
\mathbb{E}\langle e_k,g_k\rangle_k .
\]
\end{lemma}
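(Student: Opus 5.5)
The identity is purely algebraic: it uses only bilinearity of the inner product $\langle\cdot,\cdot\rangle_k$ and linearity of expectation. No distributional property of $e_k$ or $g_k$ is needed, so in particular the independence guaranteed by the leave-one-out construction plays no role here. The plan is to expand the squared local norm of the shrinkage error pointwise, subtract $\|e_k\|_k^2$, take expectations, and then regroup the coefficients of $u$ and $u^2$.

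First, since $H_k^\star$ is symmetric positive definite, $\langle\cdot,\cdot\rangle_k$ is a symmetric bilinear form with $\|v\|_k^2=\langle v,v\rangle_k$. It is cleanest to write $(1-u)e_k+ug_k = e_k + u(g_k-e_k)$. For every realization and every $u\in\mathbb{R}$, this gives
\[
\|(1-u)e_k+ug_k\|_k^2-\|e_k\|_k^2 = u^2\|g_k-e_k\|_k^2 + 2u\langle e_k,g_k-e_k\rangle_k .
\]
Expanding further,
\[
\|g_k-e_k\|_k^2=\|e_k\|_k^2+\|g_k\|_k^2-2\langle e_k,g_k\rangle_k,
\qquad
\langle e_k,g_k-e_k\rangle_k=\langle e_k,g_k\rangle_k-\|e_k\|_k^2 .
\]

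Second, I take expectations term by term. This requires $V_k^{\mathrm{loc}}$, $V_k^{\mathrm{pool}}$ and $C_k$ to be finite. That holds because $\widehat\beta_k$, $\widehat\beta_0^{(-k)}$ and $\beta_k^\star$ all lie in the compact set $\Theta$ (Assumption~\ref{ass:parameter}), so $\|e_k\|_2,\|g_k\|_2\le D_\Theta$. Then $\|e_k\|_k^2\le n_k M D_\Theta^2$ by Assumption~\ref{ass:fisher}, the same bound holds for $g_k$, and $|\langle e_k,g_k\rangle_k|\le n_k M D_\Theta^2$ by Cauchy--Schwarz. Hence every term is a bounded random variable, and the expectation of the difference equals
\[
u^2\bigl(V_k^{\mathrm{loc}}+V_k^{\mathrm{pool}}-2C_k\bigr) - 2u\bigl(V_k^{\mathrm{loc}}-C_k\bigr) = \alpha_k u^2 - 2\zeta_k u .
\]
Multiplying by $\tfrac12$ gives \eqref{eq:theory_quadratic_identity}.

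The only step that needs any care is justifying the finiteness of these expectations, and boundedness of $\Theta$ together with the upper curvature bound $M$ settles it immediately. Nothing else presents an obstacle. I may also note that $\alpha_k=\mathbb{E}\|g_k-e_k\|_k^2\ge 0$, which is what will make the quadratic-optimal weight $\zeta_k/\alpha_k$ meaningful later.
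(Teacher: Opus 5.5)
Your proposal is correct and follows the paper's proof: both expand the squared local norm pointwise, subtract $\|e_k\|_k^2$, take expectations, and collect the coefficients of $u$ and $u^2$. The paper expands $(1-u)^2\|e_k\|_k^2+u^2\|g_k\|_k^2+2u(1-u)\langle e_k,g_k\rangle_k$ directly rather than writing $e_k+u(g_k-e_k)$, which makes no real difference, and your integrability check via compactness of $\Theta$ and $\Sigma_X^k\preceq MI_d$ is a valid step that the paper leaves implicit.
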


\vspace{3mm}
Near $u=0$, the leading term of the decomposition is dominated by $-\zeta_k u$, so a small move toward the pooled estimate changes the squared parameter error at rate $-\zeta_k$. When $\zeta_k > 0$, this rate is negative, so shrinking lowers the squared error. Intuitively, the local estimator is noisy, and even a small move toward the more stable pooled target reduces this variance. Hence, coefficient $\zeta_k$ can be interpreted as the marginal gain from that variance reduction.

The marginal cost of shrinkage is $\alpha_k u$, which can exceed this marginal gain for large $u$. This cost arises because shrinking displaces the estimator from the true parameter $\beta_k^\star$ due to a heterogeneity bias. The magnitude of $\alpha_k$ is
\[
\alpha_k
=
V_k^{\mathrm{loc}}
+
V_k^{\mathrm{pool}}
-
2C_k
=
\mathbb{E}\|g_k-e_k\|_k^2
=
\mathbb{E}\|
\widehat\beta_0^{(-k)}-\widehat\beta_k
\|_k^2,
\]
the expected squared distance between the local and pooled estimators, so the farther apart they are, the sooner the heterogeneity bias overtakes the  variance reduction and shrinkage stops paying off. The ratio \(\zeta_k/\alpha_k\), which we call the gain-to-cost ratio, weighs the variance reduction against the heterogeneity bias.

\subsection{The Quadratic-Optimal Weight}\label{sec:quadratic_benchmark}

From Lemma~\ref{lem:theory_quadratic_identity} we have ${\mathcal{R}}_{\mathrm{shr},k}(u) = Q_k(u) + \E[r_k(\widehat\beta_k^{\mathrm{shr}}(u)) - r_k(\widehat \beta_k)]$, where
\[
Q_k(u)
:=
\frac{1}{2} \left( \alpha_k u^2
-
2\zeta_k u \right),
\qquad
u\in[0,1]
\]
is a quadratic function in the shrinkage weight. Setting the remainder aside for the moment, we ask which weight minimizes the leading quadratic term. 

\begin{lemma}
\label{lem:theory_quadratic_benchmark}
Suppose \(\alpha_k>0\). Then the exact minimizer of \(Q_k(u)\) over
\(u\in[0,1]\) is
\begin{equation}
w_{k,\mathrm{quad}}^{\mathrm{opt}}
:=
\Pi_{[0,1]}\!\left(\frac{\zeta_k}{\alpha_k}\right),
\label{eq:theory_quadratic_benchmark}
\end{equation}
where \(\Pi_{[0,1]}\) denotes projection onto \([0,1]\).
\end{lemma}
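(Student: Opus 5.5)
Since $Q_k$ is a scalar quadratic in $u$ with leading coefficient $\alpha_k/2>0$, the plan is the standard argument for minimizing a strictly convex one-dimensional function over a compact interval. First, note that $Q_k$ is differentiable with
\[
Q_k'(u)=\alpha_k u-\zeta_k,
\qquad
Q_k''(u)=\alpha_k>0 .
\]
So $Q_k$ is strictly convex on $\mathbb{R}$. Its unconstrained minimizer is the unique stationary point $u^\dagger:=\zeta_k/\alpha_k$, and $Q_k$ is strictly decreasing on $(-\infty,u^\dagger]$ and strictly increasing on $[u^\dagger,\infty)$. Equivalently, one can complete the square:
\[
Q_k(u)=\tfrac{\alpha_k}{2}\bigl(u-u^\dagger\bigr)^2-\tfrac{\zeta_k^2}{2\alpha_k}.
\]
This shows that minimizing $Q_k$ over any set is the same as minimizing the distance $|u-u^\dagger|$ over that set.

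By the completed-square form, the minimizer of $Q_k$ over the closed convex set $[0,1]$ is the point of $[0,1]$ nearest to $u^\dagger$, namely $\Pi_{[0,1]}(u^\dagger)$. For readers who prefer cases, I would verify this directly.
\begin{itemize}
\item If $u^\dagger\in[0,1]$, the global minimizer is feasible.
\item If $u^\dagger<0$, then $Q_k$ is increasing on $[0,1]$, so the minimizer is $0$.
\item If $u^\dagger>1$, then $Q_k$ is decreasing on $[0,1]$, so the minimizer is $1$.
\end{itemize}
In each case the result coincides with $\Pi_{[0,1]}(\zeta_k/\alpha_k)$. Uniqueness follows from strict convexity.

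There is no real obstacle here. The only points to flag are the role of the hypothesis and the well-definedness of the coefficients. The assumption $\alpha_k>0$ is exactly what guarantees strict convexity and makes the ratio $\zeta_k/\alpha_k$ well defined. The coefficients $\alpha_k$ and $\zeta_k$ are finite constants by Lemma~\ref{lem:theory_quadratic_identity}, since $\Theta$ is compact and hence all the error vectors are bounded.
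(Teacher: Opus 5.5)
Your proposal is correct and follows the paper's own argument: complete the square to write $Q_k(u)=\tfrac{\alpha_k}{2}(u-\zeta_k/\alpha_k)^2-\tfrac{\zeta_k^2}{2\alpha_k}$, then read off the constrained minimizer as the projection of $\zeta_k/\alpha_k$ onto $[0,1]$. Your explicit case check and the uniqueness remark via strict convexity are harmless additions that the paper leaves implicit.
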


The quadratic-optimal weight is simply the gain-to-cost ratio projected onto $[0,1]$. If \(0<\zeta_k<\alpha_k\), it shrinks more when the variance reduction is large relative to the heterogeneity bias. If \(\zeta_k\le 0\), shrinking does not improve $Q_k(\cdot)$ even locally, so it keeps the local estimator. If \(\zeta_k\ge\alpha_k\), it clips to $u = 1$ and adopts the leave-one-out pooled estimator outright.

The remainder we set aside is what makes this quadratic-optimal weight unreliable. By Lemma~\ref{lem:theory_local_expansion}, the remainder grows with the shrinkage estimator's displacement from \(\beta_k^\star\), which by \eqref{eq:shrinkage_gap} is equal to \((1-u)e_k+u g_k\). To see what can make that displacement large, split the pooled error into
\begin{align} \label{eqn:g_composition}
g_k
=
\underbrace{\bigl(\widehat\beta_0^{(-k)}-\beta_0^{(-k),\star}\bigr)}_{=:\varepsilon_k^{\mathrm{pool}}}
+
\underbrace{\bigl(\beta_0^{(-k),\star}-\beta_k^\star\bigr)}_{=:\delta_k},
\end{align}
where \(\varepsilon_k^{\mathrm{pool}}\) is the estimation error of the leave-one-out pooled estimator around its own population target, and \(\delta_k\) is the structural mismatch between that target and store \(k\)'s true parameter. Substituting, the displacement is
\begin{align} \label{eqn:displacement}
\widehat{\beta}_k^{\mathrm{shr}}(u) - \beta_k^\star \; = \; (1-u)e_k+u g_k
\; = \;
\underbrace{(1-u)e_k+u\varepsilon_k^{\mathrm{pool}}}_{\text{estimation error}}
+
\underbrace{u\delta_k}_{\text{deterministic shift}}.
\end{align}
The two estimation-error terms are controlled by the concentration bounds of Appendix~\ref{app:mle}, and both shrink as their samples grow. The shift \(u\delta_k\) does not: $\delta_k$ is a fixed population quantity, so only the shrinkage weight $u$ can control it. A weight near $1$ can carry the estimate outside the neighborhood of \(\beta_k^\star\) where the quadratic approximation is accurate, and there the remainder can offset the reduction in $Q_k$ that the quadratic-optimal weight was chosen to achieve.

\subsection{A Damped Shrinkage Weight}

The quadratic-optimal weight is unreliable precisely where it clips to $u=1$, since the deterministic shift is then at full size and, unlike the estimation errors, does not shrink as samples grow. We therefore replace the clipping with a smooth map that never reaches full pooling, leaving the weight essentially unchanged where it is small. Let
\[
\tau_k
:=
\left(\frac{\zeta_k}{\alpha_k}\right)_+,
\]
denote the positive part of the gain-to-cost ratio, and set
\begin{equation}
w_k^\star
:=
\frac{\tau_k}{1+\tau_k}
=
\begin{cases}
0, & \zeta_k \le 0,\\[6pt]
\dfrac{\zeta_k}{\alpha_k+\zeta_k}, & \zeta_k > 0.
\end{cases}
\label{eq:theory_analytic_weight}
\end{equation} 
The damped weight $w_k^\star$ has three properties. It is monotone in \(\tau_k\), so stores with a larger gain-to-cost ratio  borrow more. For small
\(\tau_k\), it agrees with the quadratic-optimal weight to first order,
\[
w_k^\star
=
\tau_k+O(\tau_k^2)
\qquad\text{as } \tau_k\downarrow0,
\]
so nothing changes in the regime where the quadratic approximation is accurate. And it maps into $[0,1)$, so the deterministic shift $w_k^\star\delta_k$ never reaches its full size. The particular map $\tau\mapsto\tau/(1+\tau)$ is a design choice; any map into $[0,1)$ that is first-order equivalent to the identity near zero would serve the same purpose.

The next lemma confirms that this weight keeps the shrinkage benefit while limiting how far the estimate moves toward the pooled target.

\begin{lemma}
\label{lem:theory_weight_properties}
Let \(\zeta_k^+ := \max\{\zeta_k,0\}\), and define $V_{\mathrm{loc}} := \E\|e_k\|_k^2$. Then \(w_k^\star\in[0,1]\), and the following properties hold.
\begin{enumerate}[label=(\roman*),leftmargin=*]
    \item \textbf{Leading term bound:}
    \begin{equation}
    Q_k(w_k^\star) 
    \;\;\le\;\;
    -
    \frac{1}{2}\frac{(\zeta_k^+)^2}{\alpha_k+\zeta_k^+}.
    \label{eq:theory_quadratic_certificate}
    \end{equation}

    \item \textbf{Systematic bias bound:}
    \begin{equation}
     \| \; w_k^\star \, \E (g_k) \; \|_k
    \;\;\le\;\;
    \sqrt{V_k^{\mathrm{loc}}}.
    \label{eq:theory_mean_displacement_control}
    \end{equation}
\end{enumerate}
\end{lemma}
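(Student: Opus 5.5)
The plan is to handle the two cases $\zeta_k\le 0$ and $\zeta_k>0$ separately. In each case I use the closed form \eqref{eq:theory_analytic_weight} of $w_k^\star$ together with the representation $\alpha_k=\mathbb{E}\|g_k-e_k\|_k^2\ge 0$ recorded after Lemma~\ref{lem:theory_quadratic_identity}.

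\emph{Range of the weight.} Membership $w_k^\star\in[0,1]$ is immediate because $\tau_k\ge 0$ and $\tau\mapsto\tau/(1+\tau)$ maps $[0,\infty)$ into $[0,1)$. In the degenerate case $\alpha_k=0$ with $\zeta_k>0$, I read $\tau_k=+\infty$ and $w_k^\star=\zeta_k/(\alpha_k+\zeta_k)=1$.

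\emph{The case $\zeta_k\le 0$.} Here $w_k^\star=0$, so $Q_k(0)=0$ and $w_k^\star\,\mathbb{E} g_k=0$. Both (i) and (ii) then hold trivially, since $\zeta_k^+=0$ makes the right side of \eqref{eq:theory_quadratic_certificate} equal to $0$, with $0/0$ read as $0$.

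\emph{Part (i) when $\zeta_k>0$.} I substitute $w=\zeta_k/(\alpha_k+\zeta_k)$ into $Q_k$:
\[
Q_k(w_k^\star)=\frac12\left(\frac{\alpha_k\zeta_k^2}{(\alpha_k+\zeta_k)^2}-\frac{2\zeta_k^2}{\alpha_k+\zeta_k}\right)
=-\frac12\,\frac{\zeta_k^2(\alpha_k+2\zeta_k)}{(\alpha_k+\zeta_k)^2}.
\]
Since $\alpha_k+2\zeta_k\ge\alpha_k+\zeta_k>0$, this is at most $-\tfrac12\zeta_k^2/(\alpha_k+\zeta_k)$, which is (i). This step is pure algebra.

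\emph{Part (ii) when $\zeta_k>0$.} This is the substantive step. The key input is the independence of $e_k$ (a function of $Y_k$) and $g_k$ (a function of $(Y_\ell)_{\ell\neq k}$), guaranteed by the leave-one-out construction and Assumption~\ref{ass:indep}. Write $a:=\mathbb{E} e_k$ and $\mu:=\mathbb{E} g_k$. Independence gives $C_k=\langle a,\mu\rangle_k=:c$.

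Next set $s:=\sqrt{V_k^{\mathrm{loc}}}$, $p:=\sqrt{V_k^{\mathrm{pool}}}$ and $r:=\|\mu\|_k$. Jensen's inequality gives $\|a\|_k\le s$ and $r\le p$, and Cauchy--Schwarz then gives $|c|\le sr$.

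The target inequality $w_k^\star r\le s$ is equivalent to
\[
(s^2-c)\,r\;\le\; s\,(2s^2+p^2-3c).
\]
Using $p\ge r$, it suffices to show
\[
\Phi:=2s^3+sr^2-s^2r+c(r-3s)\ge 0 .
\]
I split according to the sign of $r-3s$:
\begin{itemize}
\item If $r\ge 3s$, use $c\ge -sr$. This gives $\Phi\ge 2s^3+2s^2r\ge 0$.
\item If $r<3s$, use $c\le sr$. This gives $\Phi\ge 2s^3+2sr^2-4s^2r=2s(s-r)^2\ge 0$.
\end{itemize}
Either way $\Phi\ge 0$, which proves (ii).

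The main obstacle is this last case analysis. Without exploiting $C_k=\langle \mathbb{E} e_k,\mathbb{E} g_k\rangle_k$, the cross term could not be controlled, so the leave-one-out independence is essential here. Choosing the right Cauchy--Schwarz direction for $c$ in each regime is what makes the bound close.
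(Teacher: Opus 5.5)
Your proof is correct and is essentially the paper's argument in Lemma~\ref{lem:app_weight_certificate}. Part (i) uses the same substitution. Part (ii) uses the same ingredients: the identity $C_k=\langle \mathbb{E}e_k,\mathbb{E}g_k\rangle_k$ from leave-one-out independence, Cauchy--Schwarz, discarding the centered pooled variance via $V_k^{\mathrm{pool}}\ge\|\mathbb{E}g_k\|_k^2$, and the same case split at $\|\mathbb{E}g_k\|_k=3\sqrt{V_k^{\mathrm{loc}}}$. The one difference is cosmetic. The paper normalizes to $x=A/\sqrt V$, $y=D/V$ and separately checks $2V+A^2-3D>0$ before dropping the variance term. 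Your homogeneous form instead multiplies through by the already-positive denominator $\alpha_k+\zeta_k$, so it avoids that extra positivity check.
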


The two properties capture the opposing requirements the weight must meet: retain the reduction in the excess risk from shrinking, and limit the systematic error that borrowing introduces.

The first property shows that the damped weight reduces the leading term of $\mathcal{R}_{\mathrm{shr},k}$. In the interior case \(0<\zeta_k<\alpha_k\), the quadratic-optimal weight attains the minimum of $Q_k$, namely \(-\tfrac12\zeta_k^2/\alpha_k\). The damped weight does not reach it, but property~(i) certifies
\[
Q_k(w_k^\star)
\le
-\frac12 \frac{\zeta_k^2}{\alpha_k+\zeta_k}
=
\frac{1}{1+\tau_k}\cdot \left( -\frac12 \frac{\zeta_k^2}{\alpha_k} \right),
\]
so the leading term is at most a fraction \(1/(1+\tau_k)\) of the optimal value. Damping therefore costs little when \(\tau_k\) is small and more when \(\tau_k\) is large.

The second property controls the remainder. By Lemma~\ref{lem:theory_local_expansion}, the magnitude $|r_k(\beta)|$ is at most $\tfrac12 \|\beta - \beta_k^\star\|_k^2\omega_{\mathrm{SCL}}(\eta_k(\beta))$, and $\eta_k(\beta)$ is itself proportional to the parameter displacement $\|\beta - \beta_k^\star\|_k$. At $\beta = \widehat\beta_{k}^\mathrm{shr} (w_k^\star)$, equation \eqref{eqn:displacement} indicates that the parameter displacement is bounded by  a local estimation error term, $(1-w_k^\star) \|e_k\|_k$, plus a systematic bias term, $\|w_k^\star \E (g_k)\|_k$. The local estimation error is controlled at scale $\sqrt{V_k^{\mathrm{loc}}}$ by the concentration bounds of Appendix~\ref{app:mle}. The systematic bias is deterministic, so it has to be bounded separately, which is what property~(ii) does. Since $\sqrt{V_k^{\mathrm{loc}}}$ is the expected size of the local MLE error $\|e_k\|_k$, the bound says that shrinkage at $w_k^\star$ never introduces a systematic bias larger than the noise it is meant to remove. With both displacement terms at the same scale, Lemma~\ref{lem:app_remainder_control} turns this into the remainder bound that is used to bound $\mathcal{R}_{\mathrm{shr},k}(w_k^\star)$ in Theorem~\ref{thm:theory_main} below.

\subsection{Main Finite-Sample Bound}

We can now state the finite-sample guarantee for the damped weight. It holds in a moderate heterogeneity range $\Delta \in [0,\bar\Delta]$ where the heterogeneity bias introduced by shrinkage stays controlled while the variance reduction remains large enough to yield a net benefit. The threshold $\bar\Delta$ is given explicitly in Appendix~\ref{app:barDelta}. 

\begin{theorem}
\label{thm:theory_main} Let
\(w^\star=(w_1^\star,\dots,w_K^\star)\) be the vector of damped weights defined in \eqref{eq:theory_analytic_weight}. 
Suppose Assumptions~\ref{ass:bounded}--\ref{ass:sample} hold. Then there
exist constants $c_1,c_2,c_3,c_4>0$ and a threshold
\(\bar\Delta>0\), depending only on
\((B,m,M,C_{\max},D_\Theta,\rho_\Theta)\), such that 
whenever \(\Delta\in[0,\bar\Delta]\), every store $k$ satisfies
\begin{equation}
\mathcal{R}_{\mathrm{shr},k}(w_k^\star) 
\le
-
\frac{c_1 d^2}{c_2 d+c_3 n_k\Delta^2}
+
c_4\,\frac{d^{3/2}}{\sqrt{n_k}},
\label{eq:theory_per_store_bound}
\end{equation}
and summing over stores gives
\begin{equation}
\mathcal{R}_{\mathrm{shr}}(w^\star)
\le
\sum_{k=1}^K
n_k \cdot \left[ - \left(\frac{d}{n_k} \right) \cdot \frac{c_1}{c_2 +c_3 (n_k/d) \Delta^2} + c_4 \left(\frac{d}{n_k}\right)^{3/2} \right].
\label{eq:theory_total_bound}
\end{equation}
\end{theorem}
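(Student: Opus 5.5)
The plan is to start from the decomposition \eqref{eq:theory_risk_decomposition} at $u=w_k^\star$, which splits $\mathcal{R}_{\mathrm{shr},k}(w_k^\star)$ into $Q_k(w_k^\star)$ and the remainder difference $\E[r_k(\widehat\beta_k^{\mathrm{shr}}(w_k^\star))-r_k(\widehat\beta_k)]$. I will bound the two pieces separately. The quadratic piece should produce the negative term $-c_1d^2/(c_2d+c_3n_k\Delta^2)$, and the remainder should produce the positive term $c_4 d^{3/2}/\sqrt{n_k}$. Summing over $k$ and factoring out $n_k$ then gives \eqref{eq:theory_total_bound} by pure algebra.

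\textbf{Leading term.} By Lemma~\ref{lem:theory_weight_properties}(i), $Q_k(w_k^\star)\le -\tfrac12(\zeta_k^+)^2/(\alpha_k+\zeta_k^+)$. The map $\zeta\mapsto \zeta^2/(\alpha+\zeta)$ is increasing in $\zeta$ and decreasing in $\alpha$, so it suffices to prove a lower bound $\zeta_k\ge c\,d$ and an upper bound $\alpha_k\le C(d+n_k\Delta^2)$.

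For $\zeta_k=V_k^{\mathrm{loc}}-C_k$, independence of $\widehat\beta_k$ and $\widehat\beta_0^{(-k)}$ (Assumption~\ref{ass:indep}) gives $C_k=\langle \E e_k,\E g_k\rangle_k$. The local MLE bias $\E e_k$ is of order $d/\sqrt{n_k}$ in $\|\cdot\|_k$, coming from a second-order SCL expansion of the score equation. Combined with $\|\E g_k\|_k\lesssim \sqrt{n_k}\Delta+\sqrt{d n_k/\sum_{\ell\ne k}n_\ell}$, this makes $C_k$ small relative to $d$ once $\Delta\le\bar\Delta$ and $C_0$ is large. For $V_k^{\mathrm{loc}}$, I need a lower bound of order $d$. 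I would get it by writing $e_k=(H_k^\star)^{-1}\nabla L_k(\beta_k^\star)+\text{(higher order)}$: the linear term has $\E\|\cdot\|_k^2=\operatorname{tr}(I_d)=d$, and the appendix concentration bounds control the correction by $O(d^{3/2}/\sqrt{n_k})$, which is $\le d/2$ under Assumption~\ref{ass:sample}.

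For $\alpha_k=\E\|g_k-e_k\|_k^2\le 2V_k^{\mathrm{loc}}+2V_k^{\mathrm{pool}}$, I use the bounds $V_k^{\mathrm{loc}}\lesssim d$ and $V_k^{\mathrm{pool}}\lesssim \|\delta_k\|_k^2+\E\|\varepsilon_k^{\mathrm{pool}}\|_k^2\lesssim n_kM\|\delta_k\|_2^2+d$. It remains to show $\|\delta_k\|_2\lesssim\Delta$. I would do this by comparing first-order conditions: $\beta_0^{(-k),\star}$ solves $\sum_{\ell\ne k}\nabla L_\ell^\star=0$, each $\beta_\ell^\star$ lies within $\Delta$ of $\beta_c^\star$, and uniform strong convexity on $\Theta$ (constant $me^{-2BD_\Theta}$) then gives $\|\beta_0^{(-k),\star}-\beta_c^\star\|_2\lesssim\Delta$. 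The triangle inequality finishes the bound.

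\textbf{Remainder.} By Lemma~\ref{lem:theory_local_expansion}, $|r_k(\beta)|\le\tfrac12\|\beta-\beta_k^\star\|_k^2\,\omega_{\mathrm{SCL}}(\eta_k(\beta))$, with $\omega_{\mathrm{SCL}}(t)\lesssim t$ for $t$ bounded. Boundedness of $t$ holds because $\eta_k\le 2B\sqrt{M/m}\,D_\Theta$ on $\Theta$, so $|r_k(\beta)|\lesssim \|\beta-\beta_k^\star\|_k^3/\sqrt{n_k}$. For the local estimator, $\E\|e_k\|_k^3\lesssim d^{3/2}$ follows from the sub-Gaussian-type tail bounds of Appendix~\ref{app:mle}. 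For the shrinkage estimator, I split the displacement as in \eqref{eqn:displacement} into centered noise and mean $w_k^\star\E g_k$. Lemma~\ref{lem:theory_weight_properties}(ii) bounds the mean part by $\sqrt{V_k^{\mathrm{loc}}}\lesssim\sqrt d$, and the centered part is again of order $\sqrt d$ in third moment, which is the content of Lemma~\ref{lem:app_remainder_control}. Together these give the $c_4d^{3/2}/\sqrt{n_k}$ term.

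\textbf{Main obstacle.} I expect the hardest step to be the lower bound $\zeta_k\gtrsim d$. It needs a two-sided nonasymptotic control of the local MLE error, including its bias and the effect of the constraint $\Theta$ (handled by the margin $\rho_\Theta$ plus a high-probability event of interior solution), together with the requirement that the cross term $C_k$ stay small. This is also where the threshold $\bar\Delta$ and the constant $C_0$ must be tuned. If the direct approach fails, I would fall back to bounding $|C_k|\le\|\E e_k\|_k\|\E g_k\|_k$ and absorbing any excess into the $d^{3/2}/\sqrt{n_k}$ term.
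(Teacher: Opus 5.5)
Your proposal is correct and follows essentially the same route as the paper: the decomposition \eqref{eq:theory_risk_decomposition} at $u=w_k^\star$; Lemma~\ref{lem:theory_weight_properties}(i) for the quadratic part; the lower bound $\zeta_k\ge c\,d$ from $V_k^{\mathrm{loc}}\gtrsim d$, $|C_k|\le\|\E e_k\|_k\|\E g_k\|_k$ and the $d/\sqrt{n_k}$ MLE bias (Lemmas~\ref{lem:app_Vind}, \ref{lem:app_mle_bias}, \ref{lem:app_cross_control}); an $O(d+n_k\Delta^2)$ bound on the denominator via the pooled moment bounds; and Lemma~\ref{lem:app_remainder_control} for the $d^{3/2}/\sqrt{n_k}$ remainder. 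Your two local deviations are fine: an $L^2$ perturbation argument for $V_k^{\mathrm{loc}}\gtrsim d$, and a strong-convexity comparison at $\beta_c^\star$ (instead of the paper's barycenter expansion) for $\|\delta_k\|_2\lesssim\Delta$; in the latter, however, use the constrained optimality inequality for $\beta_0^{(-k),\star}$ (or Lemma~\ref{lem:app_constrained_basic} with reference point $\beta_c^\star$, which lies in $\Theta$ once $\Delta<\rho_\Theta$) rather than $\sum_{\ell\neq k}\nabla L_\ell^\star=0$, because interiority of the pooled target is not yet known at that point.
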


\vspace{3mm}

A negative value of  $\mathcal{R}_{\mathrm{shr}}(w^\star)$ certifies that shrinkage lowers the expected excess risk relative to local estimation.  The aggregate bound in \eqref{eq:theory_total_bound} is governed by two
quantities: the normalized local sample size \(\rho_k := n_k/d\) and the cross-store heterogeneity level \(\Delta\). By
Assumption~\ref{ass:sample}, we have
\(
\rho_k\ge C_0 \ge 1.
\) Rewriting the bracketed term in \eqref{eq:theory_total_bound} gives
\[
-\frac{1}{\rho_k}\,
\frac{c_1}{c_2+c_3\rho_k \Delta^2}
+
c_4 \left(\frac{1}{\rho_k}\right)^{3/2}.
\]
This makes the comparative statics transparent. When
\(\Delta^2 \ll 1/\rho_k\), the denominator of the first term is essentially
constant, so the first term is of order \(-1/\rho_k\),
while the positive nonlinear remainder is of smaller order
\(1/\rho_k^{3/2}\). Thus, shrinkage is most attractive when local data are
thin and store heterogeneity is low. By contrast, when \(\Delta^2\) grows
relative to \(1/\rho_k\), the negative term weakens like
\(-1/(\rho_k^{2}\Delta^2)\), reflecting the rising cost of a shrinkage estimator that is further displaced from \(\beta_k^\star\).

The same argument also yields the order of magnitude of the damped weight.

\begin{corollary}
\label{cor:theory_weight_scale}
Let $\Delta_k := \|\delta_k\|_2$ denote the size of the structural mismatch $\delta_k := \beta_0^{(-k),\star}-\beta_k^\star$ between the pooled target and the store $k$ parameter. Under the assumptions of Theorem~\ref{thm:theory_main}, there exist constants \(0<c_-\le c_+<\infty\), depending only on
\((B,m,M,C_{\max},D_\Theta,\rho_\Theta,\bar\Delta)\), such that whenever \(\Delta\in[0,\bar\Delta]\), every store \(k\) satisfies
\begin{equation}
c_-\,\frac{d}{d+n_k\Delta_k^2}
\;\le\;
w_k^\star
\;\le\;
c_+\,\frac{d}{d+n_k\Delta_k^2}.
\label{eq:theory_weight_scale}
\end{equation}
\end{corollary}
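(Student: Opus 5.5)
The plan is to bound $\zeta_k$ and $\alpha_k$ separately, up to constants, and then feed these bounds into the monotone map $\tau\mapsto\tau/(1+\tau)$. Since $w_k^\star=\zeta_k^+/(\alpha_k+\zeta_k^+)$, it suffices to show three things. First, $\zeta_k\asymp d$, and in particular $\zeta_k>0$. Second, $\alpha_k\asymp d+n_k\Delta_k^2$. Third, $w_k^\star\asymp \zeta_k/(\alpha_k+\zeta_k)$ then follows, because $\alpha_k+\zeta_k\asymp d+n_k\Delta_k^2$ as well.

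For the local variance term, the concentration bounds of Appendix~\ref{app:mle} give, under Assumption~\ref{ass:sample} with $C_0$ large, that $V_k^{\mathrm{loc}}=\mathbb{E}\|e_k\|_k^2$ is within constant factors of $\operatorname{tr}(I_d)=d$. The upper bound comes from sub-Gaussian control of the score in the $\|\cdot\|_k$ norm, together with SCL Hessian stability, which turns the score bound into a bound on $e_k$. The lower bound comes from the first-order expansion $e_k\approx -(H_k^\star)^{-1}\nabla L_k(\beta_k^\star)$. The score has covariance $H_k^\star$, so its leading term has squared local norm with mean $d$. The remainder is $O(d^{3/2}/\sqrt{n_k})$, which is at most $d/2$ once $n_k\ge C_0 d$.

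For the cross term, independence of $e_k$ and $g_k$ (Assumption~\ref{ass:indep} and the leave-one-out construction) gives $C_k=\langle \mathbb{E}e_k,\mathbb{E}g_k\rangle_k$. The local MLE bias satisfies $\|\mathbb{E}e_k\|_k=O(d/\sqrt{n_k})$ by a second-order expansion. I also need to control $\mathbb{E}g_k=\mathbb{E}\varepsilon_k^{\mathrm{pool}}+\delta_k$. The pooled estimation error is small in $\|\cdot\|_k$: the pooled Hessian is of order $\sum_{\ell\ne k}n_\ell\gtrsim (K-1)n_k/C_{\max}$, so $\mathbb{E}\|\varepsilon_k^{\mathrm{pool}}\|_k^2\lesssim d$, and its mean is even smaller. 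The deterministic part satisfies $\|\delta_k\|_k\asymp\sqrt{n_k}\,\Delta_k$ by Assumption~\ref{ass:fisher}. Using Cauchy--Schwarz, $|C_k|\lesssim (d/\sqrt{n_k})(\sqrt d+\sqrt{n_k}\Delta_k)$. This is at most $\tfrac12 V_k^{\mathrm{loc}}$ provided $d/\sqrt{n_k}\le c\sqrt d$, which holds by $n_k\ge C_0 d$, and provided $d\Delta_k\le c\,d$, which holds since $\Delta_k\le 2\Delta\le 2\bar\Delta$ with $\bar\Delta$ small. Note that $\beta_0^{(-k),\star}$ lies within $O(\Delta)$ of $\beta_c^\star$, by strong convexity of the pooled population loss on $\Theta$. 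Hence $\zeta_k=V_k^{\mathrm{loc}}-C_k\asymp d$.

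For the cost term, I expand $\alpha_k=\mathbb{E}\|g_k-e_k\|_k^2=\|\delta_k+\mathbb{E}\varepsilon_k^{\mathrm{pool}}-\mathbb{E}e_k\|_k^2+\operatorname{tr}$-variance terms. The variance part is $\operatorname{Var}(e_k)+\operatorname{Var}(\varepsilon^{\mathrm{pool}}_k)$ in local norm, which is $\asymp d$ by the same bounds. The lower bound uses the local variance, which is itself $\asymp d$ because the bias of $e_k$ is lower order. The mean part is $\|\delta_k\|_k^2$ up to errors $O(d^2/n_k+d)$, and $\|\delta_k\|_k^2\asymp n_k\Delta_k^2$. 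Combining these gives $c\,(d+n_k\Delta_k^2)\le\alpha_k\le C\,(d+n_k\Delta_k^2)$. The lower bound needs care: the mean cross term $2\langle\delta_k,\mathbb{E}\varepsilon^{\mathrm{pool}}_k-\mathbb{E}e_k\rangle_k$ must be absorbed. I would handle it with $2ab\le \tfrac12 a^2+2b^2$, which keeps $\tfrac12\|\delta_k\|_k^2$ and adds an $O(d)$ error that is dominated by the variance term.

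The main obstacle is the lower bound on $\zeta_k$, which amounts to showing that $V_k^{\mathrm{loc}}\gtrsim d$ and that the cross term $C_k$ cannot cancel it. Both steps rely on sharp first-order (Bahadur-type) expansions of the local MLE with SCL-controlled remainders, and on the compactness restriction not biasing the estimator. I would first check whether the appendix concentration results already provide $\mathbb{E}\|e_k\|_k^2\ge c\,d$ and $\|\mathbb{E}e_k\|_k\lesssim d/\sqrt{n_k}$. If they do not, I would derive both from the SCL expansion (Lemma~\ref{lem:theory_local_expansion} applied to the empirical loss) on a high-probability event where the local MLE is interior (using $\rho_\Theta$), and bound the complement using $D_\Theta$.
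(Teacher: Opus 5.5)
Your proposal is correct and follows essentially the paper's argument: $\zeta_k\asymp d$ from Lemma~\ref{lem:app_Vind} and Lemma~\ref{lem:app_cross_control} (the paper defines $\bar\Delta$ precisely so that $|C_k|\le\tfrac12 c_{\mathrm{loc}}d$), and then $\alpha_k+\zeta_k\asymp d+n_k\Delta_k^2$ via $g_k=\varepsilon_k^{\mathrm{pool}}+\delta_k$, $\mathbb E\|\varepsilon_k^{\mathrm{pool}}\|_k^2\lesssim d$, and $m n_k\Delta_k^2\le\|\delta_k\|_k^2\le M n_k\Delta_k^2$. The one difference is in the lower bound on the denominator. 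The paper works with $\alpha_k+\zeta_k=2V_k^{\mathrm{loc}}+V_k^{\mathrm{pool}}-3C_k$, uses the pathwise bound $V_k^{\mathrm{pool}}\ge\tfrac12\|\delta_k\|_k^2-\mathbb E\|\varepsilon_k^{\mathrm{pool}}\|_k^2$, and splits cases on $n_k\Delta_k^2$ versus $d$, so that $\zeta_k\gtrsim d$ supplies the $d$ part. Your claim that the $O(d)$ error in your mean--variance expansion of $\alpha_k$ is ``dominated by the variance term'' instead implicitly needs a bias bound for the pooled MLE, which the paper never proves (it controls only the second moment). This is harmless, because you can absorb that error the paper's way.
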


\vspace{2mm}
Corollary~\ref{cor:theory_weight_scale} shows that the damped weight takes the intuitive shrinkage form
\[
w_k^\star
\asymp
\frac{(d/n_k)}{(d/n_k) + \Delta_k^2},
\]
a ratio of variance to variance plus squared bias. The numerator $(d/n_k)$ is the local estimation variance, since by Theorem~\ref{thm:app_mle_concentration} the local MLE has squared Euclidean error of order $d/n_k$. The term \(\Delta_k^2\) in the denominator is the squared bias of the pooled target. The damped weight therefore borrows more when store \(k\)'s local data are thin or its
pooled target lies close to $\beta_k^\star$.

The two heterogeneity quantities in our results have distinct roles. The global heterogeneity \(\Delta\) determines when the finite-sample guarantee holds, since
Theorem~\ref{thm:theory_main} applies uniformly across stores whenever
$\Delta \le \bar\Delta$.  
The store-specific mismatch $\Delta_k$ determines how
much each store borrows, since by Corollary~\ref{cor:theory_weight_scale}
the weight $w_k^\star$ scales with $\Delta_k$.
The weight therefore adapts to each store's distance from its pooled
target, which may be smaller than the uniform bound $\Delta$.

Figure~\ref{fig:theory_phase_diagram} visualizes these comparative
statics. For each level of global heterogeneity $\Delta$ and local
sample ratio $n_k/d$, the color shows the reduction in expected excess
risk that Theorem~\ref{thm:theory_main} guarantees for shrinkage relative
to local estimation. Positive values
(blue) mark where shrinkage is guaranteed to improve on local
estimation, and negative values (red) mark where the bound no longer
guarantees an improvement. The guaranteed gain is largest when
heterogeneity is small, and fades as $\Delta$ grows and the local sample
becomes too thin for the remainder to be controlled.

\begin{figure}[t]
    \centering
    \includegraphics[width=0.82\textwidth]{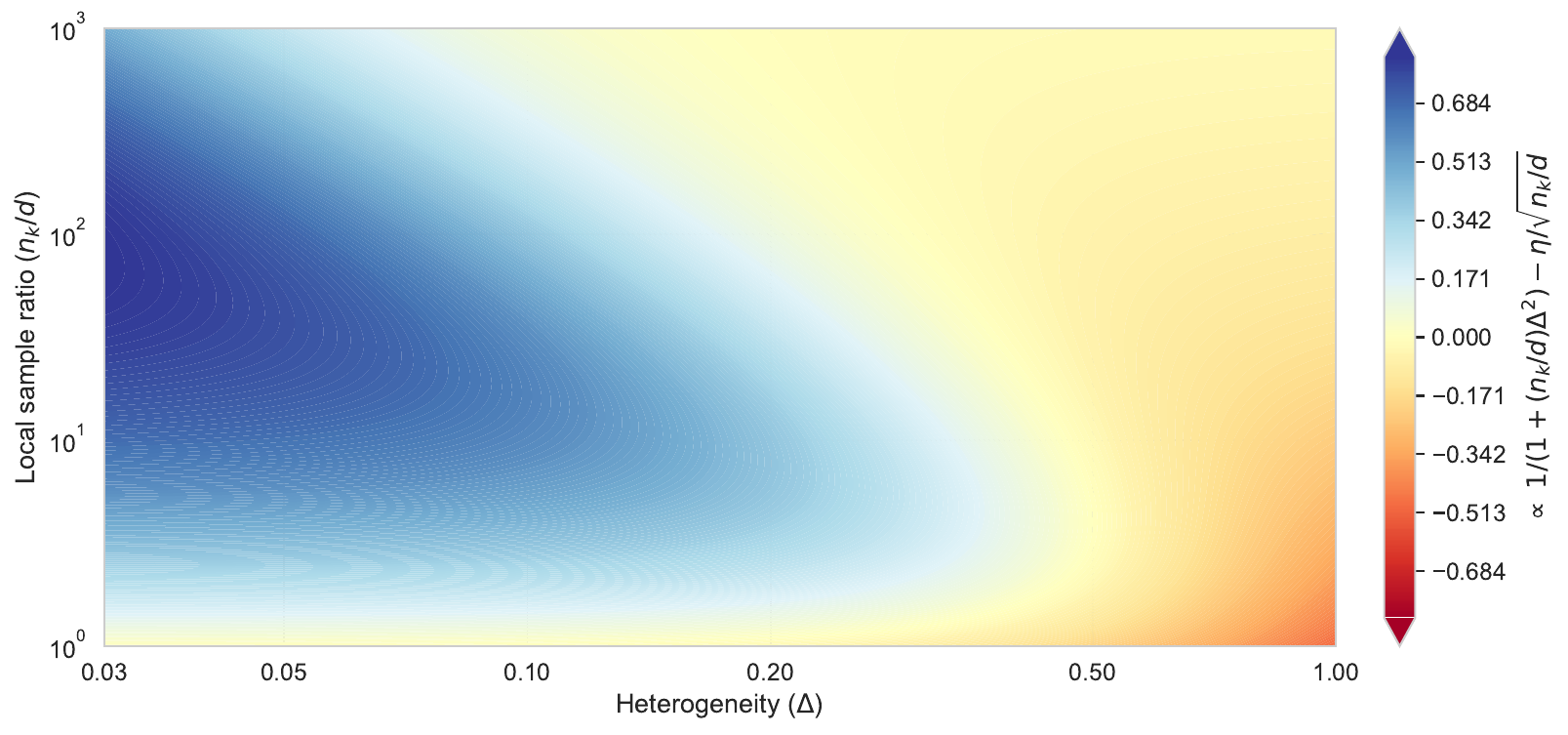}
    \caption{Reduction in expected excess risk that
Theorem~\ref{thm:theory_main} guarantees for shrinkage relative to local
estimation, over cross-store heterogeneity \(\Delta\) and the local
sample ratio \(n_k/d\). Blue marks where shrinkage is guaranteed to
improve on local estimation, and red where the bound gives no such
guarantee.}
    \label{fig:theory_phase_diagram}
\end{figure}

\subsection{Order-Based Plug-In Weights}

The damped weight \(w_k^\star\) is built from \(\alpha_k\) and \(\zeta_k\), but it is not directly computable. As defined in \eqref{eq:theory_alpha_zeta}, these quantities are expectations over the sampling distribution of the estimators, describing how the local and pooled errors vary across samples. Estimating them requires knowing the sampling variability of the local and pooled estimators.

Corollary~\ref{cor:theory_weight_scale} offers a feasible route. The shrinkage weight scale it gives depends on the store only through the mismatch
\(\Delta_k\), a fixed distance between $\beta_0^{(-k),\star}$ and $\beta_k^\star$. Thus, it requires only point estimates, which the fitted estimators \(\widehat\beta_0^{(-k)}\) and \(\widehat\beta_k\) supply
directly. 

Because the corollary determines only the order of $w_k^\star$, we keep its functional form and absorb the constants into a single parameter
\(\lambda\), giving the \textit{order-based} target
\begin{equation}
w_k^\circ(\lambda)
:=
\frac{d}{d+\lambda n_k\Delta_k^2},
\qquad \lambda>0.
\label{eq:theory_practical_target}
\end{equation}
Here, \(\lambda\) is a finite-sample calibration parameter that
can be normalized, selected by validation, or treated as a sensitivity
parameter. The corresponding plug-in weight is
\begin{equation}
\widehat w_k(\lambda)
:=
\frac{d}{d+\lambda n_k\,\widehat\Delta_k^2},
\label{eq:theory_plugin_weight}
\end{equation}
where \(\widehat\Delta_k\) estimates the structural mismatch \(\Delta_k\). A natural choice is
\(\widehat\Delta_k = \|\widehat\beta_0^{(-k)}-\widehat\beta_k\|_2\),
possibly with smoothing or finite-sample calibration. This estimator is
consistent, \(\widehat\Delta_k\xrightarrow{p}\Delta_k\), since \(\widehat\beta_k\)
and \(\widehat\beta_0^{(-k)}\) converge to their respective targets. %

The next result describes the asymptotic behavior of the order-based plug-in weight, and in particular when it recovers the right amount of pooling.

\begin{lemma}
\label{lem:theory_plugin_behavior}
Fix \(\lambda>0\) and a store \(k\). All convergences are taken as \(d/n_k\to0\), and \(X=o_p(a)\) means
\(X/a\xrightarrow{p}0\). 

\begin{enumerate}[label=(\roman*),leftmargin=*]
    \item If \(\Delta_k>0\), then \(w_k^\circ(\lambda)\to0\). If in addition
    \(\widehat\Delta_k\xrightarrow{p}\Delta_k\), then \(\widehat w_k(\lambda)\xrightarrow{p}0\) and 
    \(\widehat w_k(\lambda)/w_k^\circ(\lambda)\xrightarrow{p}1\).

    \item If \(\Delta_k=0\), then
    \(w_k^\circ(\lambda)=1\). If in addition $\widehat{\Delta}_k = o_p(\sqrt{d/n_k})$, then \(\widehat w_k(\lambda)\xrightarrow{p}1\). Consistency, \(\widehat\Delta_k\xrightarrow{p}\Delta_k = 0\), does not suffice.
\end{enumerate}
\end{lemma}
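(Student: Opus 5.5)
The proof reduces to elementary limit arguments once both weights are written in terms of the ratio $\rho_k = n_k/d\to\infty$. Divide numerator and denominator of \eqref{eq:theory_practical_target} and \eqref{eq:theory_plugin_weight} by $d$ to obtain
\[
w_k^\circ(\lambda)=\frac{1}{1+\lambda\rho_k\Delta_k^2},\qquad
\widehat w_k(\lambda)=\frac{1}{1+\lambda\rho_k\widehat\Delta_k^2}.
\]
Everything then reduces to the behavior of the products $\rho_k\Delta_k^2$ and $\rho_k\widehat\Delta_k^2$.

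For part (i), I treat $\Delta_k>0$ as fixed. Then $\rho_k\Delta_k^2\to\infty$ deterministically, so $w_k^\circ(\lambda)\to0$. Under $\widehat\Delta_k\xrightarrow{p}\Delta_k>0$, the continuous mapping theorem gives $\widehat\Delta_k^2\xrightarrow{p}\Delta_k^2$. Hence, with probability tending to one, $\widehat\Delta_k^2\ge\Delta_k^2/2$, and on that event $\widehat w_k(\lambda)\le (1+\lambda\rho_k\Delta_k^2/2)^{-1}\to0$, which proves $\widehat w_k(\lambda)\xrightarrow{p}0$. For the ratio I write
\[
\frac{\widehat w_k(\lambda)}{w_k^\circ(\lambda)}
=\frac{1+\lambda\rho_k\Delta_k^2}{1+\lambda\rho_k\widehat\Delta_k^2}
=\frac{(\lambda\rho_k)^{-1}+\Delta_k^2}{(\lambda\rho_k)^{-1}+\widehat\Delta_k^2}.
\]
Here $(\lambda\rho_k)^{-1}\to0$, the numerator converges to $\Delta_k^2>0$, and the denominator converges in probability to $\Delta_k^2$. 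Slutsky and continuous mapping then give convergence to $1$; the strict positivity of $\Delta_k^2$ is what keeps the denominator bounded away from zero.

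For part (ii), $\Delta_k=0$ gives $w_k^\circ(\lambda)=1$ directly. The assumption $\widehat\Delta_k=o_p(\sqrt{d/n_k})$ means $\sqrt{\rho_k}\,\widehat\Delta_k\xrightarrow{p}0$, so $\rho_k\widehat\Delta_k^2\xrightarrow{p}0$, and continuity of $x\mapsto 1/(1+\lambda x)$ at $0$ yields $\widehat w_k(\lambda)\xrightarrow{p}1$.

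The only point needing care is the claim that consistency alone does not suffice, and I would establish it by counterexample. The natural witness is the paper's own estimator $\widehat\Delta_k=\|\widehat\beta_0^{(-k)}-\widehat\beta_k\|_2$. By Theorem~\ref{thm:app_mle_concentration} the local error $\|e_k\|_2$ is of exact order $\sqrt{d/n_k}$, and independence of $\widehat\beta_k$ from the leave-one-out target prevents cancellation, so $\rho_k\widehat\Delta_k^2$ stays of order one. Making this rigorous requires a lower bound on the local error, for instance a limiting nondegenerate chi-square-type law for $n_k\|e_k\|^2$ under fixed $d$. To avoid that technical obstacle, I would instead present a simple explicit sequence: take $\widehat\Delta_k=\sqrt{d/n_k}$ deterministically. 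This sequence is consistent for $\Delta_k=0$, yet $\widehat w_k(\lambda)=1/(1+\lambda)$ does not converge to $1$. That suffices to show the stated rate condition cannot be weakened to mere consistency.
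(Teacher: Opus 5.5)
Your proof is correct and follows the paper's argument essentially step for step. Both rewrite the weights in terms of $n_k/d$ and handle part (i) via $\widehat\Delta_k^2\xrightarrow{p}\Delta_k^2>0$ and the ratio $(d/n_k+\lambda\Delta_k^2)/(d/n_k+\lambda\widehat\Delta_k^2)$; part (ii) via $(n_k/d)\widehat\Delta_k^2\xrightarrow{p}0$; and the insufficiency of consistency via the same deterministic counterexample $\widehat\Delta_k^2=d/n_k$, which gives $\widehat w_k(\lambda)=1/(1+\lambda)$.
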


\vspace{2mm}
Lemma~\ref{lem:theory_plugin_behavior} characterizes the asymptotic behavior of the order-based plug-in weight. When \(\Delta_k>0\), store \(k\) is structurally distinct from the rest of the chain. As the local sample grows relative to the dimension, the order-based target weight \(w_k^\circ(\lambda)\) vanishes. This reflects a bias-variance tradeoff. The local variance shrinks with \(n_k\), while the store heterogeneity bias persists, so borrowing pays off less. Our choice of \(\widehat\Delta_k = \|\widehat\beta_0^{(-k)} - \widehat\beta_k\|_2\) is a consistent estimator of \(\Delta_k\), so the lemma states that the plug-in weight vanishes at the same rate as the target weight, \(\widehat w_k(\lambda)/w_k^\circ(\lambda)\xrightarrow{p}1\). 

The case \(\Delta_k=0\) is more subtle. Here there is no heterogeneity bias, so the order-based target places all weight on the pooled estimator, \(w_k^\circ(\lambda)=1\).  Whether the plug-in weight \(\widehat w_k(\lambda)\) converges to that target is less immediate. From \eqref{eq:theory_plugin_weight}, \(\widehat\Delta_k\) enters the
denominator through the growing factor \(n_k/d\), so even a small estimation
error can make the plug-in weight under-shrink relative to the target.  Consistency of \(\widehat\Delta_k\) is therefore not enough, and the sharper
rate \(\widehat\Delta_k=o_p(\sqrt{d/n_k})\) is needed instead.  This rate is difficult to ensure near \(\Delta_k=0\). There the two population
parameters \(\beta_k^\star\) and \(\beta_0^{(-k),\star}\) coincide, so
\(\widehat\Delta_k\) reduces to the difference between two noisy estimators and
is of the same order \(\sqrt{d/n_k}\) as the local estimation error. The sharper rate requires \(\widehat\Delta_k\) to be smaller than this order. To overcome this, we propose a validation-based shrinkage weight, selected directly from the held-out likelihood, that avoids estimating \(\Delta_k\) altogether. We evaluate both the plug-in weight and the cross-validated weight in the simulation and empirical analyses.

Overall, Theorem~\ref{thm:theory_main}, Corollary~\ref{cor:theory_weight_scale}, and Lemma~\ref{lem:theory_plugin_behavior} show that the appropriate level of shrinkage is governed by the interaction among dimension, local sample size, and heterogeneity. The order-based plug-in weight is an explicit, implementable family of weights, while the cross-validated weight
provides an empirical alternative when held-out data are available. The two play complementary roles, with the plug-in weight following directly from the theory and the cross-validated weight letting the data tune the amount of borrowing.

\section{Implications for Assortment and Pricing Decisions}
\label{sec:operational}

We now translate the statistical guarantees for shrinkage into operational
regret bounds. In retail execution, the cost of estimation error is not
only statistical. Noisy estimates of the choice model parameters can change the inputs to downstream assortment and pricing problems. For example, they
may distort predicted substitution patterns, alter product rankings, or
shift the estimated price sensitivity, leading the retailer to select a
different assortment or choose a different markup.

We state the operational regret bounds we derive in this section for a generic shrinkage weight \(w_k\in[0,1]\), so they can be evaluated at an analytically chosen weight. The propositions below take the amount of shrinkage as given and show how it affects assortment and pricing regret.

\subsection{Assortment Decisions}
\label{subsec:operational_assortment}

We first consider the unconstrained assortment problem. At store \(k\), the retailer
selects a subset of products \(S\subseteq\mathcal N\) to offer. Under
the MNL demand model, the expected revenue from assortment \(S\) under
parameter \(\beta\) is
\begin{equation}
R(S;\beta)
=
\sum_{j\in S} p_j\, P_j(S;\beta),  %
\label{eq:assortment_revenue}
\end{equation}
where \(p_j\) is the price of product \(j\), and \(P_j(S;\beta)\) is its MNL
choice probability when the retailer offers assortment \(S\). This is the choice
probability \(P_j(\beta)\) of Section~\ref{sec:multistore_mnl}, now written with the offered set made explicit, and it reduces to \(P_j(\beta)\) when \(S\) is the full set of alternatives. %

Given the true store-level parameter \(\beta_k^\star\), the oracle
assortment is
$
S_k^\star
\in
\arg\max_{S\subseteq\mathcal N} R(S;\beta_k^\star).
$
In practice, the retailer does not observe \(\beta_k^\star\) and instead, for a
given shrinkage weight \(w_k\), selects the assortment that maximizes estimated
revenue under \(\widehat\beta_k^{\mathrm{shr}}(w_k)\):
\begin{equation}
\widehat S_k(w_k)
\in
\arg\max_{S\subseteq\mathcal N}
R(S;\, \widehat\beta_k^{\mathrm{shr}}(w_k)).
\label{eq:assortment_plugin}
\end{equation}
The resulting assortment regret is
\begin{equation}
\mathrm{Regret}_k^{\mathrm{assort}}(w_k)
=
R(S_k^\star;\beta_k^\star)
-
R(\widehat S_k(w_k);\beta_k^\star).
\label{eq:assortment_regret_def}
\end{equation}
Because \(\widehat S_k(w_k)\) is computed from the random sample,
\(\mathrm{Regret}_k^{\mathrm{assort}}(w_k)\) is a random variable, which
Proposition~\ref{prop:assortment_regret_hp} bounds with high probability over
the random sample.

\begin{proposition}
\label{prop:assortment_regret_hp}
Fix store \(k\) and  shrinkage weight \(w_k\in[0,1]\) where 
\(
\widehat\beta_k^{\mathrm{shr}}(w_k)
=
(1-w_k)\widehat\beta_k+w_k\widehat\beta_0^{(-k)}
\). 
Suppose Assumptions~\ref{ass:bounded}--\ref{ass:sample} hold and
$\Delta\in[0,\Delta_0]$, where $\Delta_0$ is defined in
Appendix~\ref{sec:preliminaries}. Let
$
N_{-k}:=\sum_{\ell\neq k}n_\ell
$
denote the leave-one-out pooled sample size. Then, for any
\(\delta\in(0,1)\), with probability at least \(1-\delta\),
\begin{align}
\mathrm{Regret}_k^{\mathrm{assort}}(w_k)
\le
4p_{\max}B
\Big[
(1-w_k)\,r_{\mathrm{loc}}(\delta)
+
w_k\bigl(
r_{\mathrm{pool}}(\delta)
+ \Delta_k
\bigr)
\Big],
\label{eq:assortment_regret_hp_clean}
\end{align}
where \(\Delta_k=\|\beta_0^{(-k),\star}-\beta_k^\star\|_2\), \(C_{\mathrm{mle}}>0\) depends only on the primitive regularity
quantities, and
\[
r_{\mathrm{loc}}(\delta)
=
\frac{C_{\mathrm{mle}}}{\sqrt m}
\sqrt{\frac{d+\log(8/\delta)}{n_k}},
\qquad
r_{\mathrm{pool}}(\delta)
=
\frac{C_{\mathrm{mle}}}{\sqrt m}
\sqrt{\frac{d+\log(8/\delta)}{N_{-k}}}.
\]
\end{proposition}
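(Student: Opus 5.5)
The plan is the standard plug-in argument: reduce assortment regret to a uniform revenue-estimation error, bound that error by a Lipschitz constant times the parameter error, and then control the parameter error with the concentration bounds of Appendix~\ref{app:mle}.

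\emph{Step 1 (decomposition).} Write $\widehat\beta:=\widehat\beta_k^{\mathrm{shr}}(w_k)$. Since $\widehat S_k(w_k)$ maximizes $R(\cdot;\widehat\beta)$, we have
\[
\mathrm{Regret}_k^{\mathrm{assort}}(w_k)
=\bigl[R(S_k^\star;\beta_k^\star)-R(S_k^\star;\widehat\beta)\bigr]
+\bigl[R(S_k^\star;\widehat\beta)-R(\widehat S_k;\widehat\beta)\bigr]
+\bigl[R(\widehat S_k;\widehat\beta)-R(\widehat S_k;\beta_k^\star)\bigr].
\]
The middle bracket is nonpositive. Hence the regret is at most $2\sup_{S}|R(S;\widehat\beta)-R(S;\beta_k^\star)|$.

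\emph{Step 2 (Lipschitz bound).} Fix $S$ and set $\beta_t=\beta_k^\star+t(\widehat\beta-\beta_k^\star)$. Differentiating the MNL probabilities gives
\[
\nabla_\beta R(S;\beta)=\sum_{j\in S}p_jP_j(S;\beta)\bigl(X_j-\bar X_S(\beta)\bigr),
\]
where $\bar X_S(\beta)=\sum_{i\in S}P_i(S;\beta)X_i$ is the choice-weighted mean feature, with the outside option contributing $0$. Each term $X_j-\bar X_S(\beta)$ has norm at most $2B$, and the weights $P_j(S;\beta)$ sum to at most one. Therefore $\|\nabla_\beta R\|_2\le 2p_{\max}B$ uniformly in $\beta$ and $S$, so by the mean value theorem
\[
\sup_S|R(S;\widehat\beta)-R(S;\beta_k^\star)|\le 2p_{\max}B\|\widehat\beta-\beta_k^\star\|_2 .
\]
Combined with Step 1, this produces the factor $4p_{\max}B$ in \eqref{eq:assortment_regret_hp_clean}. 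Note that no curvature is needed here: the Lipschitz constant is global and $\Theta$ is convex.

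\emph{Step 3 (parameter error).} By \eqref{eq:shrinkage_gap} and \eqref{eqn:g_composition}, the triangle inequality gives
\[
\|\widehat\beta-\beta_k^\star\|_2\le (1-w_k)\|e_k\|_2+w_k\bigl(\|\varepsilon_k^{\mathrm{pool}}\|_2+\Delta_k\bigr).
\]
For the local error, apply Theorem~\ref{thm:app_mle_concentration} to the store-$k$ MLE at confidence level $\delta/2$. This gives $\|e_k\|_2\le r_{\mathrm{loc}}(\delta)$; the factor $1/\sqrt m$ comes from converting the $\|\cdot\|_k$ norm to the Euclidean norm via $H_k^\star\succeq n_k m I_d$, and the $\log(8/\delta)$ absorbs the union over events. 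For the pooled error, apply the same concentration result to the leave-one-out pooled MLE around its own target $\beta_0^{(-k),\star}$, using the pooled Hessian $\nabla^2L_0^{(-k),\star}(\beta_0^{(-k),\star})\succeq N_{-k}\,m'I_d$. A union bound over the two events then yields probability at least $1-\delta$.

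\emph{Main obstacle.} The delicate part is this last application. The pooled objective is a sum of misspecified losses: no store's model has $\beta_0^{(-k),\star}$ as its true parameter, so the score at $\beta_0^{(-k),\star}$ is a sum of independent, non-identically distributed, bounded vectors whose expectation is zero only in aggregate. I need three facts. First, the curvature at $\beta_0^{(-k),\star}$ must be comparable to $m$. I would get this from Hessian stability under self-concordant-like behavior (Theorem~\ref{thm:app_mnl_scl}) together with $\|\beta_0^{(-k),\star}-\beta_k^\star\|_2\lesssim\Delta\le\Delta_0$. Second, $\beta_0^{(-k),\star}$ must lie in the interior of $\Theta$; I would ensure this by taking $\Delta_0<\rho_\Theta$, so the minimizer is unconstrained. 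Third, the score concentrates at rate $\sqrt{(d+\log(1/\delta))/N_{-k}}$ by a vector Bernstein or covering argument. I expect this is exactly the role of $\Delta_0$ in Appendix~\ref{sec:preliminaries}. Any resulting change in constants is absorbed into $C_{\mathrm{mle}}$.
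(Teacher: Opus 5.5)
Your proposal is correct and follows the paper's proof essentially step for step: the same three-term decomposition yielding $2\sup_S|R(S;\widehat\beta)-R(S;\beta_k^\star)|$, the same global Lipschitz constant $2p_{\max}B$ (Lemma~\ref{lem:app_assortment_lipschitz}), and the same triangle-inequality split into the local error, the pooled error around $\beta_0^{(-k),\star}$, and the mismatch $\Delta_k$. The pooled concentration you flag as the main obstacle is what Steps 1--3 of Lemma~\ref{lem:app_pooled_variance} supply (curvature via Hessian stability, individually centered bounded increments, and a net argument that does not need isotropy), with one small correction: the interior margin does not come from $\Delta_0<\rho_\Theta$, since the mismatch is bounded by $R_\Theta(\Delta)=2\Delta+C_{\mathrm{het}}e^{12B\Delta}\Delta^2$ rather than by $\Delta$ (Lemma~\ref{lem:app_pooled_bias}); it comes from the paper's choice $R_\Theta(\Delta_0)=\rho_\Theta/2$.
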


\vspace{2mm}
Proposition~\ref{prop:assortment_regret_hp} gives a direct revenue-level interpretation of the shrinkage tradeoff. The bound combines three quantities, the estimation error of the local MLE \(r_{\mathrm{loc}}(\delta)\), the estimation error of the leave-one-out pooled target \(r_{\mathrm{pool}}(\delta)\), and the structural mismatch \(\Delta_k\). The larger \(w_k\), the more the bound depends on the pooled error and the mismatch, and the less on the local error.

The local error \(r_{\mathrm{loc}}(\delta)=O(\sqrt{d/n_k})\) is large when store
\(k\)'s sample is small relative to the dimension of the choice model. Relying
on thin local data is then costly, because the parameter error can change which
products the assortment optimization selects, especially when several
assortments have similar predicted revenues.

The pooled error \(r_{\mathrm{pool}}(\delta)=O(\sqrt{d/N_{-k}})\) is typically
smaller than the local error, since the pool draws on \(N_{-k}\gg n_k\) samples.
Borrowing toward the pooled target lowers the estimation error but adds the
deterministic mismatch \(\Delta_k\), the distance between the pooled target and
store \(k\)'s true parameter. For \(\Delta\le\Delta_0\), this mismatch satisfies
\[
\Delta_k
\le
2\Delta
+
\frac{2BM}{m}e^{2BD_\Theta+12B\Delta}\Delta^2.
\]
The first term is the direct cross-store difference implied by the 
condition \(\|\beta_\ell^\star-\beta_c^\star\|_2\le\Delta\). The second-order term
comes from the nonlinear MNL population likelihood. The leave-one-out pooled
minimizer is not a Euclidean average of the store-specific parameters, and the
self-concordant-like curvature control bounds the additional displacement. The
mismatch is therefore of order \(\Delta+\Delta^2\).

The assortment bound mirrors the statistical risk comparison of
Section~\ref{sec:theory}, trading local estimation noise against heterogeneity bias as \(w_k\) varies. %

\subsection{Pricing Decisions}
\label{subsec:operational_pricing}

We next consider pricing under the same multi-product MNL structure. To separate pricing from assortment selection, fix an offered set \(S\subseteq\mathcal N\) and let price be the decision. For the pricing analysis, we partition the store-\(k\)
parameter as
\[
\beta_k^\star=(\eta_k^\star,\alpha_k^\star),
\]
separating the intercept and nonprice coefficients \(\eta_k^\star\) from the
price-sensitivity coefficient \(\alpha_k^\star>0\). For product $i \in S$, we write the feature vector as \(X_i=(Z_i,-p_i)\), where $p_i$ is the price and \(Z_i\) collects the nonprice attributes. Hence, $X_i^\top\beta_k^\star =Z_i^\top\eta_k^\star-\alpha_k^\star p_i$, so we write the choice probability as a function of the price vector
\(p=(p_i)_{i\in S}\),
\[
P_i(p;\beta_k^\star)
=
\frac{\exp(Z_i^\top\eta_k^\star-\alpha_k^\star p_i)}
{1+\sum_{j\in S}\exp(Z_j^\top\eta_k^\star-\alpha_k^\star p_j)},
\qquad i\in S.
\]
This is the choice probability of Section~\ref{sec:multistore_mnl}
with the prices exposed as the decision. Let \(c_i\) denote the unit cost of product \(i\). The expected profit at store
\(k\) is
\begin{equation}
\Pi_k(p;\beta_k^\star)=\sum_{i\in S}(p_i-c_i)\,P_i(p;\beta_k^\star).
\label{eq:pricing_profit_vector}
\end{equation}

The oracle pricing problem is
\begin{equation}
p_k^\star\in\argmax_{p\in\mathcal P}\Pi_k(p;\beta_k^\star),
\qquad
\mathcal P:=\{p\in\mathbb R^{|S|}: c_i\le p_i\le\bar p,\ i\in S\}.
\label{eq:pricing_oracle_vector}
\end{equation}
For the unconstrained interior version of this problem, the linear-in-price MNL
model with a common price-sensitivity coefficient implies the familiar
equal-markup structure \citep{li2011pricing}. There exists a scalar
\(m_k^\star\ge0\) such that \(p_{ik}^\star=c_i+m_k^\star\) for all \(i\in S\).
With price bounds, this structure holds for the products whose prices are
interior, while boundary constraints may bind for some items. In either case,
the pricing decision is governed by the same estimated store-level choice model
as the assortment decision.

In practice, the retailer does not observe \(\beta_k^\star\) and, for a given shrinkage weight \(w_k\), maximizes estimated profit under
\(\widehat\beta_k^{\mathrm{shr}}(w_k)\),
\begin{equation}
\widehat p_k(w_k)\in\argmax_{p\in\mathcal P}\Pi_k(p;\widehat\beta_k^{\mathrm{shr}}(w_k)).
\label{eq:pricing_plugin_vector}
\end{equation}
The resulting pricing regret is
\begin{equation}
\mathrm{Regret}_{k}^{\mathrm{price}}(w_k)
=
\Pi_k(p_k^\star;\beta_k^\star)-\Pi_k(\widehat p_k(w_k);\beta_k^\star).
\label{eq:pricing_regret_vector}
\end{equation}
As in the assortment case, this regret is a random variable through its
dependence on the sample. The following proposition bounds the regret with high probability over the random sample. 

\begin{proposition}
\label{prop:pricing_regret_hp}
Fix a store \(k\), assortment
\(S\subseteq\mathcal N\), and shrinkage weight \(w_k\in[0,1]\), where $\widehat\beta_k^{\mathrm{shr}}(w_k) := (1-w_k)\widehat\beta_k+w_k\widehat\beta_0^{(-k)}$. Suppose Assumptions~\ref{ass:bounded}--\ref{ass:sample} hold and
$\Delta\in[0,\Delta_0]$, where $\Delta_0$ is defined in
Appendix~\ref{sec:preliminaries}. Let $N_{-k}:=\sum_{\ell\neq k}n_\ell$ denote the leave-one-out pooled sample size. Then, for any 
\(\delta\in(0,1)\), 
\begin{equation}
\mathrm{Regret}_{k}^{\mathrm{price}}(w_k)
\le
C_p
\Big[
(1-w_k)\,r_{\mathrm{loc}}(\delta)
+
w_k \bigl(
r_{\mathrm{pool}}(\delta)
+
\Delta_k
\bigr)
\Big],
\label{eq:pricing_regret_clean_cont}
\end{equation}
where $\Delta_k = \|\beta_0^{(-k),\star} - \beta_k^\star\|_2$, 
\(C_p>0\) depends only on $S$ and the primitive regularity quantities, and
\[
r_{\mathrm{loc}}(\delta)
=
\sqrt{\frac{d+\log(1/\delta)}{n_k}},
\qquad
r_{\mathrm{pool}}(\delta)
=
\sqrt{\frac{d+\log(1/\delta)}{N_{-k}}}.
\]
\end{proposition}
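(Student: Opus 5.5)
The argument has three steps: show that the profit function is Lipschitz in the parameter uniformly over prices, convert that into a regret bound by the standard plug-in argument, and then control the parameter error of the shrinkage estimator on a high-probability event.

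\emph{Step 1 (Lipschitz in the parameter).} For any two parameters $\beta,\beta'\in\Theta$ and every price vector $p\in\mathcal P$, write the profit as $\Pi_k(p;\beta)=\sum_{i\in S}(p_i-c_i)P_i(p;\beta)$. The MNL gradient is
\[
\nabla_\beta P_i(p;\beta)=P_i\bigl(X_i(p)-\textstyle\sum_j P_jX_j(p)\bigr).
\]
Over the compact set $\mathcal P$ the price-augmented features $X_i(p)=(Z_i,-p_i)$ are bounded, with $\|X_i(p)\|_2\le B_p:=\sqrt{B^2+\bar p^2}$. Margins satisfy $0\le p_i-c_i\le\bar p$. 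Together these give
\[
\|\nabla_\beta\Pi_k(p;\beta)\|_2\le 2\bar p B_p\sum_i P_i\le 2\bar pB_p.
\]
By the mean value theorem on the segment joining $\beta$ and $\beta'$ (which lies in $\Theta$ by convexity), we get $\sup_{p\in\mathcal P}|\Pi_k(p;\beta)-\Pi_k(p;\beta')|\le L_p\|\beta-\beta'\|_2$ with $L_p:=2\bar pB_p$.

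\emph{Step 2 (plug-in regret).} Set $\widehat\beta:=\widehat\beta_k^{\mathrm{shr}}(w_k)$. Since $\widehat p_k(w_k)$ maximizes $\Pi_k(\cdot;\widehat\beta)$, we have
\[
\mathrm{Regret}^{\mathrm{price}}_k\le[\Pi_k(p_k^\star;\beta_k^\star)-\Pi_k(p_k^\star;\widehat\beta)]+[\Pi_k(\widehat p_k;\widehat\beta)-\Pi_k(\widehat p_k;\beta_k^\star)]\le 2L_p\|\widehat\beta-\beta_k^\star\|_2.
\]
This step is deterministic and needs no concavity of the profit in $p$ and no equal-markup structure. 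That matters because the profit need not be concave in prices, and the bounds $c_i\le p_i\le\bar p$ may bind.

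\emph{Step 3 (parameter error).} Using the displacement decomposition \eqref{eqn:displacement},
\[
\|\widehat\beta-\beta_k^\star\|_2\le(1-w_k)\|e_k\|_2+w_k\bigl(\|\varepsilon_k^{\mathrm{pool}}\|_2+\Delta_k\bigr).
\]
I would invoke the MLE concentration result of Appendix~\ref{app:mle} (Theorem~\ref{thm:app_mle_concentration}) twice, each with failure probability $\delta/2$, and combine by a union bound.
\begin{itemize}
\item For the local MLE: $\|e_k\|_2\le (C_{\mathrm{mle}}/\sqrt m)\sqrt{(d+\log(c/\delta))/n_k}$.
\item For the leave-one-out pooled estimator around its own target $\beta_0^{(-k),\star}$: the same bound with $N_{-k}$ in place of $n_k$.
\end{itemize}
The constant $1/\sqrt m$ and the $\log(c/\delta)$ term are absorbed into $C_p$ (using $\log(c/\delta)\le c'(1+\log(1/\delta))$ and $d\ge1$). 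This yields the stated rates $r_{\mathrm{loc}}(\delta)$ and $r_{\mathrm{pool}}(\delta)$ with probability at least $1-\delta$. Setting $C_p:=2L_p$ times these absorbed constants completes the bound.

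\emph{Main obstacle.} The hard part is justifying that the concentration theorem applies to the pooled estimator, whose loss is a sum over heterogeneous stores and is therefore misspecified for any single MNL. The required ingredients are:
\begin{itemize}
\item the pooled Hessian is bounded below by $N_{-k}\,m e^{-2BD_\Theta}$ through Hessian stability;
\item the pooled score at $\beta_0^{(-k),\star}$ is a sum of independent, bounded, mean-zero terms, which is where $\Delta\le\Delta_0$ enters: it keeps $\beta_0^{(-k),\star}$ interior to $\Theta$, so the first-order condition holds with equality and the score has mean zero;
\item Assumption~\ref{ass:sample} ensures the SCL neighborhood contains the estimator.
\end{itemize}
I would take these from the appendix preliminaries, as the assortment proposition already does. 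The only genuinely new work is Step 1, namely checking that the feature bound is uniform over $\mathcal P$.
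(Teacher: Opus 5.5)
Your proposal is correct and follows essentially the same route as the paper: a uniform-in-$p$ Lipschitz bound for $\beta\mapsto\Pi_k(p;\beta)$, which is the paper's Lemma~\ref{lem:app_pricing_lipschitz}; you make it explicit with the constant $2\bar p\sqrt{B^2+\bar p^2}$, where the paper takes a supremum over a compact set. The rest also matches: the factor-two plug-in regret reduction, and the triangle-inequality split into local error, pooled error, and the structural mismatch $\Delta_k$. The pooled concentration you flag as the main obstacle is exactly the tail bound \eqref{eq:app_pooled_tail} in the proof of Lemma~\ref{lem:app_pooled_variance}, converted to the Euclidean norm via $H_0^\star\succeq c_-(\Delta)N_{-k}I_d$. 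Your appeal to Assumption~\ref{ass:sample} at that point is unnecessary, since Lemma~\ref{lem:app_constrained_basic} holds globally on $\Theta$.
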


The pricing bound has the same three parts as the assortment bound, the local
error, the pooled error, and the mismatch \(\Delta_k\). What differs is the
downstream decision. In assortment, estimation error can change which products
are selected. In pricing, it distorts the estimated nonprice valuations and
price sensitivity that set the profit-maximizing prices.

Propositions~\ref{prop:assortment_regret_hp} and \ref{prop:pricing_regret_hp}
are both stated for a generic shrinkage weight so, in principle, operational objectives can motivate
task-specific weights. For example, a weight chosen to minimize an
assortment-regret bound need not coincide with one chosen to minimize a
pricing-regret bound. Our focus is not to solve that task-specific
tuning problem. The propositions instead show how the same local-versus-pooled tradeoff
propagates to two canonical revenue-management decisions.

\section{Simulation Study}
\label{sec:simulation}

The theory predicts that the operational value of shrinkage, measured by
assortment and pricing regret, is governed by the interaction among local
sample size and cross-store heterogeneity. Shrinkage is most
valuable when local data are limited yet stores remain similar enough that
borrowing from the rest of the network is informative. The main simulation study therefore varies two quantities, cross-store
heterogeneity \(\Delta\) and local sample size \(n\), each in a separate
experiment. Robustness checks varying the number of stores \(K\), the catalog size \(N\),
and the parameter dimension \(d\) are reported in
Appendix~\ref{app:computational_details}.

\subsection{Experimental Design}

Each experiment varies one of \(\Delta\) and \(n\) and holds every other
parameter fixed, including the number of stores \(K=10\), the catalog size
\(N=15\), and the parameter dimension \(d=7\), with five nonprice features, one price-sensitivity coefficient, and one intercept. For each estimator it
measures assortment and pricing regret, comparing the local MLE, full pooling,
and the shrinkage estimators between them, including the order-based plug-in and
cross-validated weights. The model instance (i.e., product features and store MNL parameters) and
choice data are drawn at random, and we report expected regret, averaging over 200
independent Monte Carlo replications at each value of \(\Delta\) or \(n\). 

We next describe how we generate each model instance. To generate the product features, we independently draw nonprice feature vectors \(Z_i\sim\mathcal N(0,I_{d-2})\) and prices \(p_i\sim\mathrm{Unif}[1,10]\), thus constructing the feature vector \(X_i=(1,Z_i^\top,-p_i)^\top\). %
The store MNL parameters are generated as a random draw from a ball of radius $\Delta$ centered on $\beta_c^\star \in\mathbb R^d$, i.e., 
\(\beta_k^\star=\beta_c^\star+\nu_k\) with \(\nu_k\in\mathbb R^d\) denoting the random perturbation with \(\|\nu_k\|_2=\Delta\). Here, the common center $\beta_c^\star$ itself is randomly generated: we draw the center's \(d-2\) nonprice
coefficients independently from \(\mathcal N(0,1)\), set its intercept
to \(5\) and its price-sensitivity coefficient to \(1\), and hold the
resulting $\beta_c^\star$ fixed across stores. 

The heterogeneity experiment fixes \(n=50\) and
varies \(\Delta\) over 13 values from \(0\) to \(2.5\), spaced more finely at
small \(\Delta\). The sample-size experiment fixes \(\Delta=1\) and varies \(n\)
over 8 values from \(30\) to \(500\). The exact grids and the full data-generating process  are given in Appendix~\ref{app:computational_details}. %

We compare five estimators of the MNL parameters. The three
shrinkage estimators share the form
\(\widehat\beta_k^{\mathrm{shr}}(w_k)=(1-w_k)\widehat\beta_k+w_k\widehat\beta_0^{(-k)}\)
with \(w_k\in[0,1]\), where \(\widehat\beta_0^{(-k)}\) is the leave-one-out pooled
target in \eqref{eq:sec4_loo_mle}. They differ only in how the shrinkage weight \(w_k\) is chosen. 
\begin{itemize}
  \item \textbf{Store MLE}, \(\widehat\beta_k\) in \eqref{eq:sec4_local_mle}: uses
  only store-\(k\) data.
  \item \textbf{Pooled MLE}, \(\widehat\beta_0\) in \eqref{eq:sec4_pooled_mle}: 
  uses the data pooled across all stores.
  \item \textbf{Shrinkage (Oracle)}: an infeasible, realization-specific
analogue of the damped weight \(w_k^\star\) in
\eqref{eq:theory_analytic_weight}. For each store and model instance, it uses
the true parameter to compute the realized local and leave-one-out
pooled estimation errors, and applies the same damping rule to their
Euclidean quadratic terms. Appendix~\ref{app:oracle_shrinkage} gives the
calculation and explains its relation to the damped weight. This
benchmark is labeled Shrinkage (\(w^\star\)) in the simulation figures.
  \item \textbf{Shrinkage (Plug-in)},
\(\widehat w_k(\widehat\lambda)\): the order-based rule in
\eqref{eq:theory_plugin_weight} that uses the empirical mismatch
\(\widehat\Delta_k\). We calibrate \(\lambda\)
using a pooled-Hessian estimate of local estimation variance, as
described in Appendix~\ref{app:plugin_calibration}.
  \item \textbf{Shrinkage (CV)}: the weight selected separately for each
store by five-fold cross-validation within its \(n_k\) observations.
Each fold uses 80\% of the local observations for fitting and 20\% for
validation. On the grid
\(\mathcal W=\{0,0.01,\ldots,1\}\), we select
\[
\widehat w_k^{\mathrm{CV}}
\in
\argmin_{u\in\mathcal W}
\sum_{f=1}^{5}L_{k,f}^{\mathrm{val}}\!\left(
(1-u)\widehat\beta_k^{\mathrm{tr},f}
+
u\widehat\beta_0^{(-k)}
\right),
\]
where \(\widehat\beta_k^{\mathrm{tr},f}\) is fitted on the four folds
excluding fold \(f\), and \(L_{k,f}^{\mathrm{val}}\) is the negative
log-likelihood on the held-out fold. The leave-one-out pooled target
\(\widehat\beta_0^{(-k)}\) uses all training observations from the other
stores and is held fixed across folds. The final shrinkage estimate
combines \(\widehat\beta_k\), fitted on all \(n_k\) observations, with
\(\widehat\beta_0^{(-k)}\) using the selected weight.
\end{itemize}
For the store and pooled MLE, likelihood is minimized over the same
parameter region \(\Theta\) of Assumption~\ref{ass:parameter}. Convexity of
\(\Theta\) keeps each shrinkage estimator in \(\Theta\).

\begin{figure}[t]
\begin{center}
\begin{tabular}{cc}
\includegraphics[width=0.47\textwidth]{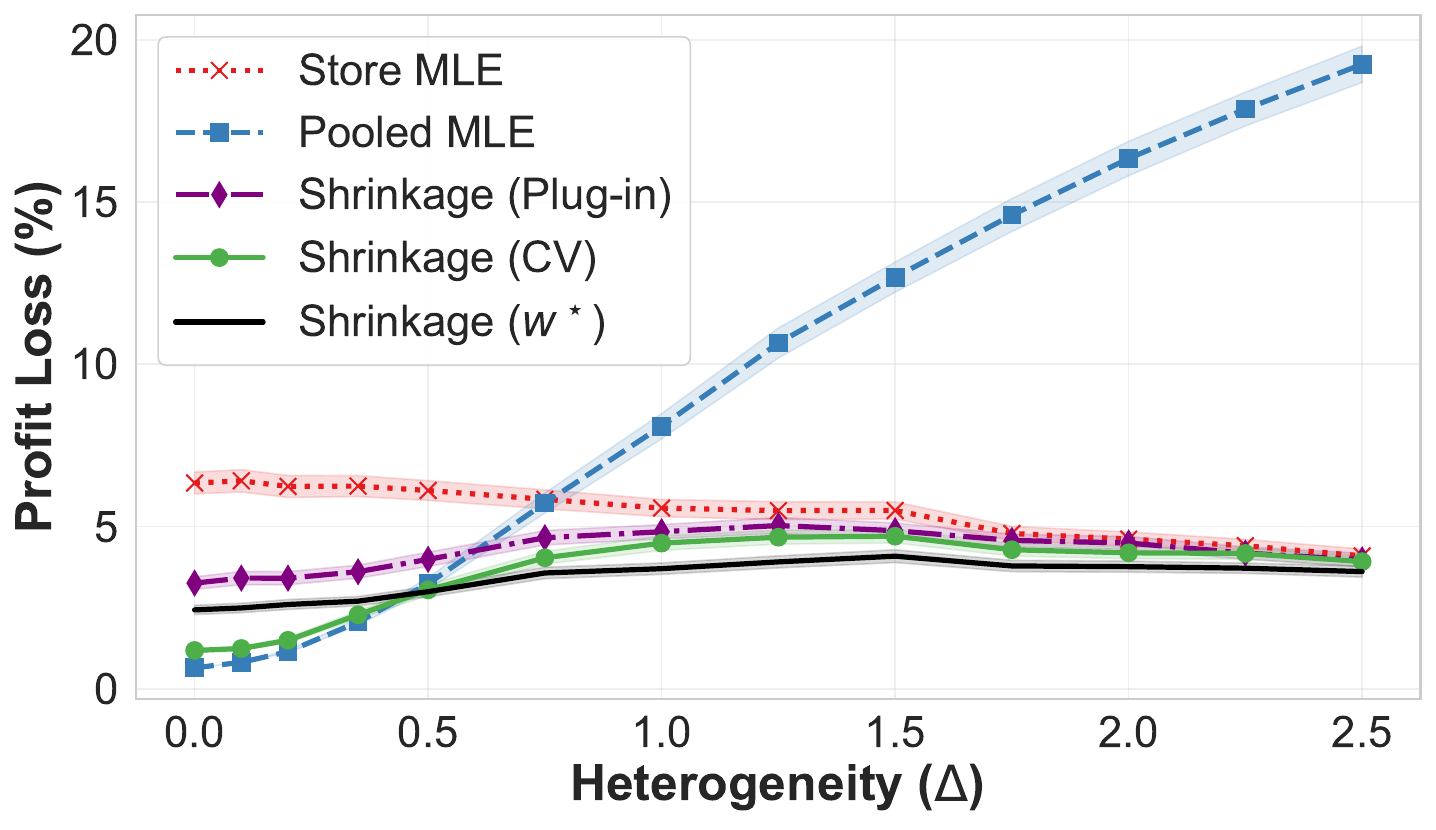}&
\includegraphics[width=0.47\textwidth]{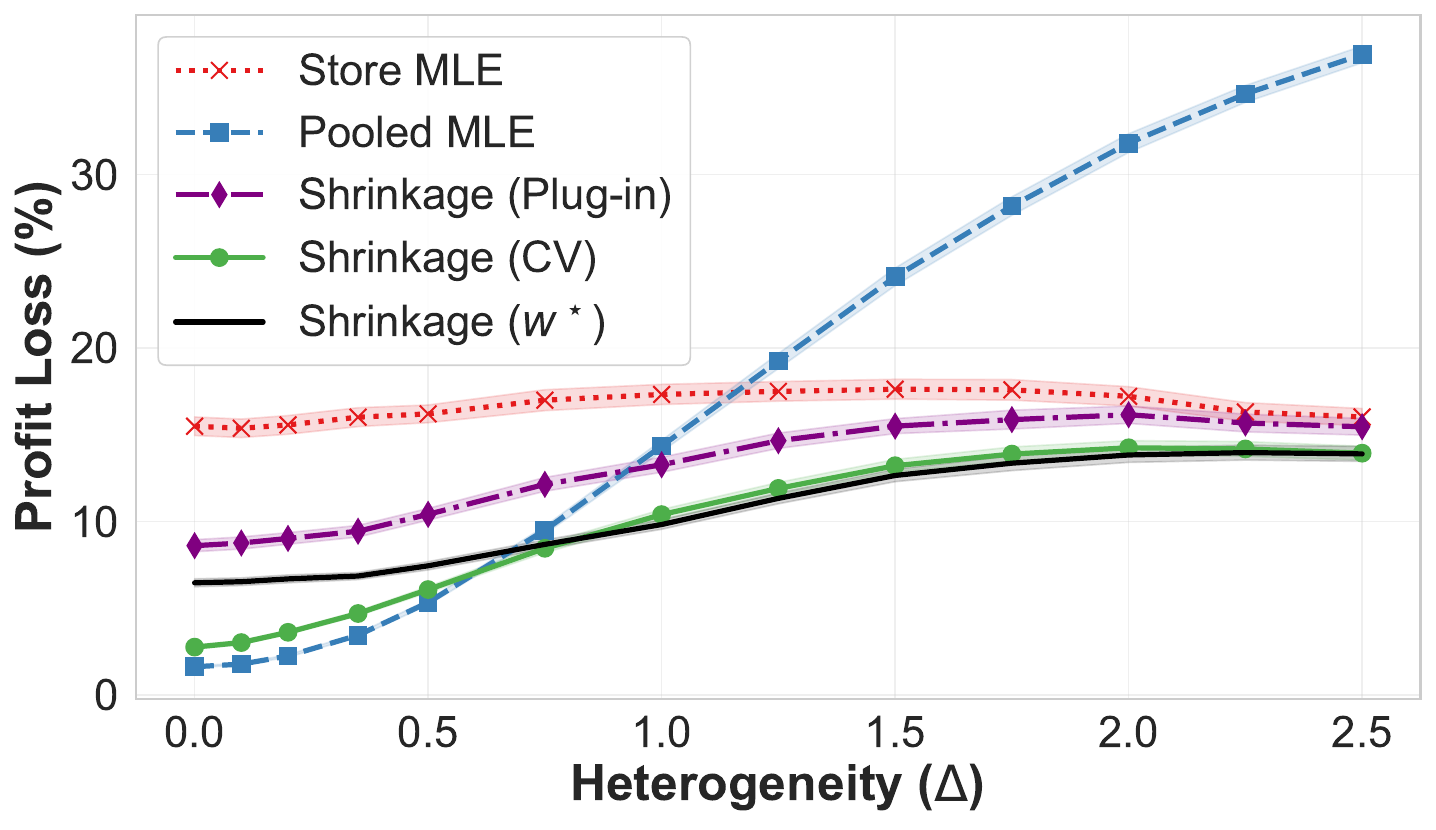}
\end{tabular}
\caption{Percentage Regret for assortment (left) and pricing (right) decisions as cross-store heterogeneity $\Delta$ varies.} \label{fig:exp1_regret}
\end{center}
\end{figure}

\subsection{Varying Cross-Store Heterogeneity}

Figure~\ref{fig:exp1_regret} reports assortment and pricing regret as functions
of the heterogeneity radius \(\Delta\), and the pattern matches the theory.  When \(\Delta\) is small, heterogeneity bias is small for every method, so the differences in
regret are driven by estimation variance. The fully pooled estimator therefore
substantially outperforms the local estimator, because it draws on the far
larger pooled sample and so carries the least variance. As \(\Delta\) grows, the heterogeneity bias takes over. The pooled estimator fits
a demand model increasingly misaligned with the store-specific truth, so its
variance advantage is outweighed and its regret rises steadily.

The local estimator behaves differently. It carries no heterogeneity bias, so its
regret does not climb with \(\Delta\) the way the pooled curve does. 
The two curves therefore cross at an intermediate value of \(\Delta\), the point past which borrowing from the pooled data stops paying off.

The shrinkage estimators interpolate between the two extremes. The oracle
benchmark shows the value of a well-chosen weight, staying close to the lower
envelope of the local and pooled regret curves and improving on both in between. Note that it does
not beat every method at every \(\Delta\), since its damped weight is based
on realized parameter errors rather than chosen to minimize assortment or
pricing regret. The order-based plug-in weight is derived from the
population weight \(w_k^\star\), and its regret follows the same pattern,
albeit at a slightly higher regret level.

The cross-validated weight differs from the oracle benchmark in the
information it uses. The oracle benchmark applies the damped rule to
realized quadratic parameter errors using the known truth, whereas the CV
weight works with the likelihood directly, using held-out validation data.
Maximizing the validation likelihood pushes the weighted
estimator toward the true store-\(k\) parameter. Since that estimator blends the
high-variance local estimate with the low-variance but heterogeneity-biased pooled
estimate, moving closer means trading the variance pooling removes against the bias
it adds. When \(\Delta\) is small that bias is small, so the criterion favors heavy
borrowing and the CV weight is large, and its regret tracks the pooled estimator.
As \(\Delta\) grows, the bias from pooling degrades predictive fit, the CV weight
falls toward the local estimator, and its regret follows the local curve. This
adjustment keeps it near the better of the two extremes across \(\Delta\).

\begin{figure}[t]
\begin{center}
\begin{tabular}{cc}
\includegraphics[width=0.47\textwidth]{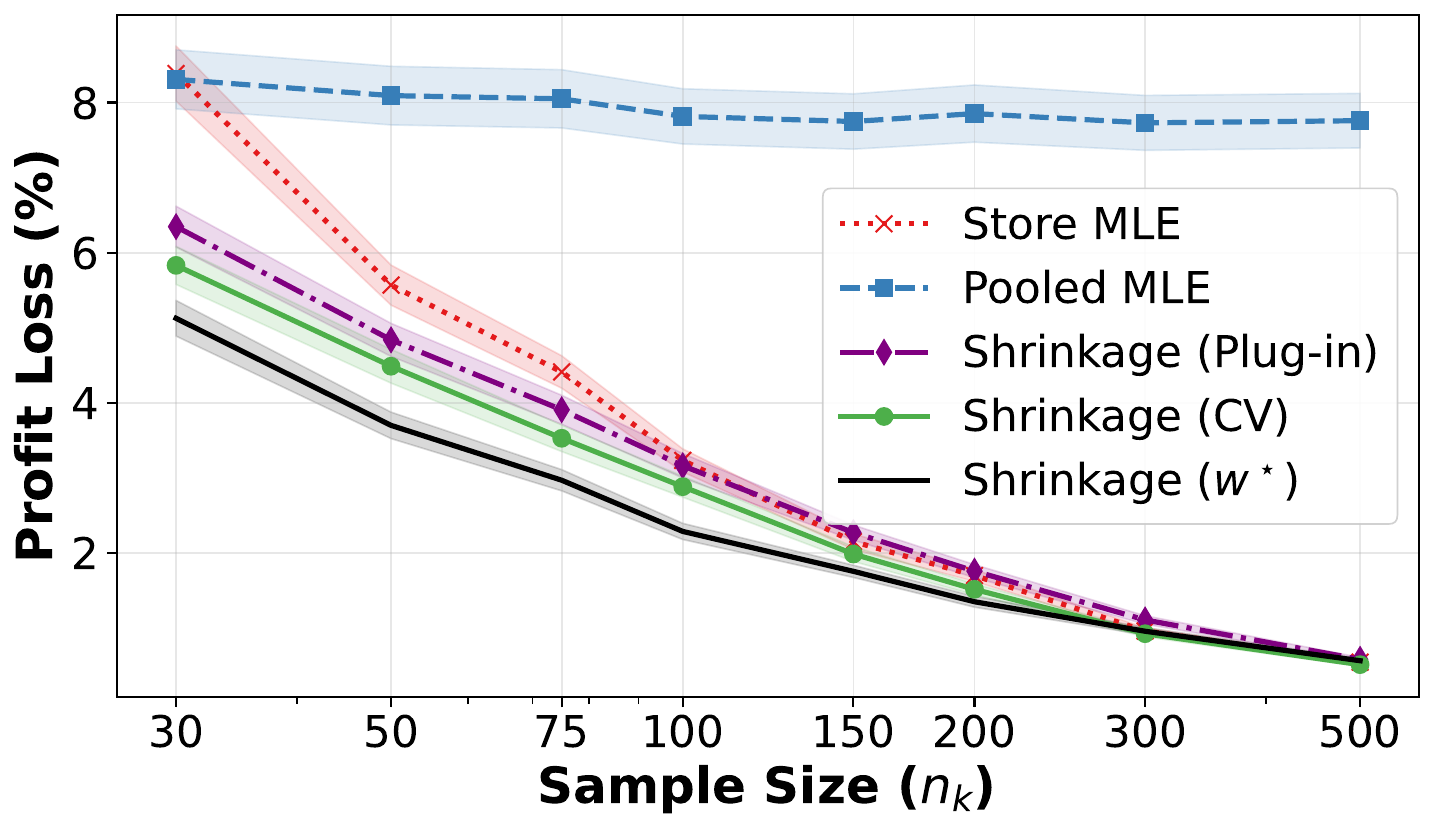}&
\includegraphics[width=0.47\textwidth]{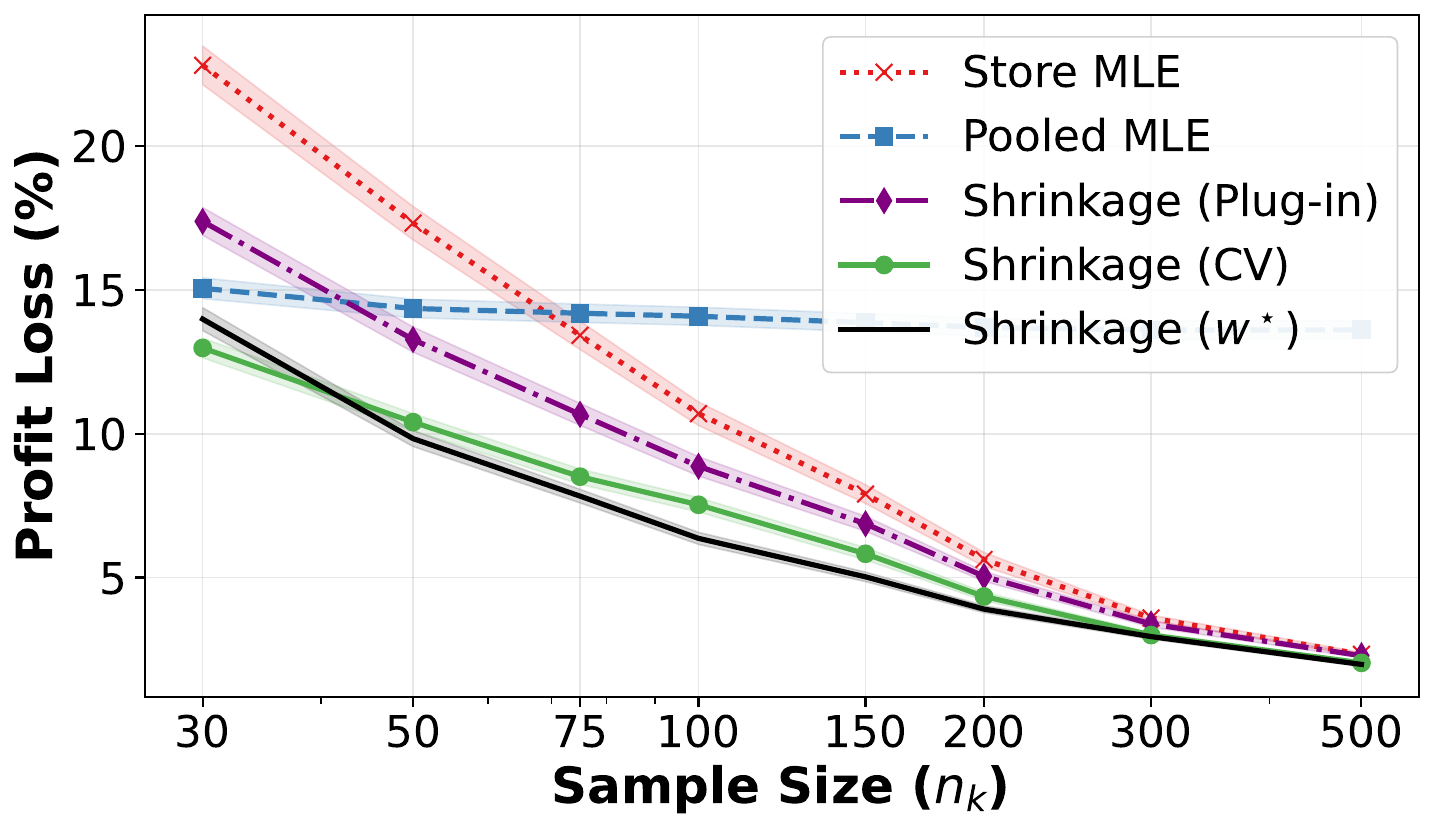}
\end{tabular}
\caption{Percentage regret for assortment (left) and pricing (right)
decisions as the local sample size \(n_k\) varies.}\label{fig:exp2_regret}
\end{center}
\end{figure}

\subsection{Varying Local Sample Size}

Figure~\ref{fig:exp2_regret} reports regret for the assortment and pricing decisions as the local sample size \(n_k\) varies, fixing $\Delta = 1$. At this value of $\Delta$, the stores are far from homogeneous, so full pooling carries a
nontrivial bias even when local samples are thin.

The local estimator improves sharply as \(n_k\) increases. When $n_k$ is small, the local estimator is noisy and can induce high
operational regret. As $n_k$ grows, its variance falls, and the resulting assortment and pricing decisions become more reliable. This is
the cleanest comparative static in the figure and directly reflects the
finite-sample scaling in theory. 

The pooled estimator behaves differently. A larger \(n_k\)
gives the pooled model more data, but it
does not remove the structural mismatch created by treating
heterogeneous stores as homogeneous. Because this mismatch, rather than variance,
is the main source of its error at \(\Delta=1\), the pooled curve stays nearly
flat. In the assortment panel it lies above the local curve for all but the thinnest local samples, so pooling does worse than the store-specific fit despite drawing on more data.

The shrinkage estimators combine the local and pooled estimates. In
Figure~\ref{fig:exp2_regret}, the oracle benchmark lies below both
estimators' regret curves throughout, while the plug-in and cross-validated
estimators stay close to it. This gain over either estimator
comes from their complementary errors. The local estimator is unbiased but noisy and the pooled estimator is
stable but biased, and shrinkage keeps most of the variance reduction
while taking on only part of the bias. As \(n_k\) grows and local estimation sharpens, the oracle weight slides from heavy borrowing back toward the local estimator. 

\section{Empirical Study}
\label{sec:empirical}

This section uses observational data provided by a large European multi-store
apparel retailer. Without a ground-truth demand model, we cannot measure pricing or assortment regret
against a known optimum, so we instead judge each method by how well it performs on a prediction task.

The task is to predict the rank of products by revenue contribution. Such a ranking is a routine input to operational decisions such as ABC classification,
where the highest-contribution items receive more inventory attention, better shelf space,
and more managerial effort.  

We split the weeks into training and test periods. Fix a store $k$. Each method's store-level estimate \(\widehat\beta_k\) is fit on the training weeks. In each test week $t$, among the set \(\mathcal A_{kt}\) of products store~$k$ actually stocked, the estimator ranks the products by predicted expected revenue
\(p_i P_i(\mathcal A_{kt}; \widehat{\beta}_k)\), then selects the set \({S}_{kt}(\widehat{\beta}_k)\) of the top
\(m\%\) of the products. We measure the revenue capture,
\begin{align}
        \text{Capture}_{kt}(\widehat{\beta}_k; \mathcal A_{kt}) = \frac{\sum_{i \in {S}_{kt}(\widehat{\beta}_k)} r_{ikt}}{\sum_{i \in \mathcal A_{kt}} r_{ikt}}, \label{eqn:rev_capture}
\end{align}
where \(r_{ikt}\) is the observed revenue of product \(i\) at store $k$ in evaluation week $t$. The metric scores the selection against what
actually sold, so it does not rely on a ground-truth demand model. This metric is computed for each store and each test week.

\subsection{Data and Empirical Setting}

The dataset covers several
hundred stores of the retailer over 53 weeks. Products in the dataset fall into five
women's apparel categories (blazers, dresses, leggings, trousers, and T-shirts). For each product, store, and week, the data logs whether the product was stocked, the
number of units sold that week, and the realized revenue. 

We retain stocked products with zero sales, which is important because a
zero-sale stocked product is informative about demand. In this
section, a \emph{product} means a distinct item identifier in the
retailer's system, while \emph{units sold} refers to the physical number
of units purchased. Thus, product variety and sales volume are separate
objects: a category can contain many distinct products even if only a
small number of units are sold in a given store-week. %

\begin{table}[t]
\centering
\caption{Dataset dimensions by product category.}
\label{tab:empirical_data_dimensions}
\begin{tabular}{lrrrr}
\toprule
Category & \# Stores & \# Products & Total Observations & Total Units Sold \\
\midrule
Blazers  & 365 &   790 &   622{,}642 &   173{,}471 \\
Dresses  & 371 & 4{,}182 & 1{,}669{,}206 &   781{,}011 \\
Leggings & 361 &   255 &   193{,}958 &    79{,}181 \\
Trousers & 372 & 3{,}394 & 1{,}552{,}620 &   614{,}147 \\
T-shirts & 375 & 3{,}984 & 2{,}783{,}502 & 1{,}894{,}169 \\
\bottomrule
\end{tabular}
\end{table}

Table~\ref{tab:empirical_data_dimensions} reports the scale of the
data by category. Across the five categories, the data contain 6.82 million
product-store-week observations and 3.54 million units sold over the
sample period. The categories differ substantially in both product
variety and sales volume. Leggings have a relatively small product
variety, while dresses, trousers, and T-shirts involve thousands of
distinct products. This variation lets us examine whether the value of
shrinkage persists across categories with different degrees of assortment
complexity.

\begin{table}[t]
\centering
\caption{Average store- and store-week-level scale by category.}
\label{tab:empirical_data_characteristics}
\begin{tabular}{lccc}
\toprule
Category
& \makecell{Avg.\ \# products\\ per store}
& \makecell{Avg.\ assortment size\\ per store-week}
& \makecell{Avg.\ units sold\\ per store-week} \\ 
\midrule
Blazers  & 178.7 &   9.7 &  4.7 \\
Dresses  & 493.9 &  39.2 & 24.9 \\
Leggings &  48.8 &   5.5 &  3.8 \\
Trousers & 447.5 &  31.4 & 18.6 \\
T-shirts & 754.5 & 101.9 & 69.6 \\
\bottomrule
\end{tabular}
\end{table}

Table~\ref{tab:empirical_data_characteristics} reports the
store-level and store-week-level scale by category. The local estimation problem is thin not because
the retailer is small, but because sales are fragmented across many
stores, weeks, and products. For a given category, a typical store-week contains many available products but only a limited number of realized purchases from which to infer substitution patterns. This is precisely the setting in which
separate store-level estimation can become unstable, while full pooling
can wash out meaningful local demand differences.

\subsection{Estimators, Rolling Design, and Decision-Aligned Cross-Validation}

Fix the category. For each store, we compare three estimators (store MLE, pooled MLE, cross-validated shrinkage) on the ranking task. Each estimate the store's \textit{conditional} MNL choice model for products in the category. Specifically, we condition on a
category purchase taking place at the store and model which product is chosen. This
captures substitution among the available products rather than total category
conversion. Hence, we avoid imposing a separate outside-option model for total category
traffic. 

The evaluation for each store uses a rolling out-of-sample design. A window of 8
training weeks, 2 validation weeks, and 2 test weeks is advanced through the panel. All three estimators are fit on the 8 training weeks,
so no method sees more data than another, and performance is measured on the 2 test
weeks. Because training always precedes the weeks a ranking is scored on, every
ranking uses only information available beforehand. For each validation or test week, the three methods rank a common candidate set \(\mathcal
A^c_{kt}\), the category-$c$ products stocked in store $k$ that week $t$ that were also observed in the window's
training weeks. Predicted probabilities are renormalized over \(\mathcal A^c_{kt}\),
so the methods differ only in how they rank the same products, not in which products
they consider.

Cross-validated shrinkage selects its weight on the validation weeks, using the
revenue-capture metric \eqref{eqn:rev_capture} rather than a likelihood
criterion. For each store $k$ and category $c$, the selected weight maximizes total validation
revenue capture,
\[
\widehat u^c_{k}
\in
\argmax_{u\in\mathcal W}
\sum_{t\in\mathcal T^{\mathrm{val}}}
\mathrm{Capture}_{kt}(\widehat{\beta}_{k}^{\mathrm{shr,c}}(u); \mathcal A^c_{kt}),
\]
where \(\mathrm{Capture}_{t}(\cdot)\) is the revenue capture metric \eqref{eqn:rev_capture} computed on validation week \(t\), and  \(\widehat{\beta}^{\mathrm{shr,c}}_{k}(u)\) is the  store $k$ shrinkage estimator for category $c$ with weight $u$. The selected weight is then applied in
the test weeks.

\begin{figure}[t]
\begin{center}
\includegraphics[width=\textwidth]{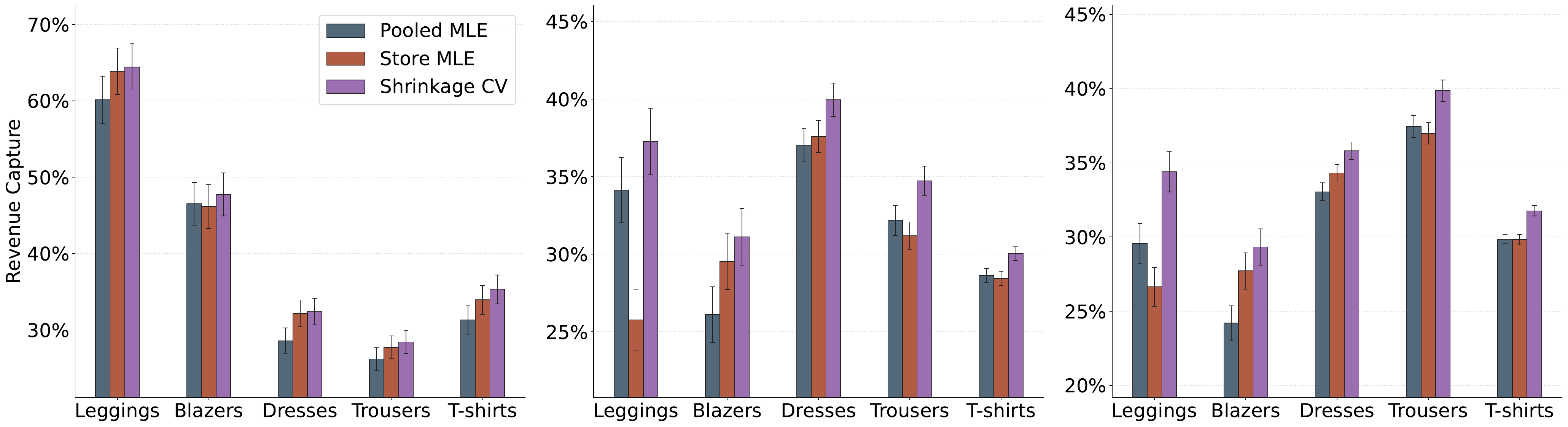} 
\caption{Out-of-sample revenue capture across the five apparel categories, evaluated at a \(m=20\%\) budget constraint. Store-level predictive performance is disaggregated by historical sales volume into Lower Volume stores (left), Medium Volume stores (center), and Higher Volume stores (right). Error bars represent standard errors.}
\label{fig:empirical_capture}
\end{center}
\end{figure}

\subsection{Revenue Capture Results}

We compare the estimators as the amount of local training data varies. Within each category, we rank
the test-period store-weeks by their recent training volume (i.e., the units sold in the 8
weeks before the test week), then we split them into terciles: a data-scarce third, a middle third, and a data-rich third. Figure~\ref{fig:empirical_capture}
reports mean revenue capture at \(m=20\%\) for each category and training volume tercile, data-scarce
on the left and data-rich on the right. 
Within a category and store-week the estimators rank the candidate products. We compare
their revenue capture within a category-tercile cell.

The results show a consistent advantage for adaptive partial pooling.
Shrinkage CV achieves the highest revenue capture in every reported
category-tercile cell. Averaging equally across the fifteen
category-tercile cells, Shrinkage CV captures \(36.84\%\) of realized
revenue, compared with \(34.13\%\) for Store MLE and \(33.67\%\) for
Pooled MLE. This corresponds to an average lift of \(2.71\) percentage
points over Store MLE and \(3.18\) percentage points over Pooled MLE.

The training volume level patterns are also informative. In the low-volume tercile, Shrinkage CV captures \(41.68\%\) of realized revenue on
average across categories, compared with \(40.79\%\) for Store MLE and
\(38.55\%\) for Pooled MLE. The gain over Store MLE is modest in this
group, \(0.89\) percentage points on average, because the store-specific
benchmark is already strong in several low-volume cells. However,
Shrinkage CV still improves over Pooled MLE by \(3.12\) percentage
points, showing that full pooling loses important local structure even
when local information is limited.

The gains over Store MLE become larger in the medium- and high-volume
terciles. In the medium-volume tercile, Shrinkage CV captures
\(34.62\%\) of realized revenue, compared with \(30.51\%\) for Store MLE
and \(31.62\%\) for Pooled MLE. In the high-volume tercile, the
corresponding values are \(34.23\%\), \(31.09\%\), and \(30.83\%\).
Thus, the average lift over Store MLE is \(4.11\) percentage points in
the medium-volume group and \(3.14\) percentage points in the
high-volume group. The average lift over Pooled MLE is similarly
stable: \(3.00\) percentage points in the medium-volume group and
\(3.41\) percentage points in the high-volume group.

The category-level comparisons show the same bias--variance tradeoff.
Pooled MLE is sometimes competitive, but it is too rigid when local
demand patterns differ across stores. Store MLE preserves local
heterogeneity, but it can be unstable when the local ranking problem is
thin relative to the number of products. Shrinkage CV provides a more
reliable middle ground. Across the fifteen category-tercile cells, its
lift over Store MLE ranges from \(0.26\) percentage points to
\(11.49\) percentage points, while its lift over Pooled MLE ranges from
\(1.21\) percentage points to \(5.13\) percentage points. The largest
gain over Store MLE occurs for leggings in the medium-volume tercile,
where revenue capture increases from \(25.78\%\) under Store MLE to
\(37.27\%\) under Shrinkage CV.

The empirical message is therefore stronger than a simple comparison of
average performance. Shrinkage CV is not merely mimicking either
benchmark. It improves on Pooled MLE in cells where full pooling washes
out local structure, and it improves on Store MLE in cells where
separate estimation is too noisy for reliable product prioritization.
The result is a consistent improvement in the featured-assortment
ranking problem across categories and across the local-information
distribution.

\subsection{Statistical Reliability of the Gains}

Figure~\ref{fig:empirical_capture} showed where Shrinkage CV captures more revenue
than the benchmarks, but not whether that gain is more than sampling noise. We now examine this. 

Fix the category~$c$. For each store $k$ and test week $t$, define the lift over Store MLE as
\[
\Delta^{\mathrm{store},c}_{kt}
=
\mathrm{Capture}_{kt}(\widehat{\beta}^{\mathrm{shr}, c}_k; \mathcal A^c_{kt})
-
\mathrm{Capture}_{kt}(\widehat{\beta}^c_k; \mathcal A^c_{kt}),
\]
and the lift over Pooled MLE as
\[
\Delta^{\mathrm{pool},c}_{kt}
=
\mathrm{Capture}_{kt}(\widehat{\beta}^{\mathrm{shr}, c}_k; \mathcal A^c_{kt})
-
\mathrm{Capture}_{kt}(\widehat{\beta}^{c}_0; \mathcal A^c_{kt}).
\]
Because each store-week is scored under all three estimators, we measure the gain
store-week by store-week and compute the confidence intervals and win rates on
these within-store-week differences. Differencing within a store-week removes its
idiosyncratic difficulty, which gives a sharper comparison than contrasting the
methods' separate averages.

Confidence intervals come from a clustered bootstrap over stores. For each row of
Table~\ref{tab:paired_lift}, we resample the stores in that data slice and keep all of a
store's store-weeks together, which accounts for within-store correlation across weeks. The win rate is the fraction of store-weeks in which Shrinkage CV captures more revenue than the corresponding benchmark. 

\begin{table}[h!]
\centering
\resizebox{\textwidth}{!}{
\begin{tabular}{l|ccc|ccc}
\toprule
 & \multicolumn{3}{c|}{\textbf{$\Delta^{\mathrm{store}}$ vs. Store MLE}} 
 & \multicolumn{3}{c}{\textbf{$\Delta^{\mathrm{pool}}$ vs. Pooled MLE}} \\
\textbf{Group} 
& \textbf{Mean lift (pp)} & \textbf{95\% CI} & \textbf{Win rate} 
& \textbf{Mean lift (pp)} & \textbf{95\% CI} & \textbf{Win rate} \\
\midrule
Overall 
& 2.64 & [2.17, 3.10] & 86.6\% 
& 3.11 & [2.55, 3.68] & 83.6\% \\
\midrule
Category: Blazers  
& 1.59 & [-0.04, 3.20] & 91.0\% 
& 4.21 & [2.85, 5.58] & 93.2\% \\
Category: Dresses  
& 1.44 & [1.00, 1.95] & 85.4\% 
& 3.13 & [1.98, 4.23] & 80.7\% \\
Category: Leggings 
& 7.53 & [5.69, 9.66] & 93.9\% 
& 4.15 & [2.46, 5.81] & 94.3\% \\
Category: Trousers 
& 2.48 & [1.66, 3.27] & 85.3\% 
& 2.41 & [1.37, 3.39] & 82.2\% \\
Category: T-shirts 
& 1.65 & [1.25, 2.06] & 80.9\% 
& 2.30 & [1.34, 3.31] & 73.7\% \\
\midrule
Bucket: Low-volume stores    
& 0.85 & [-0.24, 1.90] & 94.0\% 
& 3.13 & [1.60, 4.64] & 90.4\% \\
Bucket: Medium-volume stores 
& 3.63 & [2.76, 4.50] & 86.4\% 
& 2.86 & [2.02, 3.73] & 83.5\% \\
Bucket: High-volume stores   
& 2.98 & [2.48, 3.46] & 81.8\% 
& 3.31 & [2.67, 3.96] & 79.1\% \\
\bottomrule
\end{tabular}
}
\caption{Mean lift from Shrinkage CV at \(m=20\%\). Lifts are reported in percentage points. Confidence
intervals are clustered-bootstrap intervals resampling stores, and win
rates report the fraction of store-weeks in which Shrinkage CV
outperforms the corresponding benchmark.}
\label{tab:paired_lift}
\end{table}

Table~\ref{tab:paired_lift} shows that the gain from Shrinkage CV is
consistent. Overall, Shrinkage CV improves revenue capture by \(2.64\)
percentage points relative to Store MLE, with a clustered-bootstrap
\(95\%\) confidence interval of \([2.17, 3.10]\), and wins in \(86.6\%\)
of store-weeks. Relative to Pooled MLE, the average lift is \(3.11\)
percentage points, with confidence interval \([2.55, 3.68]\), and
Shrinkage CV wins in \(83.6\%\) of store-weeks. Both overall intervals
sit well above zero, so the advantage seen in
Figure~\ref{fig:empirical_capture} is more than sampling noise.

The category-level results reinforce the same conclusion. The lift over Store MLE is
positive in all five categories, largest for leggings at \(7.53\)
percentage points, though for blazers the confidence interval still
includes zero. The lift over Pooled MLE is positive and clear of zero in
every category, ranging from \(2.30\) percentage points for T-shirts to
\(4.21\) for blazers. %

The volume terciles show where the gain comes from. In the low-volume
tercile, the lift over Store MLE averages only \(0.85\) percentage points
and its confidence interval includes zero, yet Shrinkage CV still wins in
\(94.0\%\) of store-weeks.  Over the same cells the lift over Pooled MLE is
much larger, \(3.13\) percentage points with confidence interval
\([1.60, 4.64]\). Where local data are scarcest, the fitted ranking stays
close to Store MLE rather than pooling, and the clear gain is the one over
the pooled benchmark. In the medium- and high-volume terciles the lift over
Store MLE is larger and distinguishable from zero, \(3.63\) and \(2.98\)
percentage points, while the lift over Pooled MLE stays stable across
terciles, from \(2.86\) to \(3.31\) percentage points.

These findings caution against reading the empirical pattern as a simple monotone one in which the lowest-volume store-weeks gain the most.   The gain over Store MLE is in fact smallest there.  What governs the gain is how much local information a cell carries relative to the difficulty of its ranking problem, not local sample size alone. Low-volume store-weeks have fewer observations,
but they often rank fewer products, so Store MLE already ranks them well and leaves little room to improve. Higher-volume store-weeks carry more
observations but often broader assortments and more complex substitution,
so the information available for ranking can stay thin even outside the
lowest-volume cells.

The empirical conclusion is therefore clear. Shrinkage CV produces reliable out-of-sample improvements over both local estimation and pooled estimation. The
paired lifts show these improvements to be consistent across store-weeks,
categories, and volume terciles.

\section{Conclusion}
\label{sec:conclusion}

Large retail chains often have extensive sales data across their networks but limited information for estimating demand at individual stores. This creates a fundamental tradeoff: estimating demand separately for each store can be unreliable, while pooling data across stores can obscure meaningful differences in customer preferences. We study how transfer learning can balance these competing effects under a multinomial logit (MNL) demand model, using shrinkage estimation to selectively borrow information across stores.

Our main methodological contribution is to extend the finite-sample analysis of information transfer to nonlinear structural choice models. By exploiting the self-concordant-like properties of the MNL likelihood, we establish performance guarantees that characterize when information transfer improves estimation accuracy and how the appropriate degree of shrinkage depends on local sample size, model dimensionality, and cross-store heterogeneity. We further derive regret bounds for assortment and pricing decisions. Numerical experiments and an empirical study using data from a large apparel retailer demonstrate the operational benefits of shrinkage relative to separate estimation and full pooling.

Our findings highlight the value of information transfer for revenue management in heterogeneous markets. Rather than relying exclusively on local data or imposing a common demand model across stores, retailers can improve their decisions by selectively borrowing information from related markets. More broadly, our analysis provides a foundation for extending transfer learning to richer nonlinear choice models and more complex revenue management settings.

\bibliographystyle{ormsv080} %
\bibliography{references} 

\ECSwitch

\section{Auxiliary Geometric Results for the MNL Likelihood}
\label{app:geometry}

This appendix collects the auxiliary geometric results used in the main text.  We keep the notation of Sections~\ref{sec:multistore_mnl}--\ref{sec:theory}.  Throughout, the constants in the bounds below may change from line to line, but depend only on the primitive quantities identified in the main text.

\subsection{Preliminaries}
\label{sec:preliminaries}

Subscripted norms follow a single convention: for a symmetric positive definite matrix
$A\succ0$,
\[
\|v\|_{A}:=\sqrt{v^\top A v},
\qquad
\langle u,v\rangle_{A}:=u^\top A v,
\]
so a matrix subscript names the matrix inducing the norm. The exceptions are the Euclidean
norm $\|\cdot\|_2$, the operator norm $\|\cdot\|_{\mathrm{op}}$, and the sub-Gaussian norm
$\|\cdot\|_{\psi_2}$, where the subscript names the norm itself. In this notation, the local norm and inner product of Section~\ref{sec:quadratic_appx} is $\|\cdot\|_k = \| \cdot \|_{H_k^\star}$ and $\langle \cdot, \cdot \rangle_k = \langle \cdot, \cdot \rangle_{H_k^\star}$, with $H_k^\star \succ 0$ by Assumption~\ref{ass:fisher}. 

For a parameter $\beta$, define the first two moments of $X^\beta$ as the following functions of $\beta$:
\begin{align} 
\mu_X(\beta) &:= \mathbb{E}\bigl[X^\beta\bigr] = \sum_{i=0}^N P_i(\beta)
X_i, \label{eqn:x_mu}\\
\Sigma_X(\beta) &:= \Cov\bigl(X^\beta\bigr) = \sum_{i=0}^N P_i(\beta)
\bigl(X_i- \mu_X(\beta)\bigr)\bigl(X_i- \mu_X(\beta)\bigr)^\top. \label{eqn:x_sigma}
\end{align}
Using this notation, note that $\Sigma_X^k = \Sigma_X(\beta_k^\star)$.

Several bounds below require the pooled population targets to retain an
interior margin. Set
\begin{equation}
C_{\mathrm{het}}
:=
\frac{2BM}{m}\,e^{2BD_\Theta},
\qquad
R_\Theta(\delta)
:=
2\delta+C_{\mathrm{het}}e^{12B\delta}\delta^2,
\label{eqn:pooled_interior_radius}
\end{equation}
and let $\Delta_0$ be the unique positive solution of
$R_\Theta(\delta)=\rho_\Theta/2$. This solution exists because
$R_\Theta$ is continuous and strictly increasing, with
$R_\Theta(0)=0$ and $R_\Theta(\delta)\to\infty$. Lemma~\ref{lem:app_pooled_bias}
shows that, for $\Delta\le\Delta_0$, each pooled population target is
at least $\rho_\Theta/2$ from the boundary of $\Theta$. The radius
$\Delta_0$ depends only on
$(B,m,M,D_\Theta,\rho_\Theta)$, and continuous $\Delta$-dependent
constants are therefore bounded uniformly on $[0,\Delta_0]$.

We also use the third-order directional derivative notation
\[
\nabla^3 f(x)[a,b,c]
:=
\sum_{i,j,\ell=1}^d
\frac{\partial^3 f(x)}{\partial x_i\partial x_j\partial x_\ell}
\,a_i b_j c_\ell.
\]
For twice- and thrice-differentiable functions, the identity
\[
\frac{d}{dt}\Big|_{t=0} a^\top \nabla^2 f(x+t c)a
=
\nabla^3 f(x)[a,a,c]
\]
will be used repeatedly.

\subsection{Hessian representation and the SCL property}

\begin{lemma}
\label{lem:app_hessian_representation}
For any store $k$, any parameter $\beta\in\mathbb R^d$, and any realized dataset $Y_k$,
\[
\nabla^2 L_k(\beta;Y_k)=n_k \Sigma_X(\beta),
\]
where $\Sigma_X(\cdot)$ is defined in \eqref{eqn:x_sigma}. 
In particular, the Hessian of the empirical loss $L_k$ is deterministic.
\end{lemma}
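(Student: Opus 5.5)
The plan is a direct computation that splits the empirical loss \eqref{eq:sec3_store_loss} into a data-dependent linear part and a data-free log-partition part. Write
\[
L_k(\beta;Y_k)=-\beta^\top\Bigl(\sum_{i\in\mathcal N}c_{ik}X_i\Bigr)+n_k A(\beta),
\qquad
A(\beta):=\log\Bigl(1+\sum_{j\in\mathcal N}\exp(\beta^\top X_j)\Bigr).
\]
The first term is linear in $\beta$, and its coefficient vector is the only place the data $Y_k$ enters. It therefore contributes nothing to the Hessian. This gives $\nabla^2L_k(\beta;Y_k)=n_k\nabla^2A(\beta)$, which already shows that the Hessian does not depend on $Y_k$. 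What remains is to identify $\nabla^2A(\beta)$ with $\Sigma_X(\beta)$.

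For this, I will use the convention $X_0=0$ to write the partition function as $Z(\beta)=\sum_{i=0}^N\exp(\beta^\top X_i)$, so that $A=\log Z$ and $P_i(\beta)=\exp(\beta^\top X_i)/Z(\beta)$ for $i=0,\dots,N$. Differentiating once gives
\[
\nabla A(\beta)=\frac{\sum_i X_i e^{\beta^\top X_i}}{Z(\beta)}=\sum_{i=0}^N P_i(\beta)X_i=\mu_X(\beta).
\]
Differentiating again by the quotient rule gives
\[
\nabla^2A(\beta)=\sum_i P_i(\beta)X_iX_i^\top-\mu_X(\beta)\mu_X(\beta)^\top .
\]
Because $\sum_iP_i(\beta)=1$, this equals $\sum_iP_i(\beta)(X_i-\mu_X(\beta))(X_i-\mu_X(\beta))^\top$, which is exactly $\Sigma_X(\beta)$ in \eqref{eqn:x_sigma}.

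Multiplying by $n_k$ yields the claim. Since the resulting expression involves only $\beta$ and the fixed features $\{X_i\}$, the Hessian is deterministic.

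None of these steps is a real obstacle. The only point needing care is the bookkeeping of the outside option: the index $i=0$ must be included in the sums defining $\mu_X$ and $\Sigma_X$. Its contribution vanishes in $\mu_X$ because $X_0=0$, but its probability mass $P_0(\beta)$ is needed for the normalization $\sum_i P_i(\beta)=1$ used in the covariance identity.
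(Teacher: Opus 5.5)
Your proposal is correct and follows essentially the same route as the paper: the count term is linear in $\beta$ and contributes no Hessian, and differentiating the log-partition function twice gives $\mu_X(\beta)$ and then $\Sigma_X(\beta)$. Your remark that the outside option $i=0$ must be kept in the sums (for the normalization $\sum_i P_i(\beta)=1$) matches the paper's convention in \eqref{eqn:x_sigma}.
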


\begin{proof}{Proof of Lemma~\ref{lem:app_hessian_representation}.}
Write
\[
g(\beta):=\log\!\Bigl(1+\sum_{j=1}^N e^{\beta^\top X_j}\Bigr)
=\log\!\Bigl(\sum_{j=0}^N e^{\beta^\top X_j}\Bigr).
\]
The negative log-likelihood can be written as
\[
L_k(\beta;Y_k)
=-\sum_{i=1}^N c_{ik}\,\beta^\top X_i+n_k g(\beta),
\]
so the linear count term has zero Hessian and
\[
\nabla^2 L_k(\beta;Y_k)=n_k\nabla^2 g(\beta).
\]
Differentiating $g$ once yields
\[
\nabla g(\beta)=\sum_{i=0}^N P_i(\beta)X_i = \mu_X(\beta), 
\]
where $\mu_X(\cdot)$ is defined in \eqref{eqn:x_mu}.  Differentiating again gives
\[
\nabla^2 g(\beta)
=
\sum_{i=0}^N P_i(\beta)X_iX_i^\top- \mu_X(\beta)\mu_X(\beta)^\top
=
\Sigma_X(\beta),
\]
which proves the claim.
\hfill \Halmos
\end{proof}

\begin{corollary}
\label{cor:app_population_hessian}
For any store \(k\) and any parameter \(\beta\in\mathbb R^d\),
\[
\nabla^2 L_k^\star(\beta)=n_k \Sigma_X(\beta),
\]
where $\Sigma_X(\cdot)$ is defined in \eqref{eqn:x_sigma}.  In particular,
\[
\nabla^2 L_k^\star(\beta_k^\star)
=
n_k\Sigma_X(\beta_k^\star)
=
n_k\Sigma_X^k.
\]
\end{corollary}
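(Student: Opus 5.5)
The plan is to reduce the population Hessian to the empirical Hessian computed in Lemma~\ref{lem:app_hessian_representation}. By \eqref{eq:sec3_pop_loss}, $L_k^\star(\beta)=\mathbb{E}[L_k(\beta;Y_k)]$. Substituting \eqref{eq:sec3_store_loss} and using linearity of expectation gives an explicit closed form:
\[
L_k^\star(\beta)
=
-\sum_{i\in\mathcal N}\mathbb{E}[c_{ik}]\,\beta^\top X_i
+
n_k\log\Bigl(1+\sum_{j\in\mathcal N}e^{\beta^\top X_j}\Bigr).
\]
Here $\mathbb{E}[c_{ik}]=n_kP_i(\beta_k^\star)$ is a constant that does not depend on $\beta$. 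So $L_k^\star$ equals a linear function of $\beta$ plus $n_k g(\beta)$, with $g$ exactly as in the proof of Lemma~\ref{lem:app_hessian_representation}.

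The linear term has zero Hessian. The calculation in that lemma shows $\nabla^2 g(\beta)=\Sigma_X(\beta)$, so
\[
\nabla^2 L_k^\star(\beta)=n_k\Sigma_X(\beta)\quad\text{for every }\beta\in\mathbb R^d.
\]
One could instead argue by interchanging differentiation and expectation. That step is trivially justified here, because $Y_k$ takes finitely many values and $L_k$ is smooth in $\beta$. The closed-form route above avoids the issue altogether.

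The second display is then immediate. Evaluate at $\beta=\beta_k^\star$ and use the notation $\Sigma_X^k=\Cov(X^{\beta_k^\star})=\Sigma_X(\beta_k^\star)$ from Section~\ref{subsec:assumptions}, which the paper records right after \eqref{eqn:x_sigma}.

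There is no genuine obstacle. The only point requiring care is noticing that the data enter $L_k$ only through the linear count term. This is already explicit in Lemma~\ref{lem:app_hessian_representation}, which shows the empirical Hessian is deterministic and therefore equal to its own expectation.
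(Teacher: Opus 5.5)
Your proof is correct and essentially matches the paper's: both rest on the computation $\nabla^2 g(\beta)=\Sigma_X(\beta)$ from Lemma~\ref{lem:app_hessian_representation}, and then evaluate at $\beta_k^\star$. The only difference is the order of operations. The paper differentiates first and then interchanges expectation and differentiation, which is justified because the empirical Hessian does not depend on the counts. You instead take the expectation first, obtaining $L_k^\star(\beta)=-n_k\sum_i P_i(\beta_k^\star)X_i^\top\beta+n_k g(\beta)$; this is the same closed form the paper uses in the proof of Theorem~\ref{thm:app_mnl_scl}, and it avoids the interchange step entirely.
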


\begin{proof}{Proof.}
By Lemma~\ref{lem:app_hessian_representation},
\[
\nabla^2 L_k(\beta;Y_k)=n_k\Sigma_X(\beta)
\]
for every realized dataset \(Y_k\). Since the right-hand side does not
depend on the realized counts, we may interchange expectation and
differentiation to obtain
\[
\nabla^2 L_k^\star(\beta)
=
\nabla^2 \mathbb E[L_k(\beta;Y_k)]
=
\mathbb E[\nabla^2 L_k(\beta;Y_k)]
=
n_k\Sigma_X(\beta)
\]
Evaluating at \(\beta=\beta_k^\star\) gives the claim. \hfill \Halmos
\end{proof}

\begin{definition}
A convex and thrice continuously differentiable function $f:\mathbb R^d\to\mathbb R$ is called $R$-self-concordant-like if for all \(x,a,b\in\mathbb R^d\),
\begin{equation}
\label{eq:app_scl_definition}
\bigl|\nabla^3 f(x)[a,a,b]\bigr|
\le
R\|b\|_2\,a^\top\nabla^2 f(x)a.
\end{equation}
\end{definition}

\begin{theorem}
\label{thm:app_mnl_scl}
Let $g(\beta):=\log\bigl(1+\sum_{j=1}^N e^{\beta^\top X_j}\bigr)$. Under Assumption~\ref{ass:bounded}, every function of the form
$\beta\mapsto a^\top\beta+n\,g(\beta)$, with $a\in\mathbb R^d$ and $n>0$, is
$2B$-self-concordant-like. In particular the empirical loss $L_k(\beta;Y_k)$, the population loss $L_k^\star(\beta)$, and their pooled and leave-one-out counterparts $L_0(\beta;Y)$, $L_0^{(-k)}(\beta;Y)$, and $L_0^{(-k),\star}(\beta)$ are all $2B$-self-concordant-like.
\end{theorem}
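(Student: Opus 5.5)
The plan is to reduce everything to the single function $g$. Since $\beta\mapsto a^\top\beta$ has zero second and third derivatives, the Hessian and third-derivative tensor of $f(\beta)=a^\top\beta+n\,g(\beta)$ are exactly $n\nabla^2 g$ and $n\nabla^3 g$. Both sides of \eqref{eq:app_scl_definition} then scale linearly in $n>0$, so it suffices to prove that $g$ itself is $2B$-self-concordant-like. Convexity and $C^3$ smoothness of $g$ are immediate: $g$ is a log-sum-exp of affine functions, and Lemma~\ref{lem:app_hessian_representation} gives $\nabla^2 g=\Sigma_X(\beta)\succeq 0$.

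The key computation is the third directional derivative. Fix $\beta,a,b$ and write $X=X^\beta$ for the random chosen feature, with $P_i(\beta)$ including the outside option $X_0=0$. By the identity recorded in the preliminaries,
\[
\nabla^3 g(\beta)[a,a,b]=\frac{d}{dt}\Big|_{t=0} a^\top\Sigma_X(\beta+tb)a .
\]
From the softmax form, $\frac{d}{dt}P_i(\beta+tb)=P_i\,(b^\top X_i-b^\top\mu_X)$. Differentiating $\Var(a^\top X)=\E[(a^\top X)^2]-(\E a^\top X)^2$ under this tilt gives the standard exponential-family third cumulant:
\[
\nabla^3 g(\beta)[a,a,b]=\E\!\left[(a^\top X-a^\top\mu_X)^2\,(b^\top X-b^\top\mu_X)\right].
\]
I would verify this by direct differentiation. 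Equivalently, it is the third derivative of the cumulant generating function $g$ along $(a,a,b)$.

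It remains to bound the third cumulant by the variance. The weight $(a^\top X-a^\top\mu_X)^2$ is nonnegative, so
\[
\bigl|\nabla^3 g[a,a,b]\bigr|\le \sup_i |b^\top(X_i-\mu_X)|\cdot \E(a^\top X-a^\top\mu_X)^2 .
\]
The expectation equals $a^\top\nabla^2 g(\beta)a$. For the supremum, every $X_i$ (including $X_0=0$) has norm at most $B$ by Assumption~\ref{ass:bounded}, and $\mu_X$ is a convex combination of them, so $\|\mu_X\|_2\le B$. Hence $|b^\top(X_i-\mu_X)|\le\|b\|_2\,\|X_i-\mu_X\|_2\le 2B\|b\|_2$, which yields constant $R=2B$.

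Finally, each named loss fits the template. $L_k(\beta;Y_k)$ has $a=-\sum_i c_{ik}X_i$ and $n=n_k$. $L_k^\star$ has $a=-n_k\sum_i P_i(\beta_k^\star)X_i$ and $n=n_k$. $L_0$ and $L_0^{(-k)}$ sum the linear terms and have $n=\sum_k n_k$ or $N_{-k}$. The pooled population losses are analogous. The only real obstacle is getting the third-derivative formula right, in particular the centering of the $b$ factor. I expect this to be routine once the cumulant interpretation is used.
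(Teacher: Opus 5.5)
Your proposal is correct and follows the paper's proof essentially step for step: reduce to $g$ using linearity and scaling in $n$, write $\nabla^3 g(\beta)[u,u,v]$ as the mixed central moment $\E\bigl[(u^\top X^\beta-u^\top\mu_X)^2(v^\top X^\beta-v^\top\mu_X)\bigr]$, bound the centered $v$-factor by $2B\|v\|_2$ using $\|X_i\|_2\le B$ and $\|\mu_X(\beta)\|_2\le B$, and then check that each listed loss has the form $a^\top\beta+n\,g(\beta)$. Your remarks on convexity and $C^3$ smoothness, and on including the outside option $X_0=0$ in the bound, are small details the paper leaves implicit.
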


\begin{proof}{Proof of Theorem~\ref{thm:app_mnl_scl}.}
Consider any function $a^\top \beta + n g(\beta)$. The linear term has zero second and third derivatives, and \eqref{eq:app_scl_definition} is invariant under multiplication by $n > 0$ because both sides scale linearly, so it is enough to prove that $g$ is $2B$-self-concordant-like. Differentiating $g$ once yields $\nabla g(\beta)= \mu_X(\beta)$, where $\mu_X(\cdot)$ is defined in \eqref{eqn:x_mu}. Differentiating again gives the covariance representation
\[
\nabla^2 g(\beta)
=
\Sigma_X(\beta)
\]

Recall $X^\beta$ is the random feature vector with $\mathbb P(X^\beta = X_i) = P_i(\beta)$ and whose mean and covariance are $\mu_X(\beta)$ and $\Sigma_X(\beta)$, respectively. Fix directions $u,v\in\mathbb R^d$ and define random variables
\[
a:=u^\top X^\beta,
\qquad
b:=v^\top X^\beta.
\]
Note that $a$ and $b$ can only take the following values
\[
a_i:=u^\top X_i,
\qquad
b_i:=v^\top X_i,
\qquad i=0,\dots,N.
\]

Then the covariance identity implies
\begin{equation}
\label{eq:app_scl_hess_repr}
u^\top \nabla^2 g(\beta)u
=
\mathbb E\bigl[(a-\mathbb E[a])^2\bigr].
\end{equation}
To compute the mixed third derivative, consider
\[
\psi(t):=u^\top \nabla^2 g(\beta+t v)u.
\]
Writing $P_i(t):=P_i(\beta+t v)$ and $\bar a(t):=\sum_j P_j(t)a_j$, we have
\[
\psi(t)
=
\sum_{i=0}^N P_i(t)\bigl(a_i-\bar a(t)\bigr)^2.
\]
Differentiating at $t=0$ gives
\[
\psi'(0)
=
\sum_{i=0}^N \Bigl(\partial_v P_i(\beta)\Bigr)(a_i-\bar a)^2,
\qquad
\bar a:=\mathbb E[a],
\]
because the derivative of the mean term cancels after summation. The quotient rule yields
\[
\partial_v P_i(\beta)
=
P_i(\beta)\bigl(b_i-\bar b\bigr),
\qquad
\bar b:=\mathbb E[b].
\]
Hence
\[
\psi'(0)
=
\sum_{i=0}^N P_i(\beta)\,(b_i-\bar b)\,(a_i-\bar a)^2
=
\mathbb E\bigl[(a-\mathbb E[a])^2(b-\mathbb E[b])\bigr].
\]
Since $\psi'(0)=\nabla^3 g(\beta)[u,u,v]$, we obtain
\begin{equation}
\label{eq:app_scl_third_repr}
\nabla^3 g(\beta)[u,u,v]
=
\mathbb E\bigl[(a-\mathbb E[a])^2(b-\mathbb E[b])\bigr].
\end{equation}
Under Assumption~\ref{ass:bounded}, $\|X_i\|_2\le B$ for every $i$, so
\[
|b_i|=|v^\top X_i|\le B\|v\|_2,
\qquad i=0,\dots,N.
\]
Because $\mathbb E[b]$ is a convex combination of the $b_i$'s, it also satisfies
\[
|\mathbb E[b]|\le B\|v\|_2,
\]
and therefore
\[
|b_i-\mathbb E[b]|
\le |b_i|+|\mathbb E[b]|
\le 2B\|v\|_2.
\]
Using \eqref{eq:app_scl_third_repr} and then \eqref{eq:app_scl_hess_repr},
\begin{align*}
\bigl|\nabla^3 g(\beta)[u,u,v]\bigr|
&=
\Bigl|\mathbb E\bigl[(a-\mathbb E[a])^2(b-\mathbb E[b])\bigr]\Bigr| \\
&\le
\sup_i |b_i-\mathbb E[b]|\,\mathbb E\bigl[(a-\mathbb E[a])^2\bigr] \\
&\le 2B\|v\|_2\,u^\top \nabla^2 g(\beta)u,
\end{align*}
which is exactly the mixed self-concordant-like inequality with constant $2B$. 

Each of the stated losses has this form. By \eqref{eq:sec3_store_loss}, $L_k(\beta;Y_k) = -\sum_i c_{ik} X_i^\top \beta + n_k g(\beta)$, and taking expectations in \eqref{eq:sec3_pop_loss} gives $L_k^\star(\beta) = - n_k\sum_i P_i(\beta_k^\star) X_i^\top \beta + n_k g(\beta)$. The pooled and leave-one-out losses are sums of such functions, hence of the same form.\hfill \Halmos
\end{proof}

\begin{lemma}
\label{lem:app_hessian_stability}
Let $f$ be $R$-self-concordant-like. For any $x,y\in\mathbb R^d$,
\[
e^{-R\|y-x\|_2}\,\nabla^2 f(x)
\preceq
\nabla^2 f(y)
\preceq
e^{R\|y-x\|_2}\,\nabla^2 f(x).
\]
\end{lemma}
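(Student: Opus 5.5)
The plan is to reduce the matrix inequality to a scalar differential inequality along the segment from $x$ to $y$, and then integrate it. Fix an arbitrary direction $a\in\mathbb R^d$ and set $h:=y-x$. Define
\[
\phi(t):=a^\top\nabla^2 f(x+th)\,a,\qquad t\in[0,1].
\]
Since $f$ is thrice continuously differentiable, $\phi$ is continuously differentiable. By the identity recorded in the preliminaries, $\phi'(t)=\nabla^3 f(x+th)[a,a,h]$. Convexity of $f$ gives $\phi(t)\ge 0$ for all $t$. Applying the self-concordant-like inequality \eqref{eq:app_scl_definition} at the point $x+th$ with directions $a$ and $b=h$ yields
\[
|\phi'(t)|\le R\|h\|_2\,\phi(t),\qquad t\in[0,1].
\]

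Next we integrate this Grönwall-type inequality. If $\phi(t)>0$ on all of $[0,1]$, then $\log\phi$ is differentiable and $|(\log\phi)'(t)|\le R\|h\|_2$. Integrating from $0$ to $1$ gives
\[
e^{-R\|h\|_2}\phi(0)\le\phi(1)\le e^{R\|h\|_2}\phi(0).
\]

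The main obstacle is the degenerate case where $\phi$ vanishes somewhere, since $\log\phi$ is then undefined. To handle it, avoid logarithms and work with $\psi_\pm(t):=e^{\mp R\|h\|_2 t}\phi(t)$. The differential inequality gives $\psi_+'(t)\le 0$ and $\psi_-'(t)\ge 0$ everywhere. Hence $\psi_+$ is nonincreasing and $\psi_-$ is nondecreasing, and comparing the values at $t=0$ and $t=1$ gives both inequalities directly, with no positivity assumption.

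Finally, $\phi(0)=a^\top\nabla^2 f(x)a$ and $\phi(1)=a^\top\nabla^2 f(y)a$. Since $a$ was arbitrary, the two scalar bounds hold for every $a$, and this is exactly the Loewner-order statement
\[
e^{-R\|y-x\|_2}\,\nabla^2 f(x)\preceq\nabla^2 f(y)\preceq e^{R\|y-x\|_2}\,\nabla^2 f(x).
\]
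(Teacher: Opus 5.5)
Your proof is correct and follows the same route as the paper: restrict the Hessian quadratic form to the segment from $x$ to $y$, bound its derivative by the self-concordant-like inequality, and integrate. Your integrating-factor step with $e^{\mp R\|h\|_2 t}\phi(t)$ also covers the case where $\phi$ vanishes somewhere, which the paper's proof handles only implicitly by dividing by $\psi(t)$ ``whenever $\psi(t)>0$''.
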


\begin{proof}{Proof of Lemma~\ref{lem:app_hessian_stability}.}
Fix $u\in\mathbb R^d$ and define
\[
\psi(t):=u^\top \nabla^2 f\bigl(x+t(y-x)\bigr)u,
\qquad t\in[0,1].
\]
Then
\[
\psi'(t)=\nabla^3 f\bigl(x+t(y-x)\bigr)[u,u,y-x].
\]
By the SCL property,
\[
|\psi'(t)|\le R\|y-x\|_2\,\psi(t).
\]
Whenever $\psi(t)>0$, this implies
\[
-R\|y-x\|_2\le \frac{\psi'(t)}{\psi(t)}\le R\|y-x\|_2.
\]
Integrating over $t\in[0,1]$ gives
\[
e^{-R\|y-x\|_2}\psi(0)
\le \psi(1) \le e^{R\|y-x\|_2}\psi(0),
\]
and since this holds for every $u$, the matrix inequality follows.
\hfill \Halmos
\end{proof}

\begin{lemma}
\label{lem:app_quadratic_approx}
Let $f$ be $R$-self-concordant-like, and fix $x,y\in\mathbb R^d$. Then
\[
f(y)-f(x)-\nabla f(x)^\top(y-x)
=
\frac12 (y-x)^\top \nabla^2 f(x)(y-x)+r_f(x,y),
\]
where
\[
|r_f(x,y)|
\le
\frac12\|y-x\|_{\nabla^2 f(x)}^2\,
\omega_{\mathrm{SCL}}\bigl(R\|y-x\|_2\bigr),
\]
with
\[
\omega_{\mathrm{SCL}}(a)
:=
\begin{cases}
\dfrac{2e^a-(a^2+2a+2)}{a^2}, & a>0,\\[4pt]
0, & a=0.
\end{cases}
\]
Moreover, $\omega_{\mathrm{SCL}}(a)=\frac{a}{3}+O(a^2)$ as $a\downarrow0$.
\end{lemma}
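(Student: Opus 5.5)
Set $h:=y-x$, $r:=\|h\|_2$, and $\phi(t):=f(x+th)$ for $t\in[0,1]$. Taylor's formula with integral remainder gives
\[
f(y)-f(x)-\nabla f(x)^\top h
=\int_0^1(1-t)\,\phi''(t)\,dt,
\qquad
\phi''(t)=h^\top\nabla^2 f(x+th)h .
\]
Since $\tfrac12\phi''(0)=\int_0^1(1-t)\phi''(0)\,dt$, the remainder can be written as
\[
r_f(x,y)=\int_0^1(1-t)\bigl(\phi''(t)-\phi''(0)\bigr)\,dt .
\]

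To bound this integrand I will apply Lemma~\ref{lem:app_hessian_stability} with the points $x$ and $x+th$, whose distance is $t r$. Sandwiching the quadratic form in the direction $h$ gives
\[
\bigl(e^{-Rtr}-1\bigr)\phi''(0)\le \phi''(t)-\phi''(0)\le \bigl(e^{Rtr}-1\bigr)\phi''(0).
\]
Because $|e^{-s}-1|\le e^{s}-1$ for $s\ge0$, this yields
\[
|\phi''(t)-\phi''(0)|\le \bigl(e^{Rtr}-1\bigr)\,\|h\|_{\nabla^2 f(x)}^2 .
\]

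It remains to evaluate the integral $\int_0^1(1-t)(e^{at}-1)\,dt$ with $a=Rr>0$. Integrating by parts, $\int_0^1(1-t)e^{at}\,dt=(e^a-1-a)/a^2$, and $\int_0^1(1-t)\,dt=\tfrac12$. Hence
\[
\int_0^1(1-t)(e^{at}-1)\,dt=\frac{e^a-1-a}{a^2}-\frac12=\frac{2e^a-(a^2+2a+2)}{2a^2}=\tfrac12\,\omega_{\mathrm{SCL}}(a),
\]
which is exactly the claimed bound.

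The case $a=0$ means $h=0$, and then both sides vanish. The asymptotic claim follows from the expansion $2e^a=2+2a+a^2+a^3/3+O(a^4)$, which gives $\omega_{\mathrm{SCL}}(a)=a/3+O(a^2)$.

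There is no serious obstacle. The only step needing care is Lemma~\ref{lem:app_hessian_stability} when $\nabla^2 f(x)$ is degenerate along $h$. In that case the lower bound in that lemma applied with the roles of the points swapped shows $\phi''(t)=0$ for all $t$, so the remainder is $0$ and the bound still holds.
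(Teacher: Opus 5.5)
Your proof is correct and reaches exactly the paper's constant, by a slightly different route. The paper expands to third order, $r_f(x,y)=\int_0^1\tfrac12(1-t)^2\phi'''(t)\,dt$. It bounds $|\phi'''(t)|\le R\|h\|_2\,\phi''(t)$ directly from the SCL definition, uses only the upper half of Lemma~\ref{lem:app_hessian_stability}, $\phi''(t)\le e^{Rt\|h\|_2}\phi''(0)$, and evaluates $\int_0^1(1-t)^2e^{at}\,dt=\bigl(2e^a-(a^2+2a+2)\bigr)/a^3$. You stop at the second-order remainder and apply both halves of the Hessian sandwich to $\phi''(t)-\phi''(0)$.

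The two arguments are related by a single integration by parts: $\int_0^1(1-t)\bigl(\phi''(t)-\phi''(0)\bigr)\,dt=\int_0^1\tfrac12(1-t)^2\phi'''(t)\,dt$ and $\int_0^1(1-t)(e^{at}-1)\,dt=\tfrac a2\int_0^1(1-t)^2e^{at}\,dt$. Your estimate $|\phi''(t)-\phi''(0)|\le(e^{at}-1)\phi''(0)$ is just the integrated form of the paper's $|\phi'''(t)|\le a\,e^{at}\phi''(0)$, which is why the bounds coincide.

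Your version has two small advantages. It needs only the Hessian-stability conclusion and no third derivatives, so it applies to any convex $C^2$ function with that sandwich. It also checks the $a/3+O(a^2)$ expansion, which the paper asserts without proof.

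Your degenerate-case remark is redundant. When $\phi''(0)=0$, the sandwich you already wrote gives $0\le\phi''(t)\le e^{Rtr}\cdot 0$. Moreover, Lemma~\ref{lem:app_hessian_stability} holds without dividing by $\psi$, since $e^{-ct}\psi(t)$ is nonincreasing and $e^{ct}\psi(t)$ is nondecreasing. Likewise, ``$a=0$ means $h=0$'' tacitly assumes $R>0$. If $R=0$, the sandwich makes $\phi''$ constant and the remainder is zero, so the bound still holds.
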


\begin{proof}{Proof of Lemma~\ref{lem:app_quadratic_approx}.}
Let $v:=y-x$ and define $\phi(t):=f(x+t v)$ for $t\in[0,1]$. Taylor's theorem with integral remainder gives
\[
\phi(1)=\phi(0)+\phi'(0)+\frac12\phi''(0)
+\int_0^1 \frac12(1-t)^2\phi'''(t)dt.
\]
Since
\[
\phi''(t)=v^\top \nabla^2 f(x+t v)v,
\qquad
\phi'''(t)=\nabla^3 f(x+t v)[v,v,v],
\]
the SCL property yields
\[
|\phi'''(t)|\le R\|v\|_2\,\phi''(t).
\]
By Lemma~\ref{lem:app_hessian_stability},
\[
\phi''(t)\le e^{R t\|v\|_2}\phi''(0).
\]
Substituting into the integral remainder and computing
\[
\int_0^1 (1-t)^2 e^{a t}dt
=
\frac{2e^a-(a^2+2a+2)}{a^3}
\qquad (a>0)
\]
gives the displayed bound, with $\phi''(0)=\|v\|_{\nabla^2 f(x)}^2$.
\hfill \Halmos
\end{proof}

\section{Finite-Sample Concentration of the MLE}
\label{app:mle}

This section collects the concentration results used to control the local and leave-one-out pooled MLEs.

\begin{lemma}
\label{lem:app_mle_implicit}
Let $\widehat\beta_k$ denote the constrained MLE in store $k$. On the
event $\{\widehat\beta_k\in\operatorname{int}(\Theta)\}$, define the
path-average covariance
\[
\bar\Sigma_X^k
:=
\int_0^1 \Sigma_X\bigl({\beta_k^\star+t(\widehat\beta_k-\beta_k^\star)}\bigr)dt,
\]
where $\Sigma_X(\cdot)$ is defined in \eqref{eqn:x_sigma}. Then
\[
\widehat\beta_k-\beta_k^\star
=
-\frac{1}{n_k}(\bar\Sigma_X^k)^{-1}s_k(\beta_k^\star),
\]
where $s_k(\beta):=\nabla L_k(\beta;Y_k)$.
\end{lemma}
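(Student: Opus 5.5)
The plan is the standard first-order argument: a vanishing score at an interior minimizer, combined with the integral form of the mean value theorem along the segment from $\beta_k^\star$ to $\widehat\beta_k$.

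\emph{Step 1 (first-order condition).} On the event $\{\widehat\beta_k\in\operatorname{int}(\Theta)\}$, the constrained minimizer $\widehat\beta_k$ of the smooth function $L_k(\cdot;Y_k)$ is a local minimizer in an open neighborhood. Hence the first-order condition holds:
\[
s_k(\widehat\beta_k)=\nabla L_k(\widehat\beta_k;Y_k)=0.
\]

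\emph{Step 2 (integral mean value theorem).} Set $v:=\widehat\beta_k-\beta_k^\star$. By convexity of $\Theta$, the segment $\beta_k^\star+tv$ for $t\in[0,1]$ stays in $\Theta$; in fact the formula below is valid on all of $\mathbb R^d$. Applying the fundamental theorem of calculus to $t\mapsto s_k(\beta_k^\star+tv)$ gives
\[
s_k(\widehat\beta_k)-s_k(\beta_k^\star)=\int_0^1\nabla^2 L_k(\beta_k^\star+tv;Y_k)\,v\,dt .
\]
By Lemma~\ref{lem:app_hessian_representation}, the Hessian equals $n_k\Sigma_X(\beta_k^\star+tv)$, which does not depend on the data. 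The right-hand side is therefore $n_k\bar\Sigma_X^k v$. Combining this with Step 1 yields
\[
n_k\bar\Sigma_X^k v=-s_k(\beta_k^\star).
\]

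\emph{Step 3 (invertibility).} The only point requiring care is that $\bar\Sigma_X^k$ is invertible. Two routes are available.
\begin{itemize}
\item Via Lemma~\ref{lem:app_hessian_stability} and Theorem~\ref{thm:app_mnl_scl}: each point on the segment lies in $\Theta$, within distance $D_\Theta$ of $\beta_k^\star$. This gives $\Sigma_X(\beta_k^\star+tv)\succeq e^{-2BD_\Theta}\Sigma_X^k\succeq m e^{-2BD_\Theta}I_d$ by Assumption~\ref{ass:fisher}. Integrating over $t$ preserves this lower bound, so $\bar\Sigma_X^k\succ0$.
\item Directly, as already noted after \eqref{eq:sec4_loo_pop}: $\Sigma_X(\beta)\succeq m e^{-2BD_\Theta}I_d$ on $\Theta$.
\end{itemize}
Either way, multiplying the identity from Step 2 by $\frac{1}{n_k}(\bar\Sigma_X^k)^{-1}$ gives the claim.

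The interior event is used only in Step 1. On the boundary, the KKT conditions would add a normal-cone term, and the identity would fail.
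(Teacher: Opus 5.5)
Your proof is correct and follows the paper's argument. You use the first-order condition $s_k(\widehat\beta_k)=0$ on the interior event, apply the fundamental theorem of calculus to the score along the segment, and invoke Lemma~\ref{lem:app_hessian_representation} to identify the integrated Hessian as $n_k\bar\Sigma_X^k$. Your Step~3, which establishes invertibility of $\bar\Sigma_X^k$ through Hessian stability and the lower bound $\Sigma_X(\beta)\succeq m e^{-2BD_\Theta}I_d$ on $\Theta$, fills in a point the paper's proof leaves implicit.
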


\begin{proof}{Proof of Lemma~\ref{lem:app_mle_implicit}.}
On the stated event, the constraint is inactive, so first-order
optimality gives $s_k(\widehat\beta_k)=0$. Applying the fundamental
theorem of calculus to the score along the segment
$\beta_k^\star+t(\widehat\beta_k-\beta_k^\star)$ gives
\[
0-s_k(\beta_k^\star)
=
\int_0^1 \nabla^2 L_k\bigl(\beta_k^\star+t(\widehat\beta_k-\beta_k^\star)\bigr)dt\,(\widehat\beta_k-\beta_k^\star).
\]
By Lemma~\ref{lem:app_hessian_representation}, the integral equals $n_k\bar\Sigma_X^k$, which yields the claim.
\hfill \Halmos
\end{proof}

\begin{lemma}
\label{lem:app_score_unbiased}
Under Assumptions~\ref{ass:bounded} and~\ref{ass:indep}, the store-level score at the truth is centered:
\[
\mathbb E\bigl[s_k(\beta_k^\star)\bigr]=0.
\]
\end{lemma}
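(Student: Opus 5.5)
The plan is to compute the score explicitly and take expectations term by term. From \eqref{eq:sec3_store_loss}, together with the gradient formula $\nabla g(\beta)=\mu_X(\beta)$ established in the proof of Lemma~\ref{lem:app_hessian_representation}, the score at any $\beta$ is
\[
s_k(\beta)
=
-\sum_{i\in\mathcal N} c_{ik}X_i + n_k\,\mu_X(\beta).
\]
This is an affine function of the counts $c_{ik}$. The only randomness lies in the counts, and the second term is deterministic.

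Next, I would compute the expected counts. By Assumption~\ref{ass:indep}, the $n_k$ choices at store $k$ are i.i.d.\ draws from $P(\beta_k^\star)$. Hence $c_{ik}=\sum_{t=1}^{n_k}\mathbf 1\{y_{t,k}=i\}$ has mean $n_kP_i(\beta_k^\star)$; each count is in fact Binomial$(n_k,P_i(\beta_k^\star))$. By linearity of expectation,
\[
\mathbb E\Bigl[\sum_{i\in\mathcal N}c_{ik}X_i\Bigr]
=
n_k\sum_{i\in\mathcal N}P_i(\beta_k^\star)X_i
=
n_k\sum_{i=0}^N P_i(\beta_k^\star)X_i
=
n_k\mu_X(\beta_k^\star).
\]
The step from $\mathcal N$ to $i=0,\dots,N$ uses $X_0=0$, so the outside option contributes nothing. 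Substituting this into the score expression gives $\mathbb E[s_k(\beta_k^\star)]=-n_k\mu_X(\beta_k^\star)+n_k\mu_X(\beta_k^\star)=0$.

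Assumption~\ref{ass:bounded} enters only to ensure that all expectations are finite, and this is automatic here because the features take finitely many bounded values. I do not expect any step to be a genuine obstacle. The only point needing care is the bookkeeping between the sum over $\mathcal N$ in the loss and the sum over $\{0,\dots,N\}$ that defines $\mu_X$, which is resolved by the convention $X_0=0$.
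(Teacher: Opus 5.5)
Your proposal is correct and follows essentially the same argument as the paper. The paper writes the score as a sum of per-observation contributions $-X_{y}+\mu_X(\beta_k^\star)$, each with mean zero, while you aggregate through the counts $c_{ik}$ with $\mathbb E[c_{ik}]=n_kP_i(\beta_k^\star)$; both rest on the same linearity-of-expectation step, and neither actually requires independence across observations.
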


\begin{proof}{Proof of Lemma~\ref{lem:app_score_unbiased}.}
For a single observation \(y\in\{0,\dots,N\}\) drawn from the MNL model with parameter \(\beta_k^\star\), the score contribution is
\[
-\nabla_\beta \log P_y(\beta_k^\star)
=
-\;X_y+\sum_{j=0}^N P_j(\beta_k^\star)X_j.
\]
Taking expectation under \(y\sim P(\beta_k^\star)\) yields zero. Since \(s_k(\beta_k^\star)\) is the sum of \(n_k\) independent single-observation scores, its expectation is also zero. \hfill \Halmos
\end{proof}

\begin{lemma}
\label{lem:app_whitened_score_subg}
Under Assumptions~\ref{ass:bounded}, \ref{ass:fisher},
and~\ref{ass:indep}, the whitened score
\[
Z_k:=(n_k\Sigma_X^k)^{-1/2}s_k(\beta_k^\star)
\]
is isotropic and sub-Gaussian. More precisely, there exists a universal constant \(C_\psi>0\) such that
\[
\|Z_k\|_{\psi_2}\le C_\psi \frac{B}{\sqrt m}.
\]
\end{lemma}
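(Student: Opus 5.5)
I would write the score as a sum of independent per-observation terms and then whiten it. By Lemma~\ref{lem:app_score_unbiased} and its proof,
\[
s_k(\beta_k^\star)=\sum_{t=1}^{n_k}\xi_t,
\qquad
\xi_t:=\mu_X(\beta_k^\star)-X_{y_{t,k}} .
\]
Here the $\xi_t$ are i.i.d.\ by Assumption~\ref{ass:indep}, each has mean zero, and each has covariance $\Cov(X^{\beta_k^\star})=\Sigma_X^k$. So $\Cov(s_k(\beta_k^\star))=n_k\Sigma_X^k$, which is invertible by Assumption~\ref{ass:fisher}. It follows at once that $Z_k=(n_k\Sigma_X^k)^{-1/2}s_k(\beta_k^\star)$ has mean zero and identity covariance, which is the isotropy claim.

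For the sub-Gaussian claim I would bound the one-dimensional marginals. Fix a unit vector $u$ and set $v:=(\Sigma_X^k)^{-1/2}u$. Then
\[
u^\top Z_k=\frac{1}{\sqrt{n_k}}\sum_{t=1}^{n_k} v^\top\xi_t .
\]
Each summand is bounded: by Assumption~\ref{ass:bounded} and the fact that $\mu_X$ is a convex combination of the $X_i$,
\[
|v^\top\xi_t|\le 2B\|v\|_2\le 2B\|(\Sigma_X^k)^{-1/2}\|_{\mathrm{op}}\le \frac{2B}{\sqrt m},
\]
where the last step again uses Assumption~\ref{ass:fisher}. Hoeffding's lemma then gives that each $v^\top\xi_t$ is centered sub-Gaussian with $\|v^\top\xi_t\|_{\psi_2}\le c\,B/\sqrt m$. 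The general Hoeffding inequality for sums of independent centered sub-Gaussians gives
\[
\Bigl\|\sum_t v^\top\xi_t\Bigr\|_{\psi_2}^2\le C\sum_t\|v^\top\xi_t\|_{\psi_2}^2\le C n_k\frac{B^2}{m}.
\]
Dividing by $\sqrt{n_k}$ yields $\|u^\top Z_k\|_{\psi_2}\le C_\psi B/\sqrt m$ uniformly in $u$. Taking the supremum over unit $u$ gives the stated vector $\psi_2$ bound.

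The only real obstacle is making the per-direction bound uniform despite the whitening. A naive bound would bring in the conditioning of $\Sigma_X^k$, or require a smaller variance proxy than the bounded-range bound provides. I avoid this by paying $\|(\Sigma_X^k)^{-1/2}\|_{\mathrm{op}}\le m^{-1/2}$ once and bounding the range crudely by $2B\|v\|_2$; this is exactly what produces the $B/\sqrt m$ scaling. Everything else is standard, and the universal constant $C_\psi$ comes from Hoeffding's lemma together with the sub-Gaussian sum rule.
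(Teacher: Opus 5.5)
Your proof is correct and follows essentially the same route as the paper: the same decomposition of the score into i.i.d.\ centered terms $\mu_X(\beta_k^\star)-X_{y_{t,k}}$, and the same per-direction bound $2B/\sqrt{n_k m}$ on each whitened summand obtained from $\|(\Sigma_X^k)^{-1/2}\|_{\mathrm{op}}\le m^{-1/2}$. The only differences are in packaging. You verify isotropy by computing the covariance directly rather than citing the information identity. You also pass through Hoeffding's lemma and the sub-Gaussian sum rule, where the paper applies Hoeffding's tail inequality to the bounded sum directly; the two are equivalent.
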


\begin{proof}{Proof of Lemma~\ref{lem:app_whitened_score_subg}.}
Let \(H_k^\star:=n_k\Sigma_X^k\). By Lemma~\ref{lem:app_score_unbiased}, \(\mathbb E[s_k(\beta_k^\star)]=0\). Since the Fisher information for the MNL model is \(H_k^\star=n_k\Sigma_X^k\), the covariance of the score at the truth is exactly \(H_k^\star\). Therefore
\[
\operatorname{Cov}(Z_k)
=
H_k^{*-1/2}\operatorname{Cov}(s_k(\beta_k^\star))H_k^{*-1/2}
=
I_d,
\]
so \(Z_k\) is isotropic.

To prove sub-Gaussianity, write the score as
\[
s_k(\beta_k^\star)=\sum_{t=1}^{n_k}\xi_{t,k},
\qquad
\xi_{t,k}:=\mathbb E[X^{\beta_k^\star}]-X_{y_{t,k}},
\]
where \(X_{y_{t,k}}\) is the feature vector of the $t^\text{th}$ chosen alternative $y_t$ at store \(k\). Note that $(X_{y_t,k})_{t=1}^{n_k}$ are i.i.d.~and have the same distribution as $X^{\beta_k^\star}$. For any unit vector \(u\in\mathbb S^{d-1}\),
\[
u^\top H_k^{*-1/2}\xi_{t,k}
\]
is centered and bounded in absolute value by
\[
\|u\|_2\,\|H_k^{*-1/2}\|_{\mathrm{op}}\,
\|\mathbb E[X^{{\beta_k^\star}}]-X_{y_{t,k}}\|_2
\le
\frac{2B}{\sqrt{n_k m}},
\]
because \(\Sigma_X^k\succeq mI_d\) and \(\|X_i\|_2\le B\). Hoeffding's inequality for sums of independent bounded random variables then gives, for all \(t\ge0\),
\[
\Pr\bigl(|u^\top Z_k|\ge t\bigr)
\le
2\exp\!\left(-\frac{m t^2}{8B^2}\right).
\]
Thus every one-dimensional projection of \(Z_k\) is sub-Gaussian with parameter of order \(B^2/m\), and the displayed \(\psi_2\)-bound follows. \hfill \Halmos
\end{proof}

\begin{lemma}
\label{lem:app_score_concentration}
Under Assumptions~\ref{ass:bounded}, \ref{ass:fisher},
and~\ref{ass:indep}, there exist constants \(C_s,c_s>0\) depending
only on \((B,m)\) such that for every store \(k\) and every \(t\ge0\),
\[
\Pr\!\left(\,
\|s_k(\beta_k^\star)\|_{(H_k^\star)^{-1}}
\ge
C_s\bigl(\sqrt d+t\bigr)
\right)
\le 2e^{-c_s t^2}.
\]
\end{lemma}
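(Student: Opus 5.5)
Since $H_k^\star\succ0$, we have $\|s_k(\beta_k^\star)\|_{(H_k^\star)^{-1}}=\|(H_k^\star)^{-1/2}s_k(\beta_k^\star)\|_2=\|Z_k\|_2$, with $Z_k$ the whitened score of Lemma~\ref{lem:app_whitened_score_subg}. The statement is therefore a norm-concentration bound for a centered, isotropic, sub-Gaussian random vector in $\mathbb R^d$, with $\|Z_k\|_{\psi_2}\le K:=C_\psi B/\sqrt m$. I would prove it with a standard $\varepsilon$-net argument on the sphere, which needs only the one-dimensional tail bound already obtained in the proof of Lemma~\ref{lem:app_whitened_score_subg}.

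\textbf{Step 1 (projections).} From the Hoeffding step in that proof, for every unit vector $u$ and every $s\ge0$,
\[
\Pr\bigl(|u^\top Z_k|\ge s\bigr)\le 2\exp\!\left(-\frac{m s^2}{8B^2}\right).
\]
This is uniform in $u$ and in $n_k$, because the $n_k$ summands are independent, centered, and bounded by $2B/\sqrt{n_k m}$.

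\textbf{Step 2 (net).} Fix a $1/2$-net $\mathcal N_{1/2}$ of $\mathbb S^{d-1}$ with $|\mathcal N_{1/2}|\le 5^d$. For any vector $z$,
\[
\|z\|_2\le 2\max_{u\in\mathcal N_{1/2}}u^\top z .
\]
A union bound over the net then gives
\[
\Pr\bigl(\|Z_k\|_2\ge 2s\bigr)\le 2\cdot 5^d\exp\!\left(-\frac{m s^2}{8B^2}\right).
\]

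\textbf{Step 3 (choose $s$).} Take $s=\kappa(\sqrt d+t)$ with $\kappa^2 m/(8B^2)\ge 2\log 5$, for example $\kappa=4B\sqrt{\log 5}/\sqrt m$. Using $(\sqrt d+t)^2\ge d+t^2$, the exponent satisfies
\[
\frac{m\kappa^2(\sqrt d+t)^2}{8B^2}\ge 2\log 5\,(d+t^2).
\]
This exponent absorbs the $5^d$ factor and leaves at most $2e^{-2\log 5\, t^2}$. The lemma then holds with $C_s=2\kappa$ and $c_s=2\log 5$, and both depend only on $(B,m)$.

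The only delicate point is bookkeeping rather than substance. One could instead invoke the general bound $\bigl|\|Z\|_2-\sqrt d\bigr|\lesssim K^2\cdot$sub-Gaussian, but that result assumes independent coordinates, which $Z_k$ does not have. So I would avoid it and use the net argument above, which requires only the uniform projection tails established in Step 1.
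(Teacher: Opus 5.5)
Your proof is correct and follows essentially the same route as the paper: rewrite $\|s_k(\beta_k^\star)\|_{(H_k^\star)^{-1}}$ as $\|Z_k\|_2$, use the Hoeffding projection tail from Lemma~\ref{lem:app_whitened_score_subg}, and union-bound over a $1/2$-net of size at most $5^d$, with the radius taken proportional to $(B/\sqrt m)(\sqrt d+t)$ so that the $5^d$ factor is absorbed. The only difference is that you make the constants explicit ($C_s=8B\sqrt{\log 5}/\sqrt m$, $c_s=2\log 5$), whereas the paper leaves them implicit.
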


\begin{proof}{Proof of Lemma~\ref{lem:app_score_concentration}.}
By Lemma~\ref{lem:app_whitened_score_subg},
\[
\|s_k(\beta_k^\star)\|_{(H_k^\star)^{-1}}
=
\|Z_k\|_2,
\qquad
Z_k:=(n_k\Sigma_X^k)^{-1/2}s_k(\beta_k^\star),
\]
and \(Z_k\) is isotropic sub-Gaussian with \(\|Z_k\|_{\psi_2}\lesssim B/\sqrt m\). Let \(\mathcal N\subset \mathbb S^{d-1}\) be a \(1/2\)-net of the sphere with \(|\mathcal N|\le 5^d\). For every \(z\in\mathbb R^d\),
\[
\|z\|_2\le 2\max_{v\in\mathcal N}|v^\top z|.
\]
Therefore,
\[
\Pr(\|Z_k\|_2\ge 2s)
\le
\sum_{v\in\mathcal N}\Pr(|v^\top Z_k|\ge s)
\le
2\cdot 5^d \exp\!\left(-c\,\frac{m s^2}{B^2}\right)
\]
for a universal constant \(c>0\). Choosing
\[
s=\frac{B}{\sqrt m}\bigl(\alpha\sqrt d+t\bigr)
\]
with \(\alpha\) large enough to absorb the \(5^d\) factor yields
\[
\Pr\!\left(\|Z_k\|_2\ge C_s(\sqrt d+t)\right)\le 2e^{-c_s t^2}
\]
for suitable constants \(C_s,c_s>0\) depending only on \((B,m)\). Rewriting in terms of the score gives the claim. \hfill \Halmos
\end{proof}

\begin{lemma}
\label{lem:app_constrained_basic}
Let
\(\ell(\beta)=a^\top\beta+n g(\beta)\), where
\(g(\beta)=\log(1+\sum_{j=1}^N e^{\beta^\top X_j})\), and let
\(\widehat\beta\in\argmin_{\beta\in\Theta}\ell(\beta)\). For any
\(\beta^\circ\in\Theta\) such that
\(H^\circ:=\nabla^2\ell(\beta^\circ)\succ0\),
\[
\|\widehat\beta-\beta^\circ\|_{H^\circ}
\le
2e^{2BD_\Theta}
\|\nabla\ell(\beta^\circ)\|_{(H^\circ)^{-1}}.
\]
\end{lemma}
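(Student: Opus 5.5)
Write $v:=\widehat\beta-\beta^\circ$. The plan is to combine the variational inequality for the constrained minimizer with a strong-convexity lower bound supplied by Hessian stability. Because $\Theta$ is convex and $\widehat\beta$ minimizes the convex function $\ell$ over $\Theta$, the first-order optimality condition gives
\[
\nabla\ell(\widehat\beta)^\top(\beta^\circ-\widehat\beta)\ge 0 .
\]
This form handles the constrained case directly: we do not need $\widehat\beta$ to be interior, and we do not need $\nabla\ell(\widehat\beta)=0$.

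Next we write the gradient difference as an integrated Hessian along the segment. This yields
\[
\bigl(\nabla\ell(\widehat\beta)-\nabla\ell(\beta^\circ)\bigr)^\top v = v^\top \bar H v,
\qquad
\bar H:=\int_0^1\nabla^2\ell(\beta^\circ+tv)\,dt .
\]
By Theorem~\ref{thm:app_mnl_scl}, $\ell$ is $2B$-self-concordant-like. Both endpoints lie in $\Theta$, so convexity keeps the whole segment in $\Theta$ and $\|tv\|_2\le D_\Theta$. Lemma~\ref{lem:app_hessian_stability} then gives $\nabla^2\ell(\beta^\circ+tv)\succeq e^{-2BD_\Theta}H^\circ$ for every $t\in[0,1]$, and hence $\bar H\succeq e^{-2BD_\Theta}H^\circ$.

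Combining the two ingredients, the optimality condition says $\nabla\ell(\widehat\beta)^\top v\le 0$. Therefore
\[
e^{-2BD_\Theta}\|v\|_{H^\circ}^2\le v^\top\bar H v\le -\nabla\ell(\beta^\circ)^\top v\le \|\nabla\ell(\beta^\circ)\|_{(H^\circ)^{-1}}\|v\|_{H^\circ},
\]
where the last step is Cauchy--Schwarz in the dual pair of norms induced by $H^\circ$ and $(H^\circ)^{-1}$. If $v=0$ there is nothing to prove. Otherwise, dividing by $\|v\|_{H^\circ}$ gives
\[
\|v\|_{H^\circ}\le e^{2BD_\Theta}\|\nabla\ell(\beta^\circ)\|_{(H^\circ)^{-1}},
\]
which is stronger than the stated bound by a factor of $2$. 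The factor $2$ therefore leaves slack.

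The only delicate point is the uniform Hessian lower bound along the segment. A naive route would restrict to a neighborhood of $\beta^\circ$ and run a self-concordant localization argument. We avoid that by using the compactness and convexity of $\Theta$, which let Lemma~\ref{lem:app_hessian_stability} apply over the full diameter $D_\Theta$ at the cost of the factor $e^{2BD_\Theta}$. If the SCL constant is instead taken relative to the $\|\cdot\|_2$ displacement along the full segment, the same exponent appears. I therefore expect no real obstacle beyond checking that $\Theta$'s convexity keeps the segment inside the region where the diameter bound applies.
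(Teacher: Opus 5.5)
Your argument is correct. It differs from the paper's in which optimality property it uses.

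\textbf{The paper's route.} The paper never invokes first-order conditions. It compares function values, $0\ge \ell(\widehat\beta)-\ell(\beta^\circ)=\nabla\ell(\beta^\circ)^\top u+\int_0^1(1-t)\,u^\top\nabla^2\ell(\beta^\circ+tu)u\,dt$ with $u=\widehat\beta-\beta^\circ$. It then applies the same Hessian lower bound $e^{-2BD_\Theta}H^\circ$ you use. The Taylor weight $\int_0^1(1-t)\,dt=\tfrac12$ is exactly where its factor $2$ comes from.

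\textbf{Your route.} You use the variational inequality $\nabla\ell(\widehat\beta)^\top v\le 0$ together with the integrated Hessian $\bar H$, whose weight is $\int_0^1 dt=1$. This gives the sharper constant $e^{2BD_\Theta}$.

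\textbf{What each needs.} Your route needs convexity of $\Theta$ and differentiability, to obtain the optimality condition at a possibly boundary minimizer. The paper's argument needs only $\widehat\beta,\beta^\circ\in\Theta$ and $\ell(\widehat\beta)\le\ell(\beta^\circ)$. So it applies verbatim to any $\widehat\beta\in\Theta$ with objective value no larger than at $\beta^\circ$, even if $\Theta$ is not convex. Under the paper's assumptions both routes are valid, and yours gives the better constant.

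\textbf{A small correction to your closing remark.} Keeping the segment inside $\Theta$ plays no role in the Hessian lower bound. The SCL property (Theorem~\ref{thm:app_mnl_scl}) and Lemma~\ref{lem:app_hessian_stability} hold on all of $\mathbb{R}^d$. All that step needs is $\|v\|_2\le D_\Theta$, which follows because both endpoints lie in $\Theta$. In your proof, convexity of $\Theta$ is genuinely used only for the first-order optimality condition.
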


\begin{proof}{Proof of Lemma~\ref{lem:app_constrained_basic}.}
Set $u:=\widehat\beta-\beta^\circ$. Because both points belong to
$\Theta$, $\|u\|_2\le D_\Theta$. Optimality of $\widehat\beta$ and
Taylor's theorem with integral remainder give
\begin{align*}
0
&\ge
\ell(\beta^\circ+u)-\ell(\beta^\circ)\\
&=
\nabla\ell(\beta^\circ)^\top u
+
\int_0^1(1-t)
u^\top\nabla^2\ell(\beta^\circ+tu)u\,dt.
\end{align*}
By Theorem~\ref{thm:app_mnl_scl} and
Lemma~\ref{lem:app_hessian_stability},
\[
\nabla^2\ell(\beta^\circ+tu)
\succeq
e^{-2Bt\|u\|_2}H^\circ
\succeq
e^{-2BD_\Theta}H^\circ.
\]
Therefore, with
$R:=\|u\|_{H^\circ}$ and
$G:=\|\nabla\ell(\beta^\circ)\|_{(H^\circ)^{-1}}$,
\[
0\ge -GR+\frac12e^{-2BD_\Theta}R^2.
\]
The claim follows after dividing by $R$ when $R>0$; it is immediate
when $R=0$. \hfill \Halmos
\end{proof}

\begin{theorem}
\label{thm:app_mle_concentration}
Under Assumptions~\ref{ass:bounded}--\ref{ass:indep}, there exist
constants \(C_\beta,c_\beta>0\), depending only on
\((B,m,D_\Theta)\), such that, for every store~\(k\) and every
\(t\ge0\),
\begin{equation}
\label{eq:app_local_mle_tail}
\Pr\!\left(
\|\widehat\beta_k-\beta_k^\star\|_{k}
\ge
C_\beta\bigl(\sqrt d+t\bigr)
\right)
\le
2e^{-c_\beta t^2}.
\end{equation}
In particular, taking \(t=\sqrt d\),
\[
\Pr\!\left(
\|\widehat\beta_k-\beta_k^\star\|_{k}
\le
2C_\beta\sqrt d
\right)
\ge
1-2e^{-c_\beta d}.
\]
On the same events, \(\Sigma_X^k\succeq mI_d\) gives the Euclidean bound
\[
\|\widehat\beta_k-\beta_k^\star\|_2
\le
\frac{C_\beta}{\sqrt m}\,\frac{\sqrt d+t}{\sqrt{n_k}} .
\]
\end{theorem}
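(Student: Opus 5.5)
The plan is to combine the deterministic ``basic inequality'' of Lemma~\ref{lem:app_constrained_basic} with the score concentration of Lemma~\ref{lem:app_score_concentration}. This route avoids the implicit representation of Lemma~\ref{lem:app_mle_implicit}, which would require $\widehat\beta_k$ to be interior. It also avoids any localization argument, because the constrained MLE lives in the compact set $\Theta$.

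\textbf{Step 1 (deterministic bound).} Apply Lemma~\ref{lem:app_constrained_basic} with $\ell=L_k(\cdot;Y_k)$, which has the required form $a^\top\beta+n_k g(\beta)$ with $a=-\sum_i c_{ik}X_i$. Take $\beta^\circ=\beta_k^\star\in\Theta$. Its Hessian is $H^\circ=n_k\Sigma_X^k=H_k^\star\succ0$ by Corollary~\ref{cor:app_population_hessian}; since $L_k$ and $L_k^\star$ share the same deterministic Hessian, Lemma~\ref{lem:app_hessian_representation} applies. Its gradient is $\nabla\ell(\beta^\circ)=s_k(\beta_k^\star)$. This gives, surely,
\[
\|\widehat\beta_k-\beta_k^\star\|_k\le 2e^{2BD_\Theta}\,\|s_k(\beta_k^\star)\|_{(H_k^\star)^{-1}}.
\]

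\textbf{Step 2 (probabilistic bound).} Lemma~\ref{lem:app_score_concentration} gives
\[
\Pr\bigl(\|s_k(\beta_k^\star)\|_{(H_k^\star)^{-1}}\ge C_s(\sqrt d+t)\bigr)\le 2e^{-c_st^2},
\]
with $C_s,c_s$ depending only on $(B,m)$. Set $C_\beta:=2e^{2BD_\Theta}C_s$ and $c_\beta:=c_s$. The event $\{\|\widehat\beta_k-\beta_k^\star\|_k\ge C_\beta(\sqrt d+t)\}$ is contained in the score event, which yields \eqref{eq:app_local_mle_tail}. The constants then depend only on $(B,m,D_\Theta)$, as claimed. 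Assumption~\ref{ass:indep} enters only through the score lemma.

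\textbf{Step 3 (specializations).} Taking $t=\sqrt d$ gives the second display, with probability at least $1-2e^{-c_\beta d}$. For the Euclidean bound, note that
\[
\|v\|_k^2=n_k v^\top\Sigma_X^k v\ge n_k m\|v\|_2^2,
\]
so $\|v\|_2\le\|v\|_k/\sqrt{n_km}$. On the complement of the tail event this gives the stated $\tfrac{C_\beta}{\sqrt m}\tfrac{\sqrt d+t}{\sqrt{n_k}}$.

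\textbf{Main obstacle.} The only delicate point is that the constraint may be active, so first-order optimality fails. Lemma~\ref{lem:app_constrained_basic} handles this, but only at the price of the crude factor $e^{2BD_\Theta}$, which comes from Hessian stability over the whole diameter. This factor is acceptable here because $D_\Theta$ is a fixed primitive constant.
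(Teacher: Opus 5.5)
Your proposal is correct and is essentially the paper's proof: you apply the basic inequality of Lemma~\ref{lem:app_constrained_basic} at $\beta^\circ=\beta_k^\star$ and then the score tail of Lemma~\ref{lem:app_score_concentration}. The constants match ($C_\beta=2e^{2BD_\Theta}C_s$, $c_\beta=c_s$), and so does the Euclidean conversion via $\Sigma_X^k\succeq mI_d$.
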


\begin{proof}{Proof of Theorem~\ref{thm:app_mle_concentration}.}
Apply Lemma~\ref{lem:app_constrained_basic} to
$\ell(\beta)=L_k(\beta;Y_k)$ and $\beta^\circ=\beta_k^\star$. By
Lemma~\ref{lem:app_hessian_representation},
$H^\circ=H_k^\star=n_k\Sigma_X^k$, and therefore
\[
\|\widehat\beta_k-\beta_k^\star\|_k
\le
2e^{2BD_\Theta}
\|s_k(\beta_k^\star)\|_{(H_k^\star)^{-1}}.
\]
The full-range tail bound now follows directly from
Lemma~\ref{lem:app_score_concentration}, with
$C_\beta=2e^{2BD_\Theta}C_s$ and $c_\beta=c_s$. The case
$t=\sqrt d$ is immediate. Finally,
$\Sigma_X^k\succeq mI_d$ gives
$\|v\|_2\le\|v\|_k/\sqrt{n_km}$, which yields the Euclidean bound.
\hfill \Halmos
\end{proof}

\begin{lemma}
\label{lem:app_mle_bias}
Under Assumptions~\ref{ass:bounded}--\ref{ass:sample}, there exists a
constant \(C_{\mathrm{bias}}>0\), depending only on
\((B,m,M,C_{\max},D_\Theta,\rho_\Theta)\), such that
\[
\|\mathbb E e_k\|_k
\le
C_{\mathrm{bias}}\frac{d}{\sqrt{n_k}},
\qquad
e_k:=\widehat\beta_k-\beta_k^\star.
\]
\end{lemma}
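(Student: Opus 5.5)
The plan is to bound the mean of the local MLE error via a second-order expansion of the score around $\beta_k^\star$. The leading term has mean zero, so the bias comes only from the quadratic remainder and from the event on which the constraint binds.

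\emph{Step 1 (good event).} Let $\mathcal G$ be the event that $\|e_k\|_k\le 2C_\beta\sqrt d$. Theorem~\ref{thm:app_mle_concentration} gives $\Pr(\mathcal G^c)\le 2e^{-c_\beta d}$. On $\mathcal G$ we have $\|e_k\|_2\le 2C_\beta\sqrt{d/(n_km)}$, which is at most $\rho_\Theta/2$ once $n_k\ge C_0d$ with $C_0$ large. Since $\operatorname{dist}(\beta_k^\star,\partial\Theta)\ge\rho_\Theta$, this places $\widehat\beta_k$ in $\operatorname{int}(\Theta)$, so first-order optimality and Lemma~\ref{lem:app_mle_implicit} apply. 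On $\mathcal G$ I will write
\[
e_k = -(H_k^\star)^{-1}s_k(\beta_k^\star) + \rho_k, \qquad \rho_k := \bigl[(n_k\bar\Sigma_X^k)^{-1}-(H_k^\star)^{-1}\bigr]\bigl(-s_k(\beta_k^\star)\bigr).
\]

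\emph{Step 2 (bounding the remainder).} Lemma~\ref{lem:app_hessian_stability} with $R=2B$ gives $e^{-2B\|e_k\|_2}H_k^\star\preceq n_k\bar\Sigma_X^k\preceq e^{2B\|e_k\|_2}H_k^\star$. In the $H_k^\star$-geometry, the whitened difference of inverses therefore has operator norm at most $e^{2B\|e_k\|_2}-1\lesssim \|e_k\|_2\lesssim \|e_k\|_k/\sqrt{n_km}$. This yields $\|\rho_k\|_k\lesssim \|e_k\|_k\,\|s_k\|_{(H_k^\star)^{-1}}/\sqrt{n_k}$. Because $\|e_k\|_k\lesssim\|s_k\|_{(H_k^\star)^{-1}}$ by Lemma~\ref{lem:app_constrained_basic}, the bound becomes $\|\rho_k\|_k\lesssim \|s_k\|^2_{(H_k^\star)^{-1}}/\sqrt{n_k}$. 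Lemma~\ref{lem:app_score_concentration} makes $\|s_k\|_{(H_k^\star)^{-1}}$ sub-Gaussian at scale $\sqrt d$, so $\E[\|\rho_k\|_k\mathbf 1_{\mathcal G}]\lesssim d/\sqrt{n_k}$. The same tail bounds show that this holds without the indicator, because the remainder is defined globally whenever the bound from Lemma~\ref{lem:app_constrained_basic} is used.

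\emph{Step 3 (assembling).} I decompose
\[
\E e_k = -(H_k^\star)^{-1}\E s_k + \E[\rho_k\mathbf 1_{\mathcal G}] + \E\bigl[(e_k+(H_k^\star)^{-1}s_k)\mathbf 1_{\mathcal G^c}\bigr].
\]
The first term vanishes by Lemma~\ref{lem:app_score_unbiased}. The second is $O(d/\sqrt{n_k})$ in $\|\cdot\|_k$ by Jensen and Step 2. For the third, I use $\|e_k\|_k\le\sqrt{n_kM}D_\Theta$ together with Cauchy--Schwarz on $\|s_k\|_{(H_k^\star)^{-1}}$, which bounds it by $\sqrt{n_k}\,\mathrm{poly}(d)\,e^{-c_\beta d/2}$.

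\emph{Main obstacle.} The hard part is this last boundary/failure term. The factor $\sqrt{n_k}e^{-cd}$ is not automatically $\lesssim d/\sqrt{n_k}$ when $n_k\gg e^{cd}$. To fix this I will not use the fixed event $\mathcal G$. Instead I will take the good event at a radius $t$ with $t^2\asymp\log n_k$, while still keeping the radius below $\rho_\Theta\sqrt{n_km}/2$. The failure probability is then at most $n_k^{-2}$, which kills the $\sqrt{n_k}$ factor. On this larger event the Step 2 bound uses $(\sqrt d+t)^2/\sqrt{n_k}$ only through its expectation, which is still $O(d/\sqrt{n_k})$ because the score moments do not depend on the truncation. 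In the regime where $\log n_k\gtrsim n_k$ fails to hold, the interiority requirement reduces to $C_0$ being large. Constants will depend on $(B,m,M,D_\Theta,\rho_\Theta)$ as required.
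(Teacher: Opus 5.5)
Your proposal is correct and follows essentially the paper's argument: linearize through Lemma~\ref{lem:app_mle_implicit} on an interior event, bound the remainder by $\|e_k\|_k\,\|s_k(\beta_k^\star)\|_{(H_k^\star)^{-1}}/\sqrt{n_k}$ via the SCL Hessian sandwich, and handle the complement event separately. The only difference is how you resolve the obstacle you flagged. The paper uses the event $\{\|e_k\|_2<\rho_\Theta/2\}$ and bounds its failure probability by $2e^{-c_I n_k}$, taking $t=a_I\sqrt{n_k}$ in the tail bound; your $t^2\asymp\log n_k$ truncation also works, with interiority following from $\log n_k/n_k\le\log C_0/C_0$, provided you keep the remainder bound random, $\lesssim\|s_k(\beta_k^\star)\|_{(H_k^\star)^{-1}}^2/\sqrt{n_k}$ with expectation $O(d/\sqrt{n_k})$, rather than replacing it by the deterministic radius $(\sqrt d+t)^2/\sqrt{n_k}$.
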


\begin{proof}{Proof.}
Let
\[
H_k^\star:=\nabla^2 L_k^\star(\beta_k^\star)=n_k\Sigma_X^k.
\]
Write
\[
e_k:=\widehat\beta_k-\beta_k^\star,
\qquad
s_k:=\nabla L_k(\beta_k^\star;Y_k),
\qquad
G_k:=\|s_k\|_{(H_k^\star)^{-1}},
\]
and introduce the linearized error
\[
e_k^0:=-(H_k^\star)^{-1}s_k.
\]
By score unbiasedness and the information identity,
\[
\mathbb E e_k^0=0,
\qquad
\mathbb E G_k^2=d.
\]

Consider the interior event
\[
\mathcal I_k
:=
\{\|e_k\|_2<\rho_\Theta/2\}.
\]
The Euclidean form of Theorem~\ref{thm:app_mle_concentration}, together
with $n_k\ge C_0d$, implies that $C_0$ can be chosen so that
\begin{equation}
\label{eq:app_mle_interior_tail}
\Pr(\mathcal I_k^c)\le 2e^{-c_I n_k}
\end{equation}
for a constant $c_I>0$ depending only on
$(B,m,D_\Theta,\rho_\Theta)$. Indeed, take
$t=a_I\sqrt{n_k}$ in \eqref{eq:app_local_mle_tail}, with $a_I>0$
small enough, and then take $C_0$ large enough that the resulting
Euclidean radius is below $\rho_\Theta/2$.

On $\mathcal I_k$, the estimator is in the interior of $\Theta$, so
Lemma~\ref{lem:app_mle_implicit} applies. With
\[
\widetilde H_k
:=
\int_0^1\nabla^2L_k(\beta_k^\star+te_k;Y_k)\,dt,
\]
we have $e_k=-\widetilde H_k^{-1}s_k$. Let
\[
M_k
:=
(H_k^\star)^{1/2}
\bigl(\widetilde H_k^{-1}-(H_k^\star)^{-1}\bigr)
(H_k^\star)^{1/2}.
\]
Set $a_k:=2B\|e_k\|_2$. The SCL Hessian comparison along the segment
from $\beta_k^\star$ to $\widehat\beta_k$, followed by integration,
gives
\[
\frac{1-e^{-a_k}}{a_k}H_k^\star
\preceq
\widetilde H_k
\preceq
\frac{e^{a_k}-1}{a_k}H_k^\star.
\]
After inversion, the elementary inequalities
$a/(1-e^{-a})-1\le a$ and
$1-a/(e^a-1)\le a$ for $a\ge0$ imply
\[
\|M_k\|_{\mathrm{op}}
\le a_k
=2B\|e_k\|_2
\le
\frac{2B}{\sqrt{m n_k}}\|e_k\|_k.
\]
Consequently, on $\mathcal I_k$,
\[
\|e_k-e_k^0\|_k
\le
\frac{2B}{\sqrt{m n_k}}\|e_k\|_kG_k.
\]

Since $\mathbb E e_k^0=0$, splitting over $\mathcal I_k$ yields
\[
\|\mathbb E e_k\|_k
\le
\mathbb E[\|e_k-e_k^0\|_k\mathbf 1_{\mathcal I_k}]
+
\mathbb E[(\|e_k\|_k+G_k)\mathbf 1_{\mathcal I_k^c}].
\]
For the first term, Cauchy--Schwarz, the case $q=2$ of
Lemma~\ref{lem:app_Vind}, and $\mathbb EG_k^2=d$ give
\[
\mathbb E[\|e_k-e_k^0\|_k\mathbf 1_{\mathcal I_k}]
\le
\frac{2B}{\sqrt{m n_k}}
(\mathbb E\|e_k\|_k^2)^{1/2}(\mathbb EG_k^2)^{1/2}
\le
C\frac{d}{\sqrt{n_k}}.
\]
For the second term, another application of Cauchy--Schwarz and
\eqref{eq:app_mle_interior_tail} gives
\[
\mathbb E[(\|e_k\|_k+G_k)\mathbf 1_{\mathcal I_k^c}]
\le
C\sqrt d\,e^{-c_I n_k/2}
\le
C\frac{d}{\sqrt{n_k}},
\]
where the final inequality follows uniformly for $n_k\ge C_0d$ and
$d\ge1$. Combining the two bounds proves the claim. \hfill \Halmos
\end{proof}

\section{Auxiliary Results for the Shrinkage Analysis}
\label{app:shrinkage_aux}

\begin{lemma}
\label{lem:app_independence}
Under Assumption~\ref{ass:indep}, \(\widehat\beta_k\) depends only on \(Y_k\), whereas \(\widehat\beta_0^{(-k)}\) depends only on \(\{Y_\ell:\ell\neq k\}\). Hence
\[
\widehat\beta_k \perp\!\!\!\perp \widehat\beta_0^{(-k)}.
\]
\end{lemma}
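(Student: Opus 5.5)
The plan is to show that each estimator is a measurable function of its own block of data, and then apply the standard fact that measurable functions of independent random elements are independent.

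\emph{Step 1: the local estimator depends only on $Y_k$.} By \eqref{eq:sec3_store_loss}, the objective $L_k(\cdot;Y_k)$ depends on the data only through the counts $(c_{ik})_{i\in\mathcal N}$, which are functions of $Y_k$. The minimizer over the compact convex set $\Theta$ exists and is unique, by the strict convexity noted after \eqref{eq:sec4_loo_pop}. Hence $\widehat\beta_k=\phi_k(Y_k)$ for a single-valued map $\phi_k$.

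\emph{Step 2: the leave-one-out target depends only on the other stores' data.} Similarly, $L_0^{(-k)}(\cdot;Y)=\sum_{\ell\neq k}L_\ell(\cdot;Y_\ell)$ involves only $(Y_\ell)_{\ell\neq k}$. Its unique constrained minimizer is therefore $\widehat\beta_0^{(-k)}=\psi_k\bigl((Y_\ell)_{\ell\neq k}\bigr)$.

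\emph{Measurability.} Each $Y_\ell$ takes finitely many values, namely elements of $(\mathcal N\cup\{0\})^{n_\ell}$. Any function of $Y_\ell$ is therefore trivially measurable, so no argmin-selection theorem is needed.

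\emph{Step 3: independence.} By Assumption~\ref{ass:indep}, the blocks $Y_1,\dots,Y_K$ are mutually independent. It follows that $Y_k$ is independent of the vector $(Y_\ell)_{\ell\neq k}$, since a block is independent of the joint collection of the remaining blocks under mutual independence. For Borel sets $A,B$ we then have
\[
\Pr\bigl(\phi_k(Y_k)\in A,\ \psi_k((Y_\ell)_{\ell\neq k})\in B\bigr)=\Pr\bigl(Y_k\in\phi_k^{-1}(A)\bigr)\Pr\bigl((Y_\ell)_{\ell\neq k}\in\psi_k^{-1}(B)\bigr),
\]
which is exactly the claimed independence of $\widehat\beta_k$ and $\widehat\beta_0^{(-k)}$.

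The only point needing care is that the estimators are well-defined functions of the data. This reduces to uniqueness of the minimizers, which the paper has already established. If uniqueness were in doubt, one would instead fix any deterministic tie-breaking rule, and the same argument would go through unchanged.
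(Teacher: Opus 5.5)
Your proposal is correct and takes essentially the same approach as the paper, whose proof simply notes that the two estimators are measurable with respect to the disjoint sigma-fields generated by $Y_k$ and $(Y_\ell)_{\ell\neq k}$ and then invokes independence across stores. Your remarks on uniqueness of the minimizers and on measurability of functions of the finite-valued data $Y_\ell$ spell out details that the paper's one-line proof leaves implicit.
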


\begin{proof}{Proof of Lemma~\ref{lem:app_independence}.}
This is immediate from the independence of the store-level samples and the fact that the two estimators are measurable with respect to disjoint sigma-fields. \hfill \Halmos
\end{proof}

\begin{proof}{Proof of Lemma~\ref{lem:theory_local_expansion}.}
Apply Lemma~\ref{lem:app_quadratic_approx} to the population loss \(L_k^\star\) at \(x=\beta_k^\star\). Since \(\nabla L_k^\star(\beta_k^\star)=0\) and \(\nabla^2 L_k^\star(\beta_k^\star)=H_k^\star=n_k\Sigma_X^k\), we obtain
\[
L_k^\star(\beta)-L_k^\star(\beta_k^\star)
=
\frac12\|\beta-\beta_k^\star\|_k^2+r_k(\beta),
\]
with
\[
|r_k(\beta)|
\le
\frac12\|\beta-\beta_k^\star\|_k^2\,
\omega_{\mathrm{SCL}}\!\bigl(2B\|\beta-\beta_k^\star\|_2\bigr).
\]
Assumption~\ref{ass:fisher} implies \(\Sigma_X^k\succeq mI_d\), hence
\[
\|v\|_2\le \frac{\|v\|_k}{\sqrt{n_k m}}.
\]
Substituting this into the argument of \(\omega_{\mathrm{SCL}}\) yields exactly the bound stated in the lemma. \hfill \Halmos
\end{proof}

\begin{proof}{Proof of Lemma~\ref{lem:theory_quadratic_identity}.}
Expanding the square in the \(k\)-local norm gives
\[
\|(1-u)e_k+u g_k\|_k^2
=
(1-u)^2\|e_k\|_k^2
+
u^2\|g_k\|_k^2
+
2u(1-u)\langle e_k,g_k\rangle_k.
\]
Subtracting \(\|e_k\|_k^2\) and taking expectations yields
\begin{align*}
\mathbb E\!\left[
\|(1-u)e_k+u g_k\|_k^2-\|e_k\|_k^2
\right]
&=
\bigl((1-u)^2-1\bigr)V_k^{\mathrm{loc}}
+
u^2V_k^{\mathrm{pool}}
+
2u(1-u)C_k  \\
&=
(u^2-2u)V_k^{\mathrm{loc}}
+
u^2V_k^{\mathrm{pool}}
+
2u(1-u)C_k  \\
&=
\bigl(V_k^{\mathrm{loc}}+V_k^{\mathrm{pool}}-2C_k\bigr)u^2
-
2\bigl(V_k^{\mathrm{loc}}-C_k\bigr)u \\
&=
\alpha_k u^2-2\zeta_k u,
\end{align*}
where the last equality uses the definitions
\[
\alpha_k
:=
V_k^{\mathrm{loc}}
+
V_k^{\mathrm{pool}}
-
2C_k,
\qquad
\zeta_k
:=
V_k^{\mathrm{loc}}
-
C_k.
\]
This proves the identity. \hfill \Halmos
\end{proof}

\begin{proof}{Proof of Lemma~\ref{lem:theory_quadratic_benchmark}.}
For \(\alpha_k>0\),
\[
Q_k(u)= \frac{1}{2} \alpha_k\left(u-\frac{\zeta_k}{\alpha_k}\right)^2- \frac{1}{2}\frac{\zeta_k^2}{\alpha_k}.
\]
Hence the unconstrained minimizer is \(\zeta_k/\alpha_k\), and the constrained minimizer over \([0,1]\) is its Euclidean projection onto that interval. \hfill \Halmos
\end{proof}

\begin{lemma}
\label{lem:app_Vind}
Under Assumptions~\ref{ass:bounded}--\ref{ass:sample}, there are
constants $C_q<\infty$, $q=2,3,4$, and $c_{\mathrm{loc}}>0$,
depending only on
$(B,m,M,C_{\max},D_\Theta,\rho_\Theta)$, such that
\begin{equation}
\label{eq:app_individual_moments}
\mathbb E\|e_k\|_k^q
\le
C_qd^{q/2},
\qquad q=2,3,4.
\end{equation}
In particular, with $C_{\mathrm{loc}}:=C_2$,
\[
c_{\mathrm{loc}} d
\le
V_k^{\mathrm{loc}}
\le
C_{\mathrm{loc}} d
\qquad\text{for all }k.
\]
\end{lemma}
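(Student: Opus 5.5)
The upper bounds follow by integrating the tail bound of Theorem~\ref{thm:app_mle_concentration}. Write $T:=\|e_k\|_k$. Then
\[
\Pr\bigl(T\ge C_\beta(\sqrt d+t)\bigr)\le 2e^{-c_\beta t^2}\qquad\text{for all } t\ge0 .
\]
Hence $T\le C_\beta(\sqrt d+W)$, where $W\ge 0$ satisfies $\Pr(W\ge t)\le 2e^{-c_\beta t^2}$. The layer-cake formula then gives $\mathbb E W^q\le C'_q$ for a constant depending only on $c_\beta$. Combining this with $(a+b)^q\le 2^{q-1}(a^q+b^q)$ yields
\[
\mathbb E T^q\le 2^{q-1}C_\beta^q\bigl(d^{q/2}+C'_q\bigr)\le C_q\,d^{q/2},
\]
since $d\ge1$. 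Here $C_\beta$ depends only on $(B,m,D_\Theta)$, so $C_q$ depends only on the primitive quantities. Taking $q=2$ gives $V_k^{\mathrm{loc}}\le C_{\mathrm{loc}}d$. This part is routine.

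The lower bound $V_k^{\mathrm{loc}}\ge c_{\mathrm{loc}}d$ is the substantive step. I would compare $e_k$ with the linearization $e_k^0=-(H_k^\star)^{-1}s_k$, as in the proof of Lemma~\ref{lem:app_mle_bias}. Since $\|e_k^0\|_k=G_k$ and $\mathbb E G_k^2=d$ by the information identity, the triangle inequality in $L^2$ gives
\[
\sqrt{V_k^{\mathrm{loc}}}\ \ge\ \sqrt{d}-\bigl(\mathbb E\|e_k-e_k^0\|_k^2\bigr)^{1/2}.
\]
It remains to show the correction term is at most $\tfrac12\sqrt d$.

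I would split on the interior event $\mathcal I_k$.

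\textbf{On $\mathcal I_k$.} The proof of Lemma~\ref{lem:app_mle_bias} gives
\[
\|e_k-e_k^0\|_k\le \frac{2B}{\sqrt{mn_k}}\,\|e_k\|_k\,G_k .
\]
By Cauchy--Schwarz, the fourth moment bound just proved ($q=4$), and $\mathbb E G_k^4\le Cd^2$ (from the sub-Gaussian tail in Lemma~\ref{lem:app_score_concentration}), the $L^2$ norm of this term is at most $C\,d/\sqrt{n_k}$.

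\textbf{On $\mathcal I_k^c$.} Use $\|e_k-e_k^0\|_k\le \|e_k\|_k+G_k$ together with $\Pr(\mathcal I_k^c)\le 2e^{-c_In_k}$. Cauchy--Schwarz with fourth moments bounds the $L^2$ contribution by $C\sqrt d\,e^{-c_In_k/4}$.

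Combining the two cases,
\[
\bigl(\mathbb E\|e_k-e_k^0\|_k^2\bigr)^{1/2}\le C\sqrt d\Bigl(\sqrt{d/n_k}+e^{-c_In_k/4}\Bigr)\le C\sqrt d\bigl(C_0^{-1/2}+e^{-c_IC_0/4}\bigr),
\]
using $n_k\ge C_0d\ge C_0$ from Assumption~\ref{ass:sample}. Choosing $C_0$ large enough makes this at most $\tfrac12\sqrt d$, which gives $V_k^{\mathrm{loc}}\ge d/4$. So $c_{\mathrm{loc}}=1/4$ works.

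The main obstacle is the ordering of the argument. The interior-event bound must use the already established $q=4$ moment, and $C_0$ must be fixed only after the constants from Theorem~\ref{thm:app_mle_concentration} and Lemma~\ref{lem:app_score_concentration} are known. These constants depend only on $(B,m,D_\Theta,\rho_\Theta)$ and not on $C_0$, so there is no circularity.
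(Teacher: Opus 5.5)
Your proof is correct. The upper bound is the same tail integration the paper uses, but your lower bound takes a different route. The paper does not compare $e_k$ with the linearization $e_k^0$ additively. Instead it works on a score event $\mathcal E_k=\{\|s_k(\beta_k^\star)\|_{(H_k^\star)^{-1}}\le C_s(2\sqrt d+\theta)\}$ with $\theta$ a fixed constant, and uses $C_0$ to force $\widehat\beta_k$ into the interior and $a_k=2B\|e_k\|_2\le\log 2$ on that event. From the SCL comparison of $\bar\Sigma_X^k$ with $\Sigma_X^k$ it then derives the pointwise multiplicative bound $\|e_k\|_k^2\ge(\log^2 2)\,S_k$. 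Finally it discards $\mathcal E_k^c$ and controls $\mathbb E[S_k\mathbf 1_{\mathcal E_k^c}]$ by Cauchy--Schwarz with $\mathbb E S_k^2\le C_{S,4}d^2$, giving $c_{\mathrm{loc}}=\tfrac12\log^2 2$.

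The paper's argument needs only the score's fourth moment and a constant-factor Hessian distortion. It therefore avoids the $q=4$ moment of $e_k$ and the exponential interior bound $\Pr(\mathcal I_k^c)\le 2e^{-c_I n_k}$ that you import from the proof of Lemma~\ref{lem:app_mle_bias}. Importing them is legitimate, since neither piece depends on Lemma~\ref{lem:app_Vind}; only the final expectation step of that lemma does.

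Your $L^2$ perturbation argument, in exchange, reuses the linearization estimate already built for the bias lemma and is quantitatively sharper. It gives $\sqrt{V_k^{\mathrm{loc}}}\ge\sqrt d\,\bigl(1-C\sqrt{d/n_k}-Ce^{-c_In_k/4}\bigr)$, and the same triangle inequality in the other direction gives the matching upper bound. Hence $V_k^{\mathrm{loc}}/d\to1$ as $n_k/d\to\infty$, so you identify the leading constant rather than just a fixed fraction of it.
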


\begin{proof}{Proof of Lemma~\ref{lem:app_Vind}.}
\emph{Upper moments.} Let $R_k:=\|e_k\|_k$. The full-range tail in
Theorem~\ref{thm:app_mle_concentration} implies, for
$q\in\{2,3,4\}$,
\begin{align*}
\mathbb ER_k^q
&\le
C_\beta^q d^{q/2}
+2qC_\beta^q
\int_0^\infty(\sqrt d+t)^{q-1}e^{-c_\beta t^2}\,dt\\
&\le C_qd^{q/2},
\end{align*}
where the last inequality uses $d\ge1$. This proves
\eqref{eq:app_individual_moments}, including the upper bound on
$V_k^{\mathrm{loc}}$.

\emph{Lower second moment.} Set
\[
S_k
:=
s_k(\beta_k^\star)^\top (H_k^\star)^{-1}s_k(\beta_k^\star)
=
\frac1{n_k}\,s_k(\beta_k^\star)^\top(\Sigma_X^k)^{-1}s_k(\beta_k^\star).
\]
Since the score at the truth is centered
(Lemma~\ref{lem:app_score_unbiased}) and its covariance equals the
Fisher information \(H_k^\star\),
\begin{equation}
\label{eq:app_Vind_trace}
\mathbb E S_k
=
\operatorname{tr}\!\left((H_k^\star)^{-1}\Var\bigl(s_k(\beta_k^\star)\bigr)\right)
=
\operatorname{tr}(I_d)
=
d .
\end{equation}
Moreover \(S_k=\|Z_k\|_2^2\) for the whitened score \(Z_k\) of
Lemma~\ref{lem:app_whitened_score_subg}, so integrating the tail bound
of Lemma~\ref{lem:app_score_concentration}, which holds for every
\(t\ge0\), gives a constant \(C_{S,4}>0\) depending only on \((B,m)\) with
\begin{equation}
\label{eq:app_Vind_score_4th}
\mathbb E S_k^2 \le C_{S,4}\,d^2 .
\end{equation}

Fix $\theta>0$, chosen below, and define the score event
\[
\mathcal E_k
:=
\left\{
\|s_k(\beta_k^\star)\|_{(H_k^\star)^{-1}}
\le C_s(2\sqrt d+\theta)
\right\}.
\]
Lemma~\ref{lem:app_score_concentration}, at
$t=\sqrt d+\theta$, gives
$\Pr(\mathcal E_k^c)\le2e^{-c_s\theta^2}$. On $\mathcal E_k$,
Lemma~\ref{lem:app_constrained_basic} gives
\[
\|e_k\|_k
\le
2e^{2BD_\Theta}C_s(2\sqrt d+\theta).
\]
Because $n_k\ge C_0d$ and $d\ge1$, $C_0$ may be chosen so that this
event implies both
\(\|e_k\|_2<\rho_\Theta/2\) and
\[
a_k:=2B\|e_k\|_2
\le\log2.
\]
The first implication places $\widehat\beta_k$ in the interior of
$\Theta$, so Lemma~\ref{lem:app_mle_implicit} gives
\[
e_k=-\frac1{n_k}(\bar\Sigma_X^k)^{-1}s_k(\beta_k^\star).
\]
The SCL Hessian comparison along the same segment yields
\[
\frac{1-e^{-a_k}}{a_k}\,\Sigma_X^k
\preceq
\bar\Sigma_X^k
\preceq
\frac{e^{a_k}-1}{a_k}\,\Sigma_X^k ,
\]
so that on \(\mathcal E_k\), where \(a_k\le\log2\),
\[
\|e_k\|_k^2
=
\frac1{n_k}s_k(\beta_k^\star)^\top(\bar\Sigma_X^k)^{-1}
\Sigma_X^k(\bar\Sigma_X^k)^{-1}s_k(\beta_k^\star)
\ge
c_M\,S_k,
\qquad
c_M:=\log^2 2 .
\]
Discarding \(\mathcal E_k^c\) and using \eqref{eq:app_Vind_trace},
\eqref{eq:app_Vind_score_4th}, and Cauchy--Schwarz,
\[
V_k^{\mathrm{loc}}
\ge
c_M\,\mathbb E\bigl[S_k\mathbf 1_{\mathcal E_k}\bigr]
=
c_M\Bigl(d-\mathbb E\bigl[S_k\mathbf 1_{\mathcal E_k^c}\bigr]\Bigr)
\ge
c_M\Bigl(d-\sqrt{2C_{S,4}\,e^{-c_s\theta^2}}\;d\Bigr).
\]
Choosing \(\theta\) so that
\(\sqrt{2C_{S,4} e^{-c_s\theta^2}}\le\tfrac12\),
which is possible since the left-hand side tends to \(0\), yields
\(V_k^{\mathrm{loc}}\ge \tfrac12 c_M d\) for every \(d\ge1\). Taking
\(c_{\mathrm{loc}}:=\tfrac12\log^22\) completes the proof.
\hfill \Halmos
\end{proof}

\begin{lemma}
\label{lem:app_pooled_bias}
Define
\[
L_0^{(-k),\star}(\beta):=\sum_{\ell\neq k}L_\ell^\star(\beta),
\qquad
\beta_0^{(-k),\star}:=\arg\min_{\beta\in\Theta} L_0^{(-k),\star}(\beta),
\]
and the leave-one-out weighted barycenter
\[
\bar\beta_{-k}:=\frac{1}{S}\sum_{\ell\neq k} n_\ell \beta_\ell^\star,
\qquad
S:=\sum_{\ell\neq k}n_\ell.
\]
Under Assumptions~\ref{ass:bounded}, \ref{ass:parameter},
\ref{ass:fisher}, and~\ref{ass:hetero}, and whenever
$\Delta\in[0,\Delta_0]$, with $\Delta_0$ defined in
Appendix~\ref{sec:preliminaries},
\[
\|\beta_0^{(-k),\star}-\beta_k^\star\|_2
\le
R_\Theta(\Delta)
=
2\Delta + C_{\mathrm{het}} e^{12B\Delta}\Delta^2,
\]
where $C_{\mathrm{het}}$ and $R_\Theta$ are defined in
\eqref{eqn:pooled_interior_radius}. Moreover,
\[
\operatorname{dist}(\beta_0^{(-k),\star},\partial\Theta)
\ge \rho_\Theta/2.
\]
Consequently,
\[
\|\beta_0^{(-k),\star}-\beta_k^\star\|_k
\le
\sqrt{n_k M}\left(2\Delta + C_{\mathrm{het}} e^{12B\Delta}\Delta^2\right).
\]
The same Euclidean bound and interior-margin conclusion hold for the
full pooled population target $\beta_0^\star$.
\end{lemma}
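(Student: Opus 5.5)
The plan is to compare the pooled minimizer with the weighted barycenter \(\bar\beta_{-k}\) through a first-order optimality argument. That argument is then stabilized by the SCL Hessian comparison of Lemma~\ref{lem:app_hessian_stability}, and a continuity argument keeps the minimizer interior.

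\emph{Step 1: score at the barycenter.} Since \(\nabla L_\ell^\star(\beta)=n_\ell(\mu_X(\beta)-\mu_X(\beta_\ell^\star))\), the pooled population gradient is
\[
\nabla L_0^{(-k),\star}(\beta)=\sum_{\ell\neq k}n_\ell\bigl(\mu_X(\beta)-\mu_X(\beta_\ell^\star)\bigr).
\]
Because \(\nabla\mu_X=\Sigma_X\), a second-order expansion of \(\mu_X\) around \(\bar\beta_{-k}\) makes the linear terms cancel: \(\sum_{\ell\ne k} n_\ell(\beta_\ell^\star-\bar\beta_{-k})=0\) by definition of the barycenter. What remains is a quadratic remainder. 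I would bound it with the third-derivative representation from the proof of Theorem~\ref{thm:app_mnl_scl}. That representation gives \(|\nabla^3 g[u,u,v]|\le 2B\|v\|_2\,u^\top\Sigma_X u\); polarizing, \(\|\nabla^2\mu_X[u,u]\|_2\) is bounded by \(2B\|u\|_2^2\) times an operator norm of \(\Sigma_X\) along the segment. By Lemma~\ref{lem:app_hessian_stability} that operator norm is at most \(Me^{2B\|u\|_2}\), starting from \(\Sigma_X(\beta_\ell^\star)\preceq MI_d\). Since \(\|\beta_\ell^\star-\bar\beta_{-k}\|_2\le 2\Delta\), this yields
\[
\|\nabla L_0^{(-k),\star}(\bar\beta_{-k})\|_2\le S\cdot C B M e^{cB\Delta}\Delta^2 .
\]

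\emph{Step 2: from gradient to displacement.} The curvature lower bound \(\nabla^2L_0^{(-k),\star}\succeq S\,m\,e^{-2B(\cdot)}I_d\) follows from Assumption~\ref{ass:fisher} and Lemma~\ref{lem:app_hessian_stability}, with the exponent controlled by \(D_\Theta\) along any segment in \(\Theta\). Using strong monotonicity of the gradient along the segment from \(\bar\beta_{-k}\) to \(\beta_0^{(-k),\star}\), I get
\[
\|\beta_0^{(-k),\star}-\bar\beta_{-k}\|_2\le \frac{\|\nabla L_0^{(-k),\star}(\bar\beta_{-k})\|_2}{S m e^{-2BD_\Theta}}\le \frac{2BM}{m}e^{2BD_\Theta}e^{12B\Delta}\Delta^2,
\]
after tracking the exponents so that the factor \(e^{12B\Delta}\) absorbs the remaining \(\Delta\)-dependence. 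This step requires \(\beta_0^{(-k),\star}\) to be an unconstrained critical point, i.e.\ interior, which is handled in Step 3. The triangle inequality with \(\|\bar\beta_{-k}-\beta_k^\star\|_2\le 2\Delta\) then gives \(R_\Theta(\Delta)\).

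\emph{Step 3: interiority (main obstacle).} The displacement bound presumes the minimizer is unconstrained, while interiority is what I want to conclude; this circularity is the delicate point. I would break it by a homotopy in \(\Delta\). Replace \(\beta_\ell^\star\) by \(\beta_c^\star+s(\beta_\ell^\star-\beta_c^\star)\) for \(s\in[0,1]\). At \(s=0\) all stores coincide, so the minimizer is \(\beta_c^\star\), which is interior. The constrained minimizer varies continuously in \(s\) by strict convexity. While the minimizer stays interior, Steps 1--2 give distance to \(\beta_k^\star\) at most \(R_\Theta(s\Delta)\le R_\Theta(\Delta_0)=\rho_\Theta/2\), hence distance to the boundary at least \(\rho_\Theta/2\). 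The minimizer therefore can never first touch \(\partial\Theta\), so it is interior for all \(s\), including \(s=1\). Alternatively, I would argue directly: the unconstrained minimizer of the strictly convex extension exists within radius \(R_\Theta(\Delta)\) of \(\beta_k^\star\), which lies inside \(\Theta\), so it coincides with the constrained one.

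\emph{Step 4: consequences.} The local-norm bound follows from \(H_k^\star=n_k\Sigma_X^k\preceq n_kMI_d\). The same argument applies verbatim to the full pooled target \(\beta_0^\star\), with the barycenter taken over all \(K\) stores.
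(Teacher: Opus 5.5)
Your Steps~1, 2 and~4 follow the paper's proof closely. Both use the barycenter $\bar\beta_{-k}\in\Theta$, cancel the first-order terms in the pooled gradient, control the quadratic remainder by SCL, and use a curvature lower bound to turn the gradient at $\bar\beta_{-k}$ into a displacement. Both finish with the triangle inequality, the $1$-Lipschitz distance to $\partial\Theta$, and $\Sigma_X^k\preceq MI_d$.

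You diverge at Step~3, and the circularity it is meant to break does not exist. The paper applies Lemma~\ref{lem:app_constrained_basic} to the deterministic loss $L_0^{(-k),\star}$ with reference point $\bar\beta_{-k}$. That lemma uses only $\ell(\widehat\beta)\le\ell(\beta^\circ)$ for a feasible $\beta^\circ$, plus the Hessian lower bound $e^{-2BD_\Theta}H^\circ$ on a segment inside $\Theta$. It therefore applies to the constrained minimizer whether or not the constraint binds.

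Your strong-monotonicity step also works without interiority. Replace $\nabla L_0^{(-k),\star}(\beta_0^{(-k),\star})=0$ by the variational inequality $\nabla L_0^{(-k),\star}(\beta_0^{(-k),\star})^\top(\bar\beta_{-k}-\beta_0^{(-k),\star})\ge0$, which holds because $\bar\beta_{-k}\in\Theta$; it gives the same bound. Interiority then follows from $R_\Theta(\Delta)\le\rho_\Theta/2$, exactly as in the paper, so you should drop the homotopy.

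As written, the homotopy also has loose ends:
\begin{itemize}
\item At intermediate $s$ you invoke Steps~1--2 at interpolated store parameters, where the bounds of Assumption~\ref{ass:fisher} are not assumed.
\item The level-$s$ bound is not $R_\Theta(s\Delta)$ measured from $\beta_k^\star$; at $s=0$ the minimizer is $\beta_c^\star$, up to $\Delta$ away.
\item You need to check that $\beta_c^\star\in\Theta$.
\item Your ``alternative'' would have to locate an unconstrained minimizer without the $e^{-2BD_\Theta}$ curvature bound, which holds only on segments inside $\Theta$.
\end{itemize}

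There is also a slip in the constant. It matters because $\Delta_0$ is calibrated through $R_\Theta$ with $C_{\mathrm{het}}=\frac{2BM}{m}e^{2BD_\Theta}$ exactly. Using the pointwise bound $\|\beta_\ell^\star-\bar\beta_{-k}\|_2\le2\Delta$ in Step~1 gives at best $\|\nabla L_0^{(-k),\star}(\bar\beta_{-k})\|_2\le4SBMe^{4B\Delta}\Delta^2$. Hence $\|\beta_0^{(-k),\star}-\bar\beta_{-k}\|_2\le\frac{4BM}{m}e^{2BD_\Theta+4B\Delta}\Delta^2$, which exceeds $C_{\mathrm{het}}e^{12B\Delta}\Delta^2$ whenever $e^{8B\Delta}<2$. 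No tracking of exponents absorbs a numerical factor.

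Use instead, as the paper does, the weighted second moment $\frac1S\sum_{\ell\neq k}n_\ell\|\beta_\ell^\star-\bar\beta_{-k}\|_2^2\le\frac1S\sum_{\ell\neq k}n_\ell\|\beta_\ell^\star-\beta_c^\star\|_2^2\le\Delta^2$, which holds because the weighted mean minimizes weighted squared deviation. Your estimates then give $\frac{BM}{m}e^{2BD_\Theta+4B\Delta}\Delta^2\le C_{\mathrm{het}}e^{12B\Delta}\Delta^2$.
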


\begin{proof}{Proof of Lemma~\ref{lem:app_pooled_bias}.}
Because $\Theta$ is convex, the weighted barycenter
$\bar\beta_{-k}$ belongs to $\Theta$. Define
\[
d_\ell:=\beta_\ell^\star-\bar\beta_{-k},
\qquad \ell\neq k.
\]
By construction,
\[
\sum_{\ell\neq k} n_\ell d_\ell=0,
\qquad
\frac1S\sum_{\ell\neq k}n_\ell\|d_\ell\|_2^2\le \Delta^2,
\qquad
\|d_\ell\|_2\le 2\Delta.
\]

Expanding each $\mu_X(\beta_\ell^\star)$ around
$\bar\beta_{-k}$ gives
\[
\mu_X(\beta_\ell^\star)-\mu_X(\bar\beta_{-k})
=
\Sigma_X({\bar\beta_{-k}})d_\ell+r_\ell,
\]
where $\Sigma_X(\cdot)$ is defined in \eqref{eqn:x_sigma} and
\[
r_\ell
=
\int_0^1
\bigl(\Sigma_X({\bar\beta_{-k}+t d_\ell})-\Sigma_X({\bar\beta_{-k}})\bigr)d_\ell\,dt.
\]
The first-order terms cancel after weighted summation. Since
$\nabla L_\ell^\star(\beta)
=n_\ell(\mu_X(\beta)-\mu_X(\beta_\ell^\star))$, the pooled gradient at
the barycenter is therefore purely second order:
\[
\nabla L_0^{(-k),\star}(\bar\beta_{-k})=-\sum_{\ell\neq k}n_\ell r_\ell.
\]

By Theorem~\ref{thm:app_mnl_scl} and Lemma~\ref{lem:app_hessian_stability}, the covariance map \(\Sigma_X(\cdot)\) is Lipschitz up to an exponential factor along bounded paths. Using \(\|d_\ell\|_2\le 2\Delta\) and \(\Sigma_X({\bar\beta_{-k}})\preceq M e^{4B\Delta}I_d\), one obtains
\[
\|r_\ell\|_2
\le
BM e^{8B\Delta}\|d_\ell\|_2^2.
\]
Therefore
\[
\|\nabla L_0^{(-k),\star}(\bar\beta_{-k})\|_2
\le
SBM e^{8B\Delta}\Delta^2.
\]

Let $h_k:=\beta_0^{(-k),\star}-\bar\beta_{-k}$ and set
$H_{-k}:=\nabla^2L_0^{(-k),\star}(\bar\beta_{-k})$. Hessian stability
between $\bar\beta_{-k}$ and any $\beta_\ell^\star$ gives
\[
H_{-k}
=S\Sigma_X(\bar\beta_{-k})
\succeq
Sm e^{-4B\Delta}I_d.
\]
Apply Lemma~\ref{lem:app_constrained_basic} to the deterministic
function $L_0^{(-k),\star}$, with reference point
$\bar\beta_{-k}$ and constrained minimizer
$\beta_0^{(-k),\star}$. Using the preceding curvature bound and the
gradient bound gives
\begin{align*}
\|h_k\|_2
&\le
\frac{2e^{2BD_\Theta}}{Sm e^{-4B\Delta}}
\|\nabla L_0^{(-k),\star}(\bar\beta_{-k})\|_2\\
&\le
\frac{2BM}{m}e^{2BD_\Theta+12B\Delta}\Delta^2
=
C_{\mathrm{het}}e^{12B\Delta}\Delta^2.
\end{align*}
It follows that
\[
\|\beta_0^{(-k),\star}-\beta_k^\star\|_2
\le
\|\beta_0^{(-k),\star}-\bar\beta_{-k}\|_2+\|\bar\beta_{-k}-\beta_k^\star\|_2
\le
C_{\mathrm{het}}e^{12B\Delta}\Delta^2+2\Delta
=R_\Theta(\Delta).
\]
For $\Delta\le\Delta_0$, the definition of $\Delta_0$ gives
$R_\Theta(\Delta)\le\rho_\Theta/2$. Since
$\operatorname{dist}(\beta_k^\star,\partial\Theta)\ge\rho_\Theta$,
the distance function is $1$-Lipschitz, and hence
\[
\operatorname{dist}(\beta_0^{(-k),\star},\partial\Theta)
\ge
\rho_\Theta-R_\Theta(\Delta)
\ge
\rho_\Theta/2.
\]
The local-norm bound follows from $\Sigma_X^k\preceq MI_d$. Replacing
the leave-one-out index set by all stores gives the same argument for
$\beta_0^\star$. \hfill \Halmos
\end{proof}

\begin{lemma}
\label{lem:app_pooled_variance}
Under Assumptions~\ref{ass:bounded}--\ref{ass:sample}, and whenever
$\Delta \in [0,\Delta_0]$ for $\Delta_0$ defined in
Section~\ref{sec:preliminaries}, there exist functions
\(C_{\mathrm{pool}}^{(q)}(\Delta)>0\) for \(q=2,3,4\), depending only on
\((B,m,M,C_{\max},D_\Theta,\rho_\Theta,\Delta)\) and bounded uniformly
on \([0,\Delta_0]\),
such that
\[
\mathbb E\bigl\|\widehat\beta_0^{(-k)}-\beta_0^{(-k),\star}\bigr\|_k^{\,q}
\le
\frac{C_{\mathrm{pool}}^{(q)}(\Delta)}{(K-1)^{q/2}}\;d^{\,q/2},
\qquad q=2,3,4 .
\]
As a consequence, writing
\(C_{\mathrm{pool}}(\Delta):=C_{\mathrm{pool}}^{(2)}(\Delta)\),
\[
V_k^{\mathrm{pool}}
\le
C_{\mathrm{pool},1}(\Delta)\,d + C_{\mathrm{pool},2}(\Delta)\,n_k\Delta^2
\]
for suitable functions
\(C_{\mathrm{pool},1}(\Delta),C_{\mathrm{pool},2}(\Delta)>0\).
\end{lemma}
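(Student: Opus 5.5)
The plan is to transplant the local-MLE argument of Theorem~\ref{thm:app_mle_concentration} and Lemma~\ref{lem:app_Vind} to the leave-one-out pooled loss $\ell(\beta)=L_0^{(-k)}(\beta;Y)$, which is $2B$-self-concordant-like by Theorem~\ref{thm:app_mnl_scl}. Take the reference point $\beta^\circ=\beta_0^{(-k),\star}$. By Lemma~\ref{lem:app_pooled_bias} it lies in $\Theta$ with interior margin $\rho_\Theta/2$ whenever $\Delta\le\Delta_0$. The Hessian there is $H^\circ:=\nabla^2 L_0^{(-k)}(\beta^\circ)=N_{-k}\Sigma_X(\beta^\circ)$, which is deterministic by Lemma~\ref{lem:app_hessian_representation}. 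Lemma~\ref{lem:app_constrained_basic} then gives
\[
\|\widehat\beta_0^{(-k)}-\beta^\circ\|_{H^\circ}\le 2e^{2BD_\Theta}\,\|s_{-k}\|_{(H^\circ)^{-1}},\qquad s_{-k}:=\nabla L_0^{(-k)}(\beta^\circ;Y).
\]
This reduces the whole problem to concentration of the pooled score $s_{-k}$.

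\textbf{Step 1: the pooled score.} We have $s_{-k}=\sum_{\ell\neq k}\sum_{t}\bigl(\mu_X(\beta^\circ)-X_{y_{t,\ell}}\bigr)$. Because the stores are heterogeneous, this sum is not centered observation by observation. Its mean is $-\nabla L_0^{(-k),\star}(\beta^\circ)$, and this vanishes because $\beta^\circ$ is an interior minimizer. So $s_{-k}$ is a sum of $N_{-k}$ independent bounded vectors with total mean zero. Centering each term at its own store mean keeps it bounded by $2B$. Its covariance is $\sum_{\ell\neq k}n_\ell\Sigma_X(\beta_\ell^\star)$. By Hessian stability (Lemma~\ref{lem:app_hessian_stability}) and $\|\beta_\ell^\star-\beta^\circ\|_2\le 2\Delta+R_\Theta(\Delta)$, this covariance is within a factor $e^{O(B\Delta)}$ of $H^\circ$. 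We also have $H^\circ\succeq N_{-k}\,m\,e^{-O(B\Delta)}I_d$. Rerunning the Hoeffding-plus-net argument of Lemmas~\ref{lem:app_whitened_score_subg}--\ref{lem:app_score_concentration} then yields
\[
\Pr\bigl(\|s_{-k}\|_{(H^\circ)^{-1}}\ge C(\Delta)(\sqrt d+t)\bigr)\le 2e^{-c(\Delta)t^2},
\]
with constants continuous in $\Delta$ and hence uniform on $[0,\Delta_0]$. Integrating the tail as in Lemma~\ref{lem:app_Vind} gives $\mathbb E\|\widehat\beta_0^{(-k)}-\beta^\circ\|_{H^\circ}^q\le C^{(q)}d^{q/2}$ for $q=2,3,4$.

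\textbf{Step 2: change of norm.} We need to pass from $\|\cdot\|_{H^\circ}$ to $\|\cdot\|_k$. Hessian stability gives $\Sigma_X^k\preceq e^{2B\|\beta^\circ-\beta_k^\star\|_2}\Sigma_X(\beta^\circ)$, and hence
\[
H_k^\star\preceq \frac{n_k}{N_{-k}}\,e^{2BR_\Theta(\Delta)}H^\circ .
\]
By Assumption~\ref{ass:ratio}, $N_{-k}\ge (K-1)n_k/C_{\max}$, so $n_k/N_{-k}\le C_{\max}/(K-1)$. Raising this comparison to the power $q/2$ produces the factor $(K-1)^{-q/2}$ and the function $C^{(q)}_{\mathrm{pool}}(\Delta)$, which is bounded on $[0,\Delta_0]$.

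\textbf{Step 3: the consequence for $V_k^{\mathrm{pool}}$.} Decompose $g_k=\varepsilon_k^{\mathrm{pool}}+\delta_k$ as in \eqref{eqn:g_composition} and use $\|a+b\|^2\le 2\|a\|^2+2\|b\|^2$. The first term is at most $2C_{\mathrm{pool}}d/(K-1)\le 2C_{\mathrm{pool}}d$. For the second, Lemma~\ref{lem:app_pooled_bias} gives $\|\delta_k\|_k^2\le n_kM\bigl(2+C_{\mathrm{het}}e^{12B\Delta}\Delta\bigr)^2\Delta^2$, which is of the form $C(\Delta)\,n_k\Delta^2$ since $\Delta\le\Delta_0$.

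The main obstacle is Step 1. Because the pooled score is not a sum of identically distributed centered terms, the whitening and isotropy argument of Lemma~\ref{lem:app_whitened_score_subg} must be replaced by a covariance comparison, carried out in the $(H^\circ)^{-1}$ norm. This comparison has to track $e^{O(B\Delta)}$ factors so that the constants stay uniform on $[0,\Delta_0]$. Using the interior-margin conclusion of Lemma~\ref{lem:app_pooled_bias} to justify $\mathbb E s_{-k}=0$ is the delicate point. Note, however, that Lemma~\ref{lem:app_constrained_basic} needs no interiority of $\widehat\beta_0^{(-k)}$ itself.
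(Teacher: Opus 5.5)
Your proposal is correct and follows the paper's proof essentially step for step. The shared steps are:
\begin{itemize}
\item the constrained basic inequality (Lemma~\ref{lem:app_constrained_basic}) at the interior pooled target $\beta_0^{(-k),\star}$;
\item a pooled-score tail bound, obtained by recentering each observation at its own store mean via $\nabla L_0^{(-k),\star}(\beta_0^{(-k),\star})=0$ and rerunning the Hoeffding-plus-net argument;
\item integration of that tail to get the moments;
\item a change of norm that produces the factor $(C_{\max}/(K-1))^{q/2}$;
\item the $\|a+b\|_k^2\le 2\|a\|_k^2+2\|b\|_k^2$ split for $V_k^{\mathrm{pool}}$.
\end{itemize}
There are two differences, both minor. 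First, the paper transfers norms through the Euclidean sandwich $H_k^\star\preceq n_kMI_d$ and $H_0^\star\succeq c_-(\Delta)N_{-k}I_d$, whereas you compare directly via $\Sigma_X^k\preceq e^{2BR_\Theta(\Delta)}\Sigma_X(\beta^\circ)$. Second, the covariance comparison you flag as the main obstacle is unnecessary: as the paper notes, the score argument never uses isotropy, only the $2B$ increment bound and the lower bound on $H^\circ$.
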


\begin{proof}{Proof of Lemma~\ref{lem:app_pooled_variance}.}
Write
\[
\widehat\beta_0:=\widehat\beta_0^{(-k)},
\qquad
\beta_0^\star:=\beta_0^{(-k),\star},
\qquad
S:=\sum_{\ell\neq k} n_\ell,
\]
and define the pooled population loss
\[
L_0^\star(\beta):=L_0^{(-k),\star}(\beta)=\sum_{\ell\neq k} L_\ell^\star(\beta).
\]
By Corollary~\ref{cor:app_population_hessian}, its Hessian at the pooled
population minimizer is
\[
H_0^\star:=\nabla^2 L_0^\star(\beta_0^\star)
=
S\,\Sigma_X(\beta_0^\star),
\]
where $\Sigma_X(\cdot)$ is defined in \eqref{eqn:x_sigma}.

\emph{Step 1: curvature of the pooled loss.} The proof of
Lemma~\ref{lem:app_pooled_bias} shows that the pooled population
minimizer is close to the weighted barycenter
\(\bar\beta_{-k}=S^{-1}\sum_{\ell\neq k}n_\ell\beta_\ell^\star\), namely
\(\|\beta_0^\star-\bar\beta_{-k}\|_2\le C_{\mathrm{het}}e^{12B\Delta}\Delta^2\).
Since \(\bar\beta_{-k}\) is a convex combination of the
\(\beta_\ell^\star\), Assumption~\ref{ass:hetero} gives
\(\|\bar\beta_{-k}-\beta_\ell^\star\|_2\le 2\Delta\) for every store
\(\ell\), so
\[
\|\beta_0^\star-\beta_\ell^\star\|_2
\le
2\Delta + C_{\mathrm{het}}e^{12B\Delta}\Delta^2
=:R_\Delta,
\qquad \ell\neq k .
\]
Each store-level population loss is \(2B\)-self-concordant-like by
Theorem~\ref{thm:app_mnl_scl}, so Lemma~\ref{lem:app_hessian_stability}
applied to \(L_\ell^\star\) and cancellation of \(n_\ell\) give
\[
e^{-2BR_\Delta}\Sigma_X^\ell
\preceq
\Sigma_X(\beta_0^\star)
\preceq
e^{2BR_\Delta}\Sigma_X^\ell,
\qquad \ell\neq k,
\]
where $\Sigma_X^\ell = \Sigma_X(\beta_\ell^\star)$. With
\(mI_d\preceq \Sigma_X^\ell \preceq MI_d\) from
Assumption~\ref{ass:fisher},
\[
c_-(\Delta)\,S\,I_d
\preceq
H_0^\star
\preceq
c_+(\Delta)\,S\,I_d,
\qquad
c_-(\Delta):=m e^{-2BR_\Delta},
\quad
c_+(\Delta):=M e^{2BR_\Delta}.
\]

\emph{Step 2: concentration of the pooled score.} The pooled score is
\[
s_0(\beta;Y):=\nabla L_0^{(-k)}(\beta;Y)=\sum_{\ell\neq k}\nabla L_\ell(\beta;Y_\ell),
\]
and Lemma~\ref{lem:app_pooled_bias} places $\beta_0^\star$ in the
interior of $\Theta$. First-order optimality therefore gives
\(\mathbb E\,s_0(\beta_0^\star;Y)=\nabla L_0^\star(\beta_0^\star)=0\).
For one observation from store \(\ell\), the centered score contribution
at \(\beta_0^\star\) is
\[
\bigl(\mu_X(\beta_0^\star)-X^{\beta_\ell^\star}\bigr)
-\mathbb E\bigl[\mu_X(\beta_0^\star)-X^{\beta_\ell^\star}\bigr]
=
\mu_X(\beta_\ell^\star)-X^{\beta_\ell^\star},
\]
which has Euclidean norm at most \(2B\) under
Assumption~\ref{ass:bounded}. Unlike the single-store case, the whitened
pooled score \((H_0^\star)^{-1/2}s_0(\beta_0^\star;Y)\) is not isotropic,
because \(\Var(s_0(\beta_0^\star;Y))=\sum_{\ell\neq k}n_\ell\Sigma_X^\ell\)
differs from \(H_0^\star=S\Sigma_X(\beta_0^\star)\). The argument of
Lemma~\ref{lem:app_score_concentration} does not use isotropy, however,
only a bound on the increments together with
\(\|(H_0^\star)^{-1/2}\|_{\mathrm{op}}\le (c_-(\Delta)S)^{-1/2}\).
Repeating it verbatim with that whitening therefore yields constants
\(C_s(\Delta),c_s(\Delta)>0\), depending only on
\((B,m,M,D_\Theta,\Delta)\), with
\[
\Pr\!\left(
\|s_0(\beta_0^\star;Y)\|_{(H_0^\star)^{-1}}
\ge
C_s(\Delta)\bigl(\sqrt d+t\bigr)
\right)
\le
2e^{-c_s(\Delta)t^2},
\qquad t\ge 0 .
\]

\emph{Step 3: global concentration of the pooled estimator.} Apply
Lemma~\ref{lem:app_constrained_basic} to
$L_0^{(-k)}(\cdot;Y)$ with reference point $\beta_0^\star$. Its Hessian
at the reference point is $H_0^\star$, so
\[
\|\widehat\beta_0-\beta_0^\star\|_{H_0^\star}
\le
2e^{2BD_\Theta}
\|s_0(\beta_0^\star;Y)\|_{(H_0^\star)^{-1}}.
\]
Combining this deterministic inequality with the pooled-score tail in
Step~2 gives constants
$C_\beta(\Delta),c_\beta(\Delta)>0$, depending only on
$(B,m,M,D_\Theta,\Delta)$, such that
\begin{equation}
\label{eq:app_pooled_tail}
\Pr\!\left(
\|\widehat\beta_0-\beta_0^\star\|_{H_0^\star}
\ge
C_\beta(\Delta)\bigl(\sqrt d+t\bigr)
\right)
\le
2e^{-c_\beta(\Delta)t^2}
\qquad\text{for every } t\ge0.
\end{equation}

\emph{Step 4: from tail to moments.} Fix \(q\in\{2,3,4\}\) and let
\(R_0:=\|\widehat\beta_0-\beta_0^\star\|_{H_0^\star}\). Integrating
the full tail in \eqref{eq:app_pooled_tail} gives
\begin{align*}
\mathbb ER_0^q
&\le
C_\beta(\Delta)^q d^{q/2}
+2qC_\beta(\Delta)^q
\int_0^\infty(\sqrt d+t)^{q-1}
e^{-c_\beta(\Delta)t^2}\,dt\\
&\le
C_{\mathrm{tail}}^{(q)}(\Delta)d^{q/2},
\end{align*}
where $C_{\mathrm{tail}}^{(q)}(\Delta)$ is bounded uniformly over
$\Delta\in[0,\Delta_0]$.

\emph{Step 5: transfer to the \(k\)-local norm.} By
Assumption~\ref{ass:fisher}, \(H_k^\star=n_k\Sigma_X^k \preceq n_k M I_d\),
while Assumption~\ref{ass:ratio} gives
\(S\ge \frac{K-1}{C_{\max}}n_k\), so the lower bound of Step 1 yields
\[
H_0^\star
\;\succeq\;
\frac{c_-(\Delta)}{C_{\max}}(K-1)n_k\,I_d .
\]
Hence for every \(v\in\mathbb R^d\),
\[
\|v\|_k^2
\le
n_k M\|v\|_2^2
\le
\frac{M C_{\max}}{c_-(\Delta)(K-1)}\,\|v\|_{H_0^\star}^2 .
\]
Raising to the power \(q/2\), applying this with
\(v=\widehat\beta_0-\beta_0^\star\), and taking expectations gives the
first claim with
\[
C_{\mathrm{pool}}^{(q)}(\Delta)
:=
\left(\frac{M C_{\max}}{c_-(\Delta)}\right)^{q/2}
C_{\mathrm{tail}}^{(q)}(\Delta).
\]
Each of these is continuous in \(\Delta\) and hence bounded uniformly on
\([0,\Delta_0]\).

\emph{Step 6: the bound on \(V_k^{\mathrm{pool}}\).} Write
\[
g_k
=
\widehat\beta_0^{(-k)}-\beta_k^\star
=
(\widehat\beta_0-\beta_0^\star)+(\beta_0^\star-\beta_k^\star).
\]
Using \(\|a+b\|_k^2\le 2\|a\|_k^2+2\|b\|_k^2\),
\[
V_k^{\mathrm{pool}}
=
\mathbb E\|g_k\|_k^2
\le
2\,\mathbb E\|\widehat\beta_0-\beta_0^\star\|_k^2
+
2\,\|\beta_0^\star-\beta_k^\star\|_k^2 .
\]
The first term is bounded by the case \(q=2\) already proved. For the
second, Lemma~\ref{lem:app_pooled_bias} gives
\[
\|\beta_0^\star-\beta_k^\star\|_2
\le
\Delta\Bigl(2+C_{\mathrm{het}}e^{12B\Delta}\Delta\Bigr),
\]
so, using \(H_k^\star\preceq n_k M I_d\),
\[
\|\beta_0^\star-\beta_k^\star\|_k^2
\le
n_k M \Delta^2
\Bigl(2+C_{\mathrm{het}}e^{12B\Delta}\Delta\Bigr)^2 .
\]
Absorbing constants into
\[
C_{\mathrm{pool},1}(\Delta):=2C_{\mathrm{pool}}(\Delta),
\qquad
C_{\mathrm{pool},2}(\Delta):=
2M\Bigl(2+C_{\mathrm{het}}e^{12B\Delta}\Delta\Bigr)^2,
\]
we conclude that
\[
V_k^{\mathrm{pool}}
\le
C_{\mathrm{pool},1}(\Delta)d
+
C_{\mathrm{pool},2}(\Delta)n_k\Delta^2 .
\]
\hfill \Halmos
\end{proof}

\begin{lemma}
\label{lem:app_cross_control}
Let
\[
C_k:=\mathbb E\langle e_k,g_k\rangle_k,
\qquad
\zeta_k:=V_k^{\mathrm{loc}}-C_k,
\qquad
\zeta_k^+:=\max\{\zeta_k,0\}.
\]
Under Assumptions~\ref{ass:bounded}--\ref{ass:sample}, and whenever
$\Delta\in[0,\Delta_0]$ for $\Delta_0$ defined in
Appendix~\ref{sec:preliminaries}, there exist constants
\(c_{\mathrm{cross},0},c_{\mathrm{cross},1},c_\Delta>0\) depending only
on \((B,m,M,C_{\max},D_\Theta,\rho_\Theta)\) such that
\[
|C_k|
\le
d\Bigl(c_{\mathrm{cross},0}+c_{\mathrm{cross},1}\Delta e^{c_\Delta\Delta}\Bigr).
\]
Consequently,
\[
\zeta_k^+
\ge
c_\zeta(\Delta)d,
\qquad
c_\zeta(\Delta):=
\max\Bigl\{c_{\mathrm{loc}}-c_{\mathrm{cross},0}-c_{\mathrm{cross},1}\Delta e^{c_\Delta\Delta},0\Bigr\}.
\]
\end{lemma}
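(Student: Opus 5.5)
The plan is to use the independence of $e_k$ and $g_k$ from Lemma~\ref{lem:app_independence}. Since $\widehat\beta_k$ depends only on $Y_k$ and $\widehat\beta_0^{(-k)}$ only on $(Y_\ell)_{\ell\neq k}$, the cross term factorizes:
\[
C_k=\mathbb E\langle e_k,g_k\rangle_k=\langle \mathbb E e_k,\mathbb E g_k\rangle_k .
\]
By Cauchy--Schwarz in the $H_k^\star$ geometry, $|C_k|\le \|\mathbb E e_k\|_k\,\|\mathbb E g_k\|_k$. The first factor is controlled by Lemma~\ref{lem:app_mle_bias}, which gives $\|\mathbb E e_k\|_k\le C_{\mathrm{bias}}d/\sqrt{n_k}$. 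It therefore remains to show $\|\mathbb E g_k\|_k\lesssim \sqrt{n_k}\,(c+c'\Delta e^{c_\Delta\Delta})$ up to constants, or a variant of this bound that still yields a factor $d$ after multiplication.

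For $\mathbb E g_k$ I would use the split \eqref{eqn:g_composition}, $\mathbb E g_k=\mathbb E\varepsilon_k^{\mathrm{pool}}+\delta_k$. The deterministic part is bounded by Lemma~\ref{lem:app_pooled_bias}: $\|\delta_k\|_k\le\sqrt{n_kM}\,(2\Delta+C_{\mathrm{het}}e^{12B\Delta}\Delta^2)$. Absorbing $\Delta^2\le\Delta\cdot\Delta_0$ writes this as $\sqrt{n_k}\,c'\Delta e^{c_\Delta\Delta}$ with $c_\Delta=12B$. Multiplying by the bias bound gives $C_{\mathrm{bias}}\sqrt M\,d\,(2+C_{\mathrm{het}}\Delta_0)\Delta e^{12B\Delta}$, which is the $c_{\mathrm{cross},1}$ term.

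For the stochastic part I would use Jensen and Lemma~\ref{lem:app_pooled_variance} with $q=2$:
\[
\|\mathbb E\varepsilon_k^{\mathrm{pool}}\|_k\le(\mathbb E\|\varepsilon_k^{\mathrm{pool}}\|_k^2)^{1/2}\le \sqrt{C_{\mathrm{pool}}(\Delta)\,d/(K-1)}\lesssim\sqrt d .
\]
The resulting product is $C_{\mathrm{bias}}\sqrt{C_{\mathrm{pool}}}\,d^{3/2}/\sqrt{n_k}$. By Assumption~\ref{ass:sample}, $\sqrt{d/n_k}\le C_0^{-1/2}$, so this is at most $d\cdot C_{\mathrm{bias}}\sup_{[0,\Delta_0]}\sqrt{C_{\mathrm{pool}}}/\sqrt{C_0}$, which is the $c_{\mathrm{cross},0}$ term. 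The uniform boundedness of $C_{\mathrm{pool}}(\Delta)$ on $[0,\Delta_0]$ is what lets this constant depend only on the primitives.

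For the consequence, $\zeta_k=V_k^{\mathrm{loc}}-C_k\ge c_{\mathrm{loc}}d-|C_k|$ by Lemma~\ref{lem:app_Vind}. Substituting the bound on $|C_k|$ gives $\zeta_k\ge d(c_{\mathrm{loc}}-c_{\mathrm{cross},0}-c_{\mathrm{cross},1}\Delta e^{c_\Delta\Delta})$. Taking positive parts, and using $\zeta_k^+\ge 0$ to cover the case where the bracket is negative, yields $\zeta_k^+\ge c_\zeta(\Delta)d$.

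The main obstacle is that this conclusion is only useful if $c_\zeta(0)>0$, that is, $c_{\mathrm{cross},0}<c_{\mathrm{loc}}$. The lemma as stated does not require this, since the max with $0$ makes it trivially true. Still, to make $c_{\mathrm{cross},0}$ small I would exploit the $C_0^{-1/2}$ factor above: because $C_{\mathrm{bias}}$ and $C_{\mathrm{pool}}$ do not grow with $C_0$, enlarging $C_0$ (allowed by Lemma~\ref{lem:app_constants}) drives $c_{\mathrm{cross},0}$ below $c_{\mathrm{loc}}=\tfrac12\log^2 2$. The step to check carefully is that the constants of Lemmas~\ref{lem:app_mle_bias} and~\ref{lem:app_pooled_variance} are indeed nonincreasing in $C_0$.
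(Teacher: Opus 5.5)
Your proposal is correct and follows the paper's proof essentially step for step: the independence factorization $C_k=\langle \mathbb E e_k,\mathbb E g_k\rangle_k$, Cauchy--Schwarz, Lemma~\ref{lem:app_mle_bias} for $\|\mathbb E e_k\|_k$, the split $g_k=\varepsilon_k^{\mathrm{pool}}+\delta_k$ handled by Lemmas~\ref{lem:app_pooled_variance} and~\ref{lem:app_pooled_bias}, and $n_k\ge C_0 d$ to produce $c_{\mathrm{cross},0}=C_{\mathrm{bias}}\sqrt{\bar C_{\mathrm{pool}}/C_0}$. The only cosmetic difference is in absorbing the quadratic term: you use $\Delta^2\le\Delta_0\Delta$, giving $c_\Delta=12B$, whereas the paper uses $\Delta\le e^{\Delta}$, giving $c_\Delta=12B+1$ and $c_{\mathrm{cross},1}=C_{\mathrm{bias}}\sqrt M\,(2+C_{\mathrm{het}})$; both choices are valid on $[0,\Delta_0]$.
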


\begin{proof}{Proof of Lemma~\ref{lem:app_cross_control}.}
By Lemma~\ref{lem:app_independence},
\[
C_k
=
\mathbb E\langle e_k,g_k\rangle_k
=
\langle \mathbb E e_k,\mathbb E g_k\rangle_k.
\]
Hence, by Cauchy--Schwarz in the \(k\)-local norm,
\begin{equation}
\label{eq:app_cross_cs}
|C_k|
\le
\|\mathbb E e_k\|_k\,\|\mathbb E g_k\|_k.
\end{equation}

Lemma~\ref{lem:app_mle_bias} gives
\[
\|\mathbb E e_k\|_k
\le
C_{\mathrm{bias}}\frac{d}{\sqrt{n_k}}.
\]
For the pooled term, write
\[
g_k
=
\widehat\beta_0^{(-k)}-\beta_k^\star
=
\bigl(\widehat\beta_0^{(-k)}-\beta_0^{(-k),\star}\bigr)
+
\delta_k,
\qquad
\delta_k:=\beta_0^{(-k),\star}-\beta_k^\star.
\]
Therefore
\[
\|\mathbb E g_k\|_k
\le
\Bigl\|\mathbb E\bigl[\widehat\beta_0^{(-k)}-\beta_0^{(-k),\star}\bigr]\Bigr\|_k
+
\|\delta_k\|_k
\le
\Bigl(\mathbb E\|\widehat\beta_0^{(-k)}-\beta_0^{(-k),\star}\|_k^2\Bigr)^{1/2}
+
\|\delta_k\|_k.
\]
By Lemma~\ref{lem:app_pooled_variance},
\[
\Bigl(\mathbb E\|\widehat\beta_0^{(-k)}-\beta_0^{(-k),\star}\|_k^2\Bigr)^{1/2}
\le
\left(\frac{C_{\mathrm{pool}}(\Delta)}{K-1}d\right)^{1/2}.
\]
By Lemma~\ref{lem:app_pooled_bias},
\[
\|\delta_k\|_k
\le
\sqrt{n_k M}\left(2\Delta + C_{\mathrm{het}}e^{12B\Delta}\Delta^2\right).
\]
Combining the last three displays,
\begin{equation}
\label{eq:app_cross_gk_mean}
\|\mathbb E g_k\|_k
\le
\left(\frac{C_{\mathrm{pool}}(\Delta)}{K-1}d\right)^{1/2}
+
\sqrt{n_k M}\left(2\Delta + C_{\mathrm{het}}e^{12B\Delta}\Delta^2\right).
\end{equation}

Substituting \eqref{eq:app_cross_gk_mean} into \eqref{eq:app_cross_cs} yields
\[
|C_k|
\le
C_{\mathrm{bias}}\frac{d}{\sqrt{n_k}}
\left[
\left(\frac{C_{\mathrm{pool}}(\Delta)}{K-1}d\right)^{1/2}
+
\sqrt{n_k M}\left(2\Delta + C_{\mathrm{het}}e^{12B\Delta}\Delta^2\right)
\right].
\]
We bound the two terms separately. Since \(K\ge2\), \((K-1)^{-1/2}\le1\). Moreover, when \(\Delta\in[0,\Delta_0]\), the function \(C_{\mathrm{pool}}(\Delta)\) is bounded above by
\[
\bar C_{\mathrm{pool}}
:=
\sup_{0\le u\le \Delta_0} C_{\mathrm{pool}}(u)
<\infty,
\]
where \(\bar C_{\mathrm{pool}}\) depends only on
\((B,m,M,C_{\max},D_\Theta,\rho_\Theta)\) because \(\Delta_0\) does.
Using also Assumption~\ref{ass:sample}, \(n_k\ge C_0 d\), we obtain
\[
C_{\mathrm{bias}}\frac{d}{\sqrt{n_k}}
\left(\frac{C_{\mathrm{pool}}(\Delta)}{K-1}d\right)^{1/2}
\le
C_{\mathrm{bias}}\sqrt{\frac{\bar C_{\mathrm{pool}}}{C_0}}\,d.
\]
For the structural term, we have
\[
C_{\mathrm{bias}}\frac{d}{\sqrt{n_k}}\,
\sqrt{n_k M}\left(2\Delta + C_{\mathrm{het}}e^{12B\Delta}\Delta^2\right)
=
C_{\mathrm{bias}}\sqrt M\,d\left(2\Delta + C_{\mathrm{het}}e^{12B\Delta}\Delta^2\right).
\]
Now fix
\[
c_\Delta:=12B+1.
\]
Since \(e^{c_\Delta\Delta}\ge1\) for \(\Delta\ge0\),
\[
2\Delta \le 2\Delta e^{c_\Delta\Delta},
\]
and since \(\Delta\le e^\Delta\) for \(\Delta\ge0\),
\[
e^{12B\Delta}\Delta^2
\le
\Delta e^{(12B+1)\Delta}
=
\Delta e^{c_\Delta\Delta}.
\]
Hence
\[
2\Delta + C_{\mathrm{het}}e^{12B\Delta}\Delta^2
\le
(2+C_{\mathrm{het}})\,\Delta e^{c_\Delta\Delta}.
\]
Putting everything together,
\[
|C_k|
\le
C_{\mathrm{bias}}\sqrt{\frac{\bar C_{\mathrm{pool}}}{C_0}}\,d
+
C_{\mathrm{bias}}\sqrt M\,(2+C_{\mathrm{het}})\,d\,\Delta e^{c_\Delta\Delta}.
\]
Therefore the claimed bound holds with
\[
c_{\mathrm{cross},0}
:=
C_{\mathrm{bias}}\sqrt{\frac{\bar C_{\mathrm{pool}}}{C_0}},
\qquad
c_{\mathrm{cross},1}
:=
C_{\mathrm{bias}}\sqrt M\,(2+C_{\mathrm{het}}),
\]
which depend only on
\((B,m,M,C_{\max},D_\Theta,\rho_\Theta)\).

For the second claim, by definition,
\[
\zeta_k^+
=
\max\{V_k^{\mathrm{loc}}-C_k,0\}
\ge
\max\{V_k^{\mathrm{loc}}-|C_k|,0\}.
\]
Lemma~\ref{lem:app_Vind} gives
\[
V_k^{\mathrm{loc}} \ge c_{\mathrm{loc}}\,d.
\]
Thus
\[
\zeta_k^+
\ge
\max\Bigl\{
c_{\mathrm{loc}}d
-
d\bigl(c_{\mathrm{cross},0}+c_{\mathrm{cross},1}\Delta e^{c_\Delta\Delta}\bigr),
\,0
\Bigr\}
=
c_\zeta(\Delta)\,d,
\]
where
\[
c_\zeta(\Delta)
:=
\max\Bigl\{
c_{\mathrm{loc}}-c_{\mathrm{cross},0}-c_{\mathrm{cross},1}\Delta e^{c_\Delta\Delta},
\,0
\Bigr\}.
\]
This proves the lemma. \hfill \Halmos
\end{proof}

\begin{lemma}
\label{lem:app_weight_certificate}
Let
\[
\alpha_k:=V_k^{\mathrm{loc}}+V_k^{\mathrm{pool}}-2C_k,
\qquad
\zeta_k:=V_k^{\mathrm{loc}}-C_k,
\qquad
\bar\delta_k:=\mathbb E g_k,
\]
and define
\[
w_k^\star=
\begin{cases}
0, & \zeta_k\le0,\\[4pt]
\dfrac{\zeta_k}{\alpha_k+\zeta_k}, & \zeta_k>0.
\end{cases}
\]
Then:
\begin{enumerate}[label=(\roman*),leftmargin=*]
\item
\[
Q_k(w_k^\star)
\le
- \frac{1}{2} \frac{(\zeta_k^+)^2}{\alpha_k+\zeta_k^+}.
\]

\item
\[
w_k^\star\|\bar\delta_k\|_k
\le
\sqrt{V_k^{\mathrm{loc}}}.
\]

\item
If
\[
\delta_k:=\beta_0^{(-k),\star}-\beta_k^\star,
\qquad
b_k^{\mathrm{pool}}
:=
\mathbb E[
\widehat\beta_0^{(-k)}-\beta_0^{(-k),\star}
],
\]
then
\[
w_k^\star\|\delta_k\|_k
\le
\sqrt{V_k^{\mathrm{loc}}}
+
\|b_k^{\mathrm{pool}}\|_k.
\]
\end{enumerate}
\end{lemma}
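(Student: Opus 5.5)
Everything reduces to elementary algebra once we use Lemma~\ref{lem:app_independence} to rewrite the cross term. If $\zeta_k\le0$, then $w_k^\star=0$, so $Q_k(0)=0=-\tfrac12(\zeta_k^+)^2/(\alpha_k+\zeta_k^+)$, and (ii)--(iii) hold trivially. Assume therefore $\zeta_k>0$. We also need $\alpha_k\ge0$, which holds because $\alpha_k=\mathbb E\|g_k-e_k\|_k^2$; hence $\alpha_k+\zeta_k>0$ and $w_k^\star\in(0,1)$.

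For (i), substitute $u=w_k^\star=\zeta_k/(\alpha_k+\zeta_k)$ into $Q_k(u)=\tfrac12(\alpha_k u^2-2\zeta_k u)$. This gives
\[
Q_k(w_k^\star)=\tfrac12\frac{\zeta_k^2}{(\alpha_k+\zeta_k)^2}\bigl(\alpha_k-2(\alpha_k+\zeta_k)\bigr)=-\tfrac12\frac{\zeta_k^2(\alpha_k+2\zeta_k)}{(\alpha_k+\zeta_k)^2}.
\]
Since $(\alpha_k+2\zeta_k)/(\alpha_k+\zeta_k)\ge1$, this is at most $-\tfrac12\zeta_k^2/(\alpha_k+\zeta_k)$, which is the claim.

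For (ii), independence gives $C_k=\langle \mathbb E e_k,\bar\delta_k\rangle_k$. Write $a:=\mathbb E e_k$, $b:=\bar\delta_k$, $V:=V_k^{\mathrm{loc}}$, and note $\|a\|_k^2\le V$ by Jensen. Next lower bound $\alpha_k$ by replacing second moments with squared means: $V_k^{\mathrm{pool}}=\mathbb E\|g_k\|_k^2\ge\|b\|_k^2$, so
\[
\alpha_k\ge V+\|b\|_k^2-2\langle a,b\rangle_k .
\]
Then
\[
\alpha_k+\zeta_k\ge 2V+\|b\|_k^2-3\langle a,b\rangle_k,\qquad \zeta_k=V-\langle a,b\rangle_k .
\]
The target inequality $\zeta_k\|b\|_k\le\sqrt V(\alpha_k+\zeta_k)$ will therefore follow from
\[
(V-x)\beta\le\sqrt V\,(2V+\beta^2-3x),
\]
where $\beta:=\|b\|_k$, $x:=\langle a,b\rangle_k$, and $|x|\le\sqrt V\beta$. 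Setting $s:=\sqrt V$, this becomes
\[
2s^3+s\beta^2-s^2\beta-x(3s-\beta)\ge0.
\]
The left side is linear in $x$, so it suffices to check the endpoints $x=\pm s\beta$. At $x=s\beta$ it equals $2s^3-4s^2\beta+2s\beta^2=2s(s-\beta)^2\ge0$. At $x=-s\beta$ it equals $2s^3+2s^2\beta\ge0$. This proves (ii).

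For (iii), write $\delta_k=\bar\delta_k-b_k^{\mathrm{pool}}$, since $\bar\delta_k=\mathbb E g_k=b_k^{\mathrm{pool}}+\delta_k$. The triangle inequality together with $w_k^\star\le1$ then gives $w_k^\star\|\delta_k\|_k\le w_k^\star\|\bar\delta_k\|_k+\|b_k^{\mathrm{pool}}\|_k$, and (ii) finishes the argument.

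The main obstacle is (ii). It only closes because the lower bounds on $\alpha_k$ and $\zeta_k$ use the independence structure of $C_k$, and the resulting two-variable inequality must be checked carefully at its extreme configurations, in particular the equality case $x=s\beta$ with $\beta=s$.
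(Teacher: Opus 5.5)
Your proposal is correct and follows the paper's proof essentially step for step: (i) and (iii) are identical, and in (ii) you use the same three ingredients. These are independence to write $C_k=\langle\mathbb E e_k,\bar\delta_k\rangle_k$, Cauchy--Schwarz together with dropping the centered part of $V_k^{\mathrm{pool}}$, and an endpoint check of an inequality that is linear in the cross term. The one difference is that you clear denominators before the endpoint check, using $\alpha_k=\mathbb E\|g_k-e_k\|_k^2\ge0$ to justify the final division, which spares you the paper's separate verification that $2+x^2-3y>0$. A small slip: $\alpha_k\ge0$ only gives $w_k^\star\in(0,1]$, not $(0,1)$, but your argument only uses $w_k^\star\le1$, so nothing breaks.
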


\begin{proof}{Proof of Lemma~\ref{lem:app_weight_certificate}.}
For part (i), substitute \(w_k^\star\) into the quadratic identity from
Lemma~\ref{lem:theory_quadratic_identity}. If \(\zeta_k\le0\), then
\(w_k^\star=0\) and both sides are zero. If \(\zeta_k>0\), then
\[
\frac12 \left( \alpha_k(w_k^\star)^2-2\zeta_k w_k^\star\right)
=
- \frac12 \frac{\zeta_k^2(\alpha_k+2\zeta_k)}{(\alpha_k+\zeta_k)^2}
\le
- \frac12 \frac{\zeta_k^2}{\alpha_k+\zeta_k},
\]
which proves the stated quadratic-improvement bound.

For part (ii), use the centered decomposition
\[
g_k=\xi_k+\bar\delta_k,
\qquad
\bar\delta_k:=\mathbb E g_k,
\qquad
\xi_k:=g_k-\mathbb E g_k,
\qquad
\mathbb E\xi_k=0.
\]
By the leave-one-out construction, \(e_k\) is independent of \(g_k\),
and therefore independent of \(\xi_k\). Hence
\[
C_k
=
\mathbb E\langle e_k,g_k\rangle_k
=
\mathbb E\langle e_k,\xi_k+\bar\delta_k\rangle_k
=
\langle \mathbb E e_k,\bar\delta_k\rangle_k .
\]
By Cauchy--Schwarz,
\[
|C_k|
\le
\|\mathbb E e_k\|_k\,\|\bar\delta_k\|_k
\le
\sqrt{V_k^{\mathrm{loc}}}\,\|\bar\delta_k\|_k .
\]
Also, since \(\mathbb E\xi_k=0\),
\[
V_k^{\mathrm{pool}}
=
\mathbb E\|g_k\|_k^2
=
\mathbb E\|\xi_k+\bar\delta_k\|_k^2
=
\mathbb E\|\xi_k\|_k^2+\|\bar\delta_k\|_k^2 .
\]

Set
\[
V:=V_k^{\mathrm{loc}},
\qquad
U:=\mathbb E\|\xi_k\|_k^2,
\qquad
A:=\|\bar\delta_k\|_k,
\qquad
D:=C_k .
\]
Then
\[
\zeta_k=V-D,
\qquad
\alpha_k=V+U+A^2-2D.
\]
If \(V\le D\), then \(\zeta_k\le0\), so \(w_k^\star=0\) and there is
nothing to prove. Suppose instead that \(V>D\). Then \(V>0\), and
\[
w_k^\star A
=
A\,\frac{V-D}{\alpha_k+\zeta_k}
=
A\,\frac{V-D}{2V+U+A^2-3D}.
\]
By the bound \(|D|\le \sqrt V A\), writing
\[
x:=\frac{A}{\sqrt V},
\qquad
y:=\frac{D}{V},
\]
gives \(|y|\le x\) and \(y<1\). In particular,
\[
2+x^2-3y>0.
\]
Indeed, if \(0\le x\le1\), then
\[
2+x^2-3y \ge 2+x^2-3x=(1-x)(2-x)>0,
\]
while if \(x\ge1\), then \(y<1\) implies
\[
2+x^2-3y> x^2-1\ge0,
\]
with strict positivity also at \(x=1\) because \(y<1\). Hence
\(2V+A^2-3D>0\), and since \(U\ge0\),
\[
w_k^\star A
\le
A\,\frac{V-D}{2V+A^2-3D}.
\]
The desired inequality \(w_k^\star A\le \sqrt V\) is therefore implied by
\[
\frac{x(1-y)}{2+x^2-3y}\le 1.
\]
After rearrangement, this becomes
\[
2+x^2-x+y(x-3)\ge0.
\]
If \(x\ge3\), then \(x-3\ge0\) and \(y\ge -x\), so the left-hand side is
at least
\[
2+x^2-x-x(x-3)=2+2x\ge0.
\]
If \(0\le x\le3\), then \(x-3\le0\) and \(y\le x\), so the left-hand
side is at least
\[
2+x^2-x+x(x-3)=2(x-1)^2\ge0.
\]
Thus
\[
w_k^\star\|\bar\delta_k\|_k
\le
\sqrt{V_k^{\mathrm{loc}}},
\]
which proves part (ii).

For part (iii), note that
\[
\bar\delta_k
=
\mathbb E g_k
=
\mathbb E\widehat\beta_0^{(-k)}-\beta_k^\star
=
\delta_k
+
b_k^{\mathrm{pool}}.
\]
Therefore,
\[
\|\delta_k\|_k
\le
\|\bar\delta_k\|_k+\|b_k^{\mathrm{pool}}\|_k.
\]
Multiplying by \(w_k^\star\le1\) and applying part (ii) gives
\[
w_k^\star\|\delta_k\|_k
\le
\sqrt{V_k^{\mathrm{loc}}}
+
\|b_k^{\mathrm{pool}}\|_k.
\]
This proves part (iii). \hfill \Halmos
\end{proof}

\begin{lemma}
\label{lem:app_remainder_control}
Under Assumptions~\ref{ass:bounded}--\ref{ass:sample}, and whenever $\Delta \in [0,\Delta_0]$ for $\Delta_0$ defined in Appendix~\ref{sec:preliminaries}, there
exists a constant \(C_r>0\), depending only on
\((B,m,M,C_{\max},D_\Theta,\rho_\Theta)\), such that
\[
\mathbb E|r_k(\widehat\beta_k^{\mathrm{shr}})|
\le C_r\frac{d^{3/2}}{\sqrt{n_k}},
\qquad
\mathbb E|r_k(\widehat\beta_k)|
\le C_r\frac{d^{3/2}}{\sqrt{n_k}}.
\]
\end{lemma}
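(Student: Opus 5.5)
The plan is to reduce both bounds to third moments of the local-norm displacement, using the pointwise remainder bound of Lemma~\ref{lem:theory_local_expansion}. Here $\widehat\beta_k^{\mathrm{shr}}=\widehat\beta_k^{\mathrm{shr}}(w_k^\star)$ is evaluated at the damped weight. For any $\beta\in\Theta$, write $v:=\beta-\beta_k^\star$ and $\eta:=\eta_k(\beta)=2B\|v\|_k/\sqrt{n_km}$.

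\emph{Step 1: a cubic pointwise bound.} The series expansion
\[
\omega_{\mathrm{SCL}}(t)=\sum_{j\ge3}\frac{2t^{j-2}}{j!}
\]
gives $\omega_{\mathrm{SCL}}(t)\le \tfrac{t}{3}e^{t}$ for $t\ge0$. By Assumption~\ref{ass:fisher}, $\eta\le 2B\|v\|_2$. Both $\widehat\beta_k$ and $\widehat\beta_k^{\mathrm{shr}}$ lie in $\Theta$, the latter by convexity, so $\|v\|_2\le D_\Theta$ and hence $\eta\le 2BD_\Theta$. Therefore
\[
|r_k(\beta)|\le \tfrac12\|v\|_k^2\cdot\tfrac{\eta}{3}e^{2BD_\Theta}
\le C\,\frac{\|v\|_k^3}{\sqrt{n_k}},
\qquad C:=\frac{B\,e^{2BD_\Theta}}{3\sqrt m}.
\]
It therefore suffices to show $\mathbb E\|v\|_k^3\le C' d^{3/2}$ in both cases.

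\emph{Step 2: the local estimator.} Here $v=e_k$, and the case $q=3$ of Lemma~\ref{lem:app_Vind} gives $\mathbb E\|e_k\|_k^3\le C_3 d^{3/2}$ directly.

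\emph{Step 3: the shrinkage estimator.} Use the decomposition \eqref{eqn:displacement}:
\[
v=(1-w_k^\star)e_k+w_k^\star\varepsilon_k^{\mathrm{pool}}+w_k^\star\delta_k .
\]
Apply $(a+b+c)^3\le 9(a^3+b^3+c^3)$ together with $w_k^\star\in[0,1]$, and bound the three terms as follows.
\begin{itemize}
\item The local term satisfies $\mathbb E\|e_k\|_k^3\le C_3d^{3/2}$, again by Lemma~\ref{lem:app_Vind}.
\item The pooled term satisfies $\mathbb E\|\varepsilon_k^{\mathrm{pool}}\|_k^3\le \bar C^{(3)}_{\mathrm{pool}}d^{3/2}$, by Lemma~\ref{lem:app_pooled_variance} with $q=3$, using $K\ge2$ and the uniform boundedness of $C^{(3)}_{\mathrm{pool}}$ on $[0,\Delta_0]$.
\item The deterministic shift $w_k^\star\|\delta_k\|_k$ is the one piece that does not shrink with samples. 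I expect it to be the main obstacle, because Lemma~\ref{lem:app_pooled_bias} alone only gives the bound $\sqrt{n_kM}\,\Delta$, which is far too large. The damping handles it. Lemma~\ref{lem:app_weight_certificate}(iii) gives
\[
w_k^\star\|\delta_k\|_k\le\sqrt{V_k^{\mathrm{loc}}}+\|b_k^{\mathrm{pool}}\|_k .
\]
Here $\sqrt{V_k^{\mathrm{loc}}}\le\sqrt{C_{\mathrm{loc}}d}$ by Lemma~\ref{lem:app_Vind}. By Jensen and Lemma~\ref{lem:app_pooled_variance}, $\|b_k^{\mathrm{pool}}\|_k\le(\mathbb E\|\varepsilon_k^{\mathrm{pool}}\|_k^2)^{1/2}\le\sqrt{\bar C_{\mathrm{pool}}d}$. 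So the shift is at most $C''\sqrt d$, and its cube is $O(d^{3/2})$.
\end{itemize}

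Combining the three pieces gives $\mathbb E\|v\|_k^3\le C'd^{3/2}$. Multiplying by the factor from Step~1 yields both displayed bounds with a single $C_r$. All constants depend only on $(B,m,M,C_{\max},D_\Theta,\rho_\Theta)$, because the $\Delta$-dependent ones are bounded uniformly on $[0,\Delta_0]$.
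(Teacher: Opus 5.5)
Your argument follows the paper's proof: a pointwise cubic bound $|r_k(\beta)|\lesssim\|\beta-\beta_k^\star\|_k^3/\sqrt{n_k}$, third moments from Lemmas~\ref{lem:app_Vind} and~\ref{lem:app_pooled_variance}, and the damped-weight certificate to keep the deterministic shift at scale $\sqrt d$. The differences are bookkeeping only. You split $g_k=\varepsilon_k^{\mathrm{pool}}+\delta_k$ and use Lemma~\ref{lem:app_weight_certificate}(iii) plus Jensen on $b_k^{\mathrm{pool}}$. The paper instead splits $g_k$ into its centered part $\xi_k$ and mean $\bar\delta_k$, uses part~(ii), and pays a factor of two for centering in $L^3$.

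One inequality in Step~1 is reversed. Since $H_k^\star=n_k\Sigma_X^k\succeq n_k m I_d$, you have $\|v\|_k\ge\sqrt{n_k m}\,\|v\|_2$, hence $\eta_k(\beta)\ge 2B\|v\|_2$, not $\le$. So from the $\eta$-form of Lemma~\ref{lem:theory_local_expansion} you cannot conclude $\eta\le 2BD_\Theta$.

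There are two easy fixes.
\begin{enumerate}
\item Use the upper bound $\Sigma_X^k\preceq MI_d$ to get $\eta\le 2B\sqrt{M/m}\,D_\Theta$. This only changes the exponential factor in your constant to $e^{2B\sqrt{M/m}D_\Theta}$.
\item Do what the paper does: return to the sharper form $|r_k(\beta)|\le\frac12\|v\|_k^2\,\omega_{\mathrm{SCL}}(2B\|v\|_2)$ from Lemma~\ref{lem:app_quadratic_approx}. Bound the exponential using $2B\|v\|_2\le 2BD_\Theta$, and convert only the linear factor via $\|v\|_2\le\|v\|_k/\sqrt{n_k m}$. This gives exactly your constant $Be^{2BD_\Theta}/(3\sqrt m)$.
\end{enumerate}
With either correction the rest of your proof goes through unchanged.
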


\begin{proof}{Proof of Lemma~\ref{lem:app_remainder_control}.}
Let
\[
z_k
:=
\widehat\beta_k^{\mathrm{shr}}(w_k^\star)-\beta_k^\star
=
(1-w_k^\star)e_k+w_k^\star g_k
\]
denote the displacement of the shrinkage estimator from
\(\beta_k^\star\). Split the leave-one-out pooled error into its
centered and systematic parts,
\[
g_k=\xi_k+\bar\delta_k,
\qquad
\bar\delta_k:=\mathbb E g_k,
\qquad
\xi_k:=g_k-\mathbb E g_k,
\]
so that
\[
z_k
=
\underbrace{(1-w_k^\star)e_k+w_k^\star \xi_k}_{=:N_k}
+
\underbrace{w_k^\star\bar\delta_k}_{=:b_k}.
\]
The weight $w_k^\star$ is deterministic because it is defined through
population moments. Lemmas~\ref{lem:app_Vind} and
\ref{lem:app_pooled_variance} give third moments of order $d^{3/2}$
for $e_k$ and for
$\varepsilon_k^{\mathrm{pool}}
:=\widehat\beta_0^{(-k)}-\beta_0^{(-k),\star}$. Centering changes the
corresponding $L^3$ norm by at most a factor of two, so the same order
holds for $\xi_k$. Minkowski's inequality therefore gives
\[
\bigl(\mathbb E\|z_k\|_k^3\bigr)^{1/3}
\le
(1-w_k^\star)\bigl(\mathbb E\|e_k\|_k^3\bigr)^{1/3}
+
w_k^\star\bigl(\mathbb E\|\xi_k\|_k^3\bigr)^{1/3}
+
\|b_k\|_k .
\]
Lemma~\ref{lem:app_weight_certificate}(ii) and
Lemma~\ref{lem:app_Vind} bound the last term as
\[
\|b_k\|_k
=
w_k^\star\|\bar\delta_k\|_k
\le
\sqrt{V_k^{\mathrm{loc}}}
\le
\sqrt{C_{\mathrm{loc}}\,d}.
\]
Consequently, for a constant $C_3$ uniform over
$\Delta\in[0,\Delta_0]$,
\begin{equation}
\label{eq:app_rem_moments}
\mathbb E\|z_k\|_k^3\le C_3\,d^{3/2}.
\end{equation}

It remains to use the bounded parameter region in the pointwise SCL
remainder bound. Define
\[
C_{\Theta,\omega}
:=
\sup_{0<a\le2BD_\Theta}
\frac{\omega_{\mathrm{SCL}}(a)}{a},
\]
which is finite because
$\omega_{\mathrm{SCL}}(a)/a\to1/3$ as $a\downarrow0$. The proof of
Lemma~\ref{lem:theory_local_expansion}, before replacing the Euclidean
displacement by the local norm, gives
\[
|r_k(\widehat\beta_k^{\mathrm{shr}})|
\le
\frac12\|z_k\|_k^2
\omega_{\mathrm{SCL}}(2B\|z_k\|_2).
\]
Both $\widehat\beta_k^{\mathrm{shr}}$ and $\beta_k^\star$ belong to
$\Theta$, so $2B\|z_k\|_2\le2BD_\Theta$. Moreover,
$\|z_k\|_2\le\|z_k\|_k/\sqrt{n_km}$. It follows that
\[
|r_k(\widehat\beta_k^{\mathrm{shr}})|
\le
\frac{BC_{\Theta,\omega}}{\sqrt m}
\frac{\|z_k\|_k^3}{\sqrt{n_k}}.
\]
Taking expectations and using \eqref{eq:app_rem_moments} proves the
first claim. The same pointwise argument with $z_k$ replaced by $e_k$,
together with the case $q=3$ of Lemma~\ref{lem:app_Vind}, proves the
second. \hfill \Halmos
\end{proof}

\section{Conditions on heterogeneity and $C_0$}
\label{app:barDelta}

The reference radius $\Delta_0>0$ fixed in
Appendix~\ref{sec:preliminaries} ensures that all pooled population
targets retain a fixed interior margin and that the $\Delta$-dependent
constants in Lemmas~\ref{lem:app_pooled_bias}--
\ref{lem:app_cross_control} are uniform on $[0,\Delta_0]$. The proof of
Theorem~\ref{thm:theory_main} requires the additional condition
$c_\zeta(\Delta)>0$, which holds when
\begin{equation*}%
c_{\mathrm{cross},0} + c_{\mathrm{cross},1}\,\delta\,e^{c_\Delta\delta}
\;<\; c_{\mathrm{loc}}.
\end{equation*}
Therefore we set the restricted radius
\[
\bar\Delta
:=
\sup\Bigl\{
\delta\in[0,\Delta_0]:
c_{\mathrm{cross},0}
+ c_{\mathrm{cross},1}\delta e^{c_\Delta\delta}
\le \tfrac12 c_{\mathrm{loc}}
\Bigr\},
\]
which ensures $c_\zeta(\Delta)\ge\tfrac12c_{\mathrm{loc}}>0$ for
$\Delta\in[0,\bar\Delta]$. The left-hand side is continuous and
increasing in $\delta$ and equals $c_{\mathrm{cross},0}$ at zero. Thus
$\bar\Delta>0$ whenever
$c_{\mathrm{cross},0}<\tfrac12c_{\mathrm{loc}}$. Since
\[
c_{\mathrm{cross},0}
=
C_{\mathrm{bias}}\sqrt{\bar C_{\mathrm{pool}}/C_0},
\qquad
\bar C_{\mathrm{pool}}
:=
\sup_{0\le\delta\le\Delta_0}C_{\mathrm{pool}}(\delta),
\]
this strict inequality follows by taking the normalized sample-size
floor $C_0$ sufficiently large.

The same floor also ensures that the score event used for the lower
bound in Lemma~\ref{lem:app_Vind} places the constrained estimator in
the interior of $\Theta$, and it yields the exponentially small
boundary probability used in Lemma~\ref{lem:app_mle_bias}. Throughout,
Assumption~\ref{ass:sample} is understood with a value of $C_0$
admissible in the sense of the next lemma.

\begin{lemma}
\label{lem:app_constants}
There exist $C_0\ge1$ and $\bar\Delta>0$, depending only on
$(B,m,M,C_{\max},D_\Theta,\rho_\Theta)$, that satisfy the requirements
above simultaneously.
\end{lemma}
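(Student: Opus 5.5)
The plan is to track the dependence of every constant on $C_0$ and show that the circularity between $C_0$ and $\bar\Delta$ is only apparent. The requirements on $C_0$ are of two kinds. \emph{Interior and linearization requirements} ask that $C_0$ be large enough that: the event in the proof of Lemma~\ref{lem:app_mle_bias} gives Euclidean radius below $\rho_\Theta/2$; the score event $\mathcal E_k$ in Lemma~\ref{lem:app_Vind} forces $\|e_k\|_2<\rho_\Theta/2$ and $2B\|e_k\|_2\le\log 2$; and the exponential tail term $\sqrt d\,e^{-c_In_k/2}$ is dominated by $d/\sqrt{n_k}$. The \emph{heterogeneity requirement} is $c_{\mathrm{cross},0}<\tfrac12 c_{\mathrm{loc}}$. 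I will first argue that every interior and linearization requirement reduces to an explicit lower bound $C_0\ge C_0^{(1)}$. Here $C_0^{(1)}$ depends only on $(B,m,D_\Theta,\rho_\Theta)$ through $C_\beta$, $c_\beta$, $C_s$, $c_s$, $\theta$. These inputs come from Theorem~\ref{thm:app_mle_concentration}, Lemma~\ref{lem:app_score_concentration} and Lemma~\ref{lem:app_Vind}, and none of them depends on $C_0$. For example, on $\mathcal E_k$ we have $\|e_k\|_2\le 2e^{2BD_\Theta}C_s(2+\theta)\sqrt{d/(n_km)}\le 2e^{2BD_\Theta}C_s(2+\theta)/\sqrt{C_0m}$, so both targets hold once $C_0$ exceeds an explicit threshold.

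The key step is checking monotonicity: the constants entering $c_{\mathrm{cross},0}=C_{\mathrm{bias}}\sqrt{\bar C_{\mathrm{pool}}/C_0}$ must not deteriorate as $C_0$ increases. The constants $c_{\mathrm{loc}}=\tfrac12\log^22$, $\bar C_{\mathrm{pool}}$ (built from $\Delta_0$, $c_-(\Delta)$ and the tail constants of Lemma~\ref{lem:app_pooled_variance}) and $\Delta_0$ involve no $C_0$. The constant $C_{\mathrm{bias}}$ arises in Lemma~\ref{lem:app_mle_bias} as $2B/\sqrt m\cdot\sqrt{C_2}$ plus the tail contribution. That contribution is bounded uniformly once $C_0\ge C_0^{(1)}$, because $\sqrt d\,e^{-c_In_k/2}\sqrt{n_k}/d\le\sup_{x\ge1}\sqrt x\,e^{-c_Ix/2}<\infty$. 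So for $C_0\ge C_0^{(1)}$, $C_{\mathrm{bias}}$ is bounded by a fixed $\bar C_{\mathrm{bias}}$, and $C_2$ (Lemma~\ref{lem:app_Vind}) is independent of $C_0$. This is the step I expect to be the main obstacle. It requires rereading each lemma to certify that enlarging $C_0$ only tightens the requirements and never feeds back into the constants. I would write the proof as an explicit list of the places $C_0$ appears.

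With this in hand, I set
\[
C_0:=\max\Bigl\{1,\;C_0^{(1)},\;\bigl(4\bar C_{\mathrm{bias}}\sqrt{\bar C_{\mathrm{pool}}}/c_{\mathrm{loc}}\bigr)^2\Bigr\}.
\]
This gives $c_{\mathrm{cross},0}\le\tfrac14c_{\mathrm{loc}}<\tfrac12c_{\mathrm{loc}}$. The function $\delta\mapsto c_{\mathrm{cross},0}+c_{\mathrm{cross},1}\delta e^{c_\Delta\delta}$ is continuous and strictly increasing, with $c_{\mathrm{cross},1}$ and $c_\Delta$ independent of $C_0$, so the defining set of $\bar\Delta$ contains a neighborhood of $0$ in $[0,\Delta_0]$. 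Explicitly, $\bar\Delta\ge\min\{\Delta_0,\delta_1\}>0$, where $\delta_1$ solves $c_{\mathrm{cross},1}\delta e^{c_\Delta\delta}=\tfrac14c_{\mathrm{loc}}$. Both $C_0$ and $\bar\Delta$ then depend only on $(B,m,M,C_{\max},D_\Theta,\rho_\Theta)$, and all requirements hold simultaneously.
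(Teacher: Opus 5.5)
Your proposal is correct and follows essentially the same route as the paper: fix an interior-event threshold (your $C_0^{(1)}$, the paper's $C_0^{\mathrm{int}}$) beyond which $c_{\mathrm{loc}}$, $C_{\mathrm{bias}}$, and $\bar C_{\mathrm{pool}}$ no longer depend on $C_0$. Then take $C_0\ge 16C_{\mathrm{bias}}^2\bar C_{\mathrm{pool}}/c_{\mathrm{loc}}^2$ so that $c_{\mathrm{cross},0}\le\tfrac14 c_{\mathrm{loc}}$, and conclude $\bar\Delta>0$ by continuity and monotonicity. Your explicit check that the tail contribution to $C_{\mathrm{bias}}$ is uniformly bounded, and your lower bound $\bar\Delta\ge\min\{\Delta_0,\delta_1\}$, are useful refinements of the paper's terser argument.
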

\begin{proof}{Proof of Lemma~\ref{lem:app_constants}.}
Let $C_0^{\mathrm{int}}$ be large enough for the two interior-event
requirements in Lemmas~\ref{lem:app_Vind} and
\ref{lem:app_mle_bias}. It depends only on
$(B,m,D_\Theta,\rho_\Theta)$. Once
$C_0\ge C_0^{\mathrm{int}}$, the constants
$c_{\mathrm{loc}}$, $C_{\mathrm{bias}}$, and
$\bar C_{\mathrm{pool}}$ may be fixed independently of any further
increase in $C_0$, whereas
$c_{\mathrm{cross},0}=C_{\mathrm{bias}}\sqrt{\bar C_{\mathrm{pool}}/C_0}$
is decreasing in $C_0$. Taking
\[
C_0
:=
\max\!\left\{
C_0^{\mathrm{int}},\;
\frac{16C_{\mathrm{bias}}^2\bar C_{\mathrm{pool}}}{c_{\mathrm{loc}}^2},\;
1
\right\}
\]
gives $c_{\mathrm{cross},0}\le\tfrac14c_{\mathrm{loc}}$, and hence the
required strict inequality. Therefore
$\bar\Delta>0$ follows since the left-hand side of the defining
inequality is continuous and increasing in $\delta$. \hfill \Halmos
\end{proof}

\section{Proofs of the Main Finite-Sample Results}
\label{app:main_results}

\begin{proof}{Proof of Lemma~\ref{lem:theory_weight_properties}.}
Part~(i) is Lemma~\ref{lem:app_weight_certificate}(i), and part~(ii) is
Lemma~\ref{lem:app_weight_certificate}(ii).
\hfill \Halmos
\end{proof}

\begin{proof}{Proof of Theorem~\ref{thm:theory_main}.}
Let
\[
\widehat\beta_k^{\mathrm{shr}}=(1-w_k^\star)\widehat\beta_k+w_k^\star\widehat\beta_0^{(-k)}.
\]
Applying Lemma~\ref{lem:theory_local_expansion} to both $\widehat\beta_k^{\mathrm{shr}}$ and $\widehat\beta_k$ gives
\begin{align*}
\mathbb E\Big[L_k^\star(\widehat\beta_k^{\mathrm{shr}})-L_k^\star(\widehat\beta_k)\Big]
&=
\frac12\mathbb E\Big[\|(1-w_k^\star)e_k+w_k^\star g_k\|_k^2-\|e_k\|_k^2\Big]\\
&\quad +\mathbb E r_k(\widehat\beta_k^{\mathrm{shr}})-\mathbb E r_k(\widehat\beta_k).
\end{align*}
By Lemma~\ref{lem:theory_weight_properties}(i),
\[
\frac12\mathbb E\Big[\|(1-w_k^\star)e_k+w_k^\star g_k\|_k^2-\|e_k\|_k^2\Big]
\le
-\frac12\frac{(\zeta_k^+)^2}{\alpha_k+\zeta_k^+}.
\]
By Lemma~\ref{lem:app_remainder_control}, the difference of the two remainder terms is bounded in absolute value by $C_r d^{3/2}/\sqrt{n_k}$.

It therefore remains to control the ratio $(\zeta_k^+)^2/(\alpha_k+\zeta_k^+)$.  Lemma~\ref{lem:app_cross_control} gives the lower bound
\[
\zeta_k^+\ge c_\zeta(\Delta)d,
\]
throughout the moderate-heterogeneity region.  On the other hand,
\[
\alpha_k+\zeta_k^+=2V_k^{\mathrm{loc}}+V_k^{\mathrm{pool}}-3C_k,
\]
and Lemmas~\ref{lem:app_Vind}, \ref{lem:app_pooled_variance}, and~\ref{lem:app_cross_control} imply
\[
\alpha_k+\zeta_k^+
\le C_1(\Delta)d+C_2(\Delta)n_k\Delta^2
\]
for suitable positive functions $C_1(\Delta)$ and $C_2(\Delta)$
depending only on
$(B,m,M,C_{\max},D_\Theta,\rho_\Theta,\Delta)$. Hence
\[
\frac{(\zeta_k^+)^2}{\alpha_k+\zeta_k^+}
\ge
\frac{c_\zeta(\Delta)^2 d^2}{C_1(\Delta)d+C_2(\Delta)n_k\Delta^2}.
\]
Combining the displays above gives
\[
\mathbb E\Big[L_k^\star(\widehat\beta_k^{\mathrm{shr}})-L_k^\star(\widehat\beta_k)\Big]
\le
-\frac{c_\zeta(\Delta)^2 d^2}{C_1(\Delta)d+C_2(\Delta)n_k\Delta^2}
+c_6\frac{d^{3/2}}{\sqrt{n_k}},
\]
for a constant $c_6>0$ depending only on
$(B,m,M,C_{\max},D_\Theta,\rho_\Theta)$. Renaming the coefficient
functions and restricting to the certified moderate-heterogeneity
region $\Delta\in[0,\bar\Delta]$ yields the form stated in
Theorem~\ref{thm:theory_main}. Summing over $k$ proves the global bound.
\hfill \Halmos
\end{proof}

\begin{proof}{Proof of Corollary~\ref{cor:theory_weight_scale}.}
In the certified moderate-heterogeneity region,
Lemma~\ref{lem:app_cross_control} implies
\[
\zeta_k^+\ge c_\zeta(\Delta)d,
\]
with \(c_\zeta(\Delta)>0\). Hence \(\zeta_k>0\) and
\[
w_k^\star=\frac{\zeta_k}{\alpha_k+\zeta_k}.
\]

We first prove the lower bound on \(w_k^\star\). By
Lemma~\ref{lem:app_Vind} and Lemma~\ref{lem:app_cross_control},
\[
V_k^{\mathrm{loc}}+|C_k|\le Cd.
\]
Write
\[
g_k
=
\widehat\beta_0^{(-k)}-\beta_k^\star
=
\bigl(\widehat\beta_0^{(-k)}-\beta_0^{(-k),\star}\bigr)
+
\bigl(\beta_0^{(-k),\star}-\beta_k^\star\bigr)
=
\varepsilon_k^{\mathrm{pool}}+\delta_k,
\]
where
\[
\delta_k:=\beta_0^{(-k),\star}-\beta_k^\star,
\qquad
\Delta_k=\|\delta_k\|_2 .
\]
By the pooled-estimation moment bound and the balanced-sample
assumption,
\[
\mathbb E\|\varepsilon_k^{\mathrm{pool}}\|_k^2\le Cd.
\]
Therefore,
\[
V_k^{\mathrm{pool}}
=
\mathbb E\|\varepsilon_k^{\mathrm{pool}}+\delta_k\|_k^2
\le
2\mathbb E\|\varepsilon_k^{\mathrm{pool}}\|_k^2
+
2\|\delta_k\|_k^2
\le
Cd+Cn_k\Delta_k^2,
\]
where the last inequality uses \(H_k^\star\preceq n_k M I_d\). Hence
\[
\alpha_k+\zeta_k
=
2V_k^{\mathrm{loc}}+V_k^{\mathrm{pool}}-3C_k
\le
C(d+n_k\Delta_k^2).
\]
Taking the infimum of \(c_\zeta(\Delta)\) over
\(\Delta\in[0,\bar\Delta]\) gives
\[
w_k^\star
=
\frac{\zeta_k}{\alpha_k+\zeta_k}
\ge
c_-\,\frac{d}{d+n_k\Delta_k^2}.
\]

We now prove the upper bound. First,
\[
\zeta_k
\le
V_k^{\mathrm{loc}}+|C_k|
\le
Cd.
\]
It remains to lower bound the denominator. In the certified
moderate-heterogeneity region, Lemma~\ref{lem:app_Vind} gives
\[
V_k^{\mathrm{loc}}\ge c_{\mathrm{loc}}d,
\]
while Lemma~\ref{lem:app_cross_control} and the definition of
\(\bar\Delta\) give
\[
|C_k|\le \frac12 c_{\mathrm{loc}}d.
\]
Therefore,
\[
2V_k^{\mathrm{loc}}-3C_k
\ge
2V_k^{\mathrm{loc}}-3|C_k|
\ge
\frac12 c_{\mathrm{loc}}d.
\]
Thus
\[
\alpha_k+\zeta_k
=
2V_k^{\mathrm{loc}}+V_k^{\mathrm{pool}}-3C_k
\ge
cd+V_k^{\mathrm{pool}}.
\]

We next lower bound \(V_k^{\mathrm{pool}}\). Since
\(g_k=\varepsilon_k^{\mathrm{pool}}+\delta_k\),
\[
V_k^{\mathrm{pool}}
=
\mathbb E\|\varepsilon_k^{\mathrm{pool}}+\delta_k\|_k^2.
\]
Using the inequality
\[
\|a+b\|^2\ge \frac12\|b\|^2-\|a\|^2,
\]
we obtain
\[
V_k^{\mathrm{pool}}
\ge
\frac12\|\delta_k\|_k^2
-
\mathbb E\|\varepsilon_k^{\mathrm{pool}}\|_k^2.
\]
The pooled-estimation moment bound gives
\[
\mathbb E\|\varepsilon_k^{\mathrm{pool}}\|_k^2\le Cd,
\]
and Assumption~\ref{ass:fisher} gives
\[
\|\delta_k\|_k^2
=
\delta_k^\top H_k^\star \delta_k
\ge
m n_k\|\delta_k\|_2^2
=
m n_k\Delta_k^2.
\]

We consider two cases. If
\[
n_k\Delta_k^2\le A d
\]
for a sufficiently large constant \(A\), then
\[
d+n_k\Delta_k^2\le C d,
\]
and the denominator lower bound
\[
\alpha_k+\zeta_k\ge cd
\]
implies
\[
\alpha_k+\zeta_k\ge c'(d+n_k\Delta_k^2).
\]

If instead
\[
n_k\Delta_k^2>A d,
\]
with \(A\) chosen large enough, then
\[
V_k^{\mathrm{pool}}
\ge
\frac12 m n_k\Delta_k^2-Cd
\ge
c n_k\Delta_k^2.
\]
Combining this with
\[
\alpha_k+\zeta_k\ge cd+V_k^{\mathrm{pool}},
\]
we again obtain
\[
\alpha_k+\zeta_k
\ge
c'(d+n_k\Delta_k^2).
\]
Thus, in all cases,
\[
\alpha_k+\zeta_k
\ge
c'(d+n_k\Delta_k^2).
\]
Together with \(\zeta_k\le Cd\), this gives
\[
w_k^\star
=
\frac{\zeta_k}{\alpha_k+\zeta_k}
\le
c_+\,\frac{d}{d+n_k\Delta_k^2}.
\]
This proves the two-sided store-specific order bound.
\hfill \Halmos
\end{proof}

\begin{proof}{Proof of Lemma~\ref{lem:theory_plugin_behavior}.}
Recall that
\[
w_k^\circ(\lambda)
=
\frac{d}{d+\lambda n_k\Delta_k^2},
\qquad
\widehat w_k(\lambda)
=
\frac{d}{d+\lambda n_k\widehat\Delta_k^2}.
\]

First suppose \(\Delta_k>0\). Since
\(\widehat\Delta_k\xrightarrow{p}\Delta_k\), we have
\(\widehat\Delta_k^2\xrightarrow{p}\Delta_k^2>0\). Hence
\[
\frac{n_k}{d}\widehat\Delta_k^2
\xrightarrow{p}
\infty,
\]
because \(n_k/d\to\infty\). Therefore
\[
\widehat w_k(\lambda)
=
\frac{1}{1+\lambda(n_k/d)\widehat\Delta_k^2}
\xrightarrow{p}0.
\]
Moreover,
\[
\frac{\widehat w_k(\lambda)}{w_k^\circ(\lambda)}
=
\frac{d+\lambda n_k\Delta_k^2}
     {d+\lambda n_k\widehat\Delta_k^2}
=
\frac{d/n_k+\lambda\Delta_k^2}
     {d/n_k+\lambda\widehat\Delta_k^2}
\xrightarrow{p}
\frac{\lambda\Delta_k^2}{\lambda\Delta_k^2}
=
1,
\]
where the last step uses \(d/n_k\to0\) and
\(\widehat\Delta_k^2\xrightarrow{p}\Delta_k^2>0\).

Now suppose \(\Delta_k=0\). Then
\[
w_k^\circ(\lambda)=1,
\qquad
\widehat w_k(\lambda)
=
\frac{1}{1+\lambda(n_k/d)\widehat\Delta_k^2}.
\]
Thus
\[
\widehat w_k(\lambda)\xrightarrow{p}1
\]
whenever
\[
\frac{n_k}{d}\widehat\Delta_k^2\xrightarrow{p}0.
\]
This condition is equivalent to
\[
\widehat\Delta_k=o_p\!\left(\sqrt{\frac{d}{n_k}}\right).
\]

Finally, consistency of \(\widehat\Delta_k\) alone is not sufficient in
the boundary case \(\Delta_k=0\). For example, if
\(\widehat\Delta_k^2=d/n_k\), then
\(\widehat\Delta_k\to0\), but
\[
\widehat w_k(\lambda)
=
\frac{1}{1+\lambda},
\]
which does not converge to \(1\). This proves the claim.
\hfill \Halmos
\end{proof}

\section{Proofs of the Operational Regret Bounds}
\label{app:operational}

\begin{lemma}
\label{lem:app_assortment_lipschitz}
Under Assumption~\ref{ass:bounded}, for every $S\subseteq\mathcal N$ and all $\beta,\beta'\in\mathbb R^d$,
\[
|R(S;\beta)-R(S;\beta')|
\le 2p_{\max}B\,\|\beta-\beta'\|_2.
\]
\end{lemma}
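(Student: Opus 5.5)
The plan is a mean value theorem argument along the segment from $\beta'$ to $\beta$. Set $\beta_t:=\beta'+t(\beta-\beta')$ for $t\in[0,1]$ and $\phi(t):=R(S;\beta_t)$. Then
\[
R(S;\beta)-R(S;\beta')=\int_0^1 \nabla_\beta R(S;\beta_t)^\top(\beta-\beta')\,dt .
\]
By Cauchy--Schwarz, it suffices to show $\|\nabla_\beta R(S;\beta)\|_2\le 2p_{\max}B$ uniformly in $\beta$ and $S$. Here $p_{\max}:=\max_{j}p_j$, and I take prices nonnegative, as in the assortment setting.

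To compute the gradient, I would restrict the MNL to the offered set $S$ together with the outside option. Let $X^{\beta}_S$ denote the random feature vector of the chosen alternative under $P(S;\beta)$, with $X_0=0$, and write $\mu_S(\beta):=\sum_{i\in S\cup\{0\}}P_i(S;\beta)X_i$. The same quotient-rule computation used in the proof of Theorem~\ref{thm:app_mnl_scl} gives
\[
\nabla_\beta P_j(S;\beta)=P_j(S;\beta)\bigl(X_j-\mu_S(\beta)\bigr).
\]
Consequently,
\[
\nabla_\beta R(S;\beta)=\sum_{j\in S}p_jP_j(S;\beta)\bigl(X_j-\mu_S(\beta)\bigr).
\]

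For the norm bound, Assumption~\ref{ass:bounded} gives $\|X_j\|_2\le B$. Since $\mu_S(\beta)$ is a convex combination of vectors of norm at most $B$, including $X_0=0$, we also have $\|\mu_S(\beta)\|_2\le B$. Hence $\|X_j-\mu_S(\beta)\|_2\le 2B$, and the triangle inequality yields
\[
\|\nabla_\beta R(S;\beta)\|_2\le 2B\,p_{\max}\sum_{j\in S}P_j(S;\beta)\le 2p_{\max}B,
\]
because the purchase probabilities over $S$ sum to at most one. Plugging this into the integral identity completes the argument.

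There is no real obstacle here. The only points needing care are the following.
\begin{itemize}
\item The gradient formula holds for the restricted assortment $S$, so the argument uses $\mu_S$ and not the full-catalog mean $\mu_X$.
\item The bound on $\mu_S$ relies on the outside option being part of the convex combination.
\item The constant $2$ could be sharpened by using the covariance structure, but this is unnecessary for the statement.
\end{itemize}
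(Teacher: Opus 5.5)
Your proposal is correct and follows the paper's proof: both compute $\nabla_\beta P_j(S;\beta)=P_j(S;\beta)\bigl(X_j-\sum_{i\in S}P_i(S;\beta)X_i\bigr)$, bound its norm by $2BP_j(S;\beta)$, sum with weights $p_j\le p_{\max}$ to obtain a uniform gradient bound, and conclude with the mean value theorem. Your use of $X_0=0$ to bound $\mu_S$, and your explicit assumption of nonnegative prices, simply make precise what the paper uses implicitly.
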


\begin{proof}{Proof of Lemma~\ref{lem:app_assortment_lipschitz}.}
For $j\in S$,
\[
\nabla_\beta P_j(S;\beta)
=
P_j(S;\beta)\left(X_j-\sum_{i\in S}P_i(S;\beta)X_i\right).
\]
Hence $\|\nabla_\beta P_j(S;\beta)\|_2\le 2B P_j(S;\beta)$ by Assumption~\ref{ass:bounded}.  Summing with weights $p_j\le p_{\max}$ gives
\[
\|\nabla_\beta R(S;\beta)\|_2
\le
2p_{\max}B,
\]
and the mean value theorem yields the claim.
\hfill \Halmos
\end{proof}

\begin{proof}{Proof of Proposition~\ref{prop:assortment_regret_hp}.}
Let $\widehat S_k:=\arg\max_S R(S;\widehat\beta_k^{\mathrm{shr}})$.  Then
\begin{align*}
\mathrm{Regret}_k^{\mathrm{assort}}(w_k)
&=R(S_k^\star;\beta_k^\star)-R(\widehat S_k;\beta_k^\star)\\
&=\bigl(R(S_k^\star;\beta_k^\star)-R(S_k^\star;\widehat\beta_k^{\mathrm{shr}})\bigr)
+\bigl(R(S_k^\star;\widehat\beta_k^{\mathrm{shr}})-R(\widehat S_k;\widehat\beta_k^{\mathrm{shr}})\bigr)
+\bigl(R(\widehat S_k;\widehat\beta_k^{\mathrm{shr}})-R(\widehat S_k;\beta_k^\star)\bigr)\\
&\le 2\sup_{S\subseteq\mathcal N}|R(S;\widehat\beta_k^{\mathrm{shr}})-R(S;\beta_k^\star)|\\
&\le 4p_{\max}B\,\|\widehat\beta_k^{\mathrm{shr}}-\beta_k^\star\|_2,
\end{align*}
where the last inequality uses Lemma~\ref{lem:app_assortment_lipschitz}.

It therefore remains to control the Euclidean estimation error.  Using
\[
\widehat\beta_k^{\mathrm{shr}}=(1-w_k)\widehat\beta_k+w_k\widehat\beta_0^{(-k)},
\]
and the triangle inequality,
\[
\|\widehat\beta_k^{\mathrm{shr}}-\beta_k^\star\|_2
\le
(1-w_k)\|\widehat\beta_k-\beta_k^\star\|_2
+w_k\|\widehat\beta_0^{(-k)}-\beta_0^{(-k),\star}\|_2
+w_k\|\beta_0^{(-k),\star}-\beta_k^\star\|_2.
\]
The first two terms are controlled by the individual and leave-one-out pooled concentration bounds, which yield the radii $r_{\mathrm{loc}}(\delta)$ and $r_{\mathrm{pool}}(\delta)$.  The third term is exactly the pooled structural bias, which is bounded by Lemma~\ref{lem:app_pooled_bias}.  Substituting these bounds into the display above and then into the regret reduction inequality proves both the generic high-probability bound and the explicit moderate-heterogeneity version.
\hfill \Halmos
\end{proof}

\begin{lemma}
\label{lem:app_pricing_lipschitz}
Fix an offered set $S\subseteq\mathcal N$. Under
Assumptions~\ref{ass:bounded}--\ref{ass:fisher}, and under the bounded
price set in Section~\ref{subsec:operational_pricing}, there exists a
constant $C_p>0$ depending only on $S$, $p_{\max}$, $B$, and the
primitive regularity quantities such that
\[
|\Pi_k(p; \beta)-\Pi_k(p; \beta')|
\le C_p\|\beta-\beta'\|_2
\qquad
\text{for all }p\in\mathcal P,\quad \beta,\beta'\in\Theta.
\]
\end{lemma}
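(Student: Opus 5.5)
I will prove the bound by the mean value theorem in $\beta$, mirroring Lemma~\ref{lem:app_assortment_lipschitz}. The aim is a uniform gradient bound $\sup_{p\in\mathcal P,\beta\in\Theta}\|\nabla_\beta\Pi_k(p;\beta)\|_2\le C_p$. Since $\Theta$ is convex, the segment between $\beta$ and $\beta'$ stays in $\Theta$, and integrating the gradient along it then gives the claim.

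\textbf{Step 1: gradient formula.} For fixed $p$, write the feature of product $i$ as $X_i(p)=(Z_i,-p_i)$. The utility is $X_i(p)^\top\beta$, so
\[
\nabla_\beta P_i(p;\beta)=P_i(p;\beta)\Bigl(X_i(p)-\sum_{j\in S}P_j(p;\beta)X_j(p)\Bigr).
\]
This is the same quotient-rule computation used in the proof of Theorem~\ref{thm:app_mnl_scl}, with the outside option contributing the zero vector. Consequently
\[
\nabla_\beta\Pi_k(p;\beta)=\sum_{i\in S}(p_i-c_i)\,P_i(p;\beta)\bigl(X_i(p)-\bar X(p;\beta)\bigr),
\]
where $\bar X(p;\beta):=\sum_{j\in S}P_j(p;\beta)X_j(p)$.

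\textbf{Step 2: uniform bounds.} This step is the main obstacle. Assumption~\ref{ass:bounded} bounds the features of the catalog, but during optimization the price coordinate ranges over $\mathcal P$. I would handle this by bounding the price coordinate directly: $\|X_i(p)\|_2\le(\|Z_i\|_2^2+\bar p^2)^{1/2}=:B_S$, which is finite because $S$ is finite and $p_i\le\bar p$. (If the intended reading is that Assumption~\ref{ass:bounded} holds for every admissible price vector, then $B_S=B$.) Then $\|X_i(p)-\bar X(p;\beta)\|_2\le2B_S$. Since $0\le p_i-c_i\le\bar p-c_{\min}\le p_{\max}$ and $\sum_iP_i\le1$, we get $\|\nabla_\beta\Pi_k\|_2\le 2p_{\max}B_S$.

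\textbf{Step 3: conclusion.} Apply the fundamental theorem of calculus along the segment $\beta'+t(\beta-\beta')$, $t\in[0,1]$, which lies in $\Theta$ by convexity (Assumption~\ref{ass:parameter}). This gives $|\Pi_k(p;\beta)-\Pi_k(p;\beta')|\le2p_{\max}B_S\|\beta-\beta'\|_2$, so $C_p:=2p_{\max}B_S$ works. This constant depends only on $S$, the price bounds, and $B$. Assumption~\ref{ass:fisher} is not actually needed for the Lipschitz bound itself; it enters only later, when the regret argument combines this lemma with the concentration radii.
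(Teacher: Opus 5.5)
Your proof is correct and follows the same route as the paper: a uniform bound on $\|\nabla_\beta\Pi_k(p;\beta)\|_2$, then the mean value theorem along a segment. The difference is in how the gradient bound is obtained. The paper argues abstractly, from continuity of $\nabla_\beta\Pi_k$ on the compact set $\mathcal P\times\Theta$, and sets $C_p$ equal to the supremum. You compute it explicitly as $C_p=2p_{\max}B_S$, with $B_S:=\max_{i\in S}(\|Z_i\|_2^2+\bar p^2)^{1/2}$; this bound holds for every $\beta\in\mathbb R^d$, so it needs neither compactness of $\Theta$ nor Assumption~\ref{ass:fisher}.
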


\begin{proof}{Proof of Lemma~\ref{lem:app_pricing_lipschitz}.}
For fixed $S$ and $p$, the mapping $\beta\mapsto \Pi_k(p;\beta)$ is continuously differentiable on a compact parameter region and a compact feasible price region.  The MNL probabilities are smooth, and the bounded-feature and bounded-price assumptions imply that the gradient of $\Pi_k$ with respect to $\beta$ is uniformly bounded over $p\in\mathcal P$.  Taking
\[
C_p:=\sup_{p\in\mathcal P,\,\beta\in\Theta}\|\nabla_\beta \Pi_k(p;\beta)\|_2
\]
therefore yields the displayed Lipschitz bound by the mean value theorem.
\hfill \Halmos
\end{proof}

\begin{proof}{Proof of Proposition~\ref{prop:pricing_regret_hp}.}
Let \(p_k^\star\) denote the oracle price vector and \(\widehat p_k\)
the plug-in price vector based on
\(
\widehat\beta_k^{\mathrm{shr}}(w_k)
=
(1-w_k)\widehat\beta_k+w_k\widehat\beta_0^{(-k)}.
\) Then
\begin{align*}
\mathrm{Regret}_k^{\mathrm{price}}(w_k)
&=\Pi_k(p_k^\star;\beta_k^\star)-\Pi_k(\widehat p_k;\beta_k^\star)\\
&\le 2\sup_{p\in\mathcal P}|\Pi_k(p;\widehat\beta_k^{\mathrm{shr}}(w_k))-\Pi_k(p;\beta_k^\star)|\\
&\le 2C_p\|\widehat\beta_k^{\mathrm{shr}}(w_k)-\beta_k^\star\|_2,
\end{align*}
where the second step follows exactly as in the assortment proof and the third uses Lemma~\ref{lem:app_pricing_lipschitz}.

The Euclidean parameter error is bounded by the same decomposition used in the assortment proof:
\[
\|\widehat\beta_k^{\mathrm{shr}}(w_k)-\beta_k^\star\|_2
\le
(1-w_k)\|\widehat\beta_k-\beta_k^\star\|_2
+w_k\|\widehat\beta_0^{(-k)}-\beta_0^{(-k),\star}\|_2
+w_k\|\beta_0^{(-k),\star}-\beta_k^\star\|_2.
\]
The individual and leave-one-out pooled terms are controlled by the concentration radii $r_{\mathrm{loc}}(\delta)$ and $r_{\mathrm{pool}}(\delta)$, while the structural mismatch term is bounded by the same pooled-bias argument used above.  Substituting these bounds yields the displayed high-probability inequality.  The explicit moderate-heterogeneity version follows by inserting the bound
\[
\|\beta_0^{(-k),\star}-\beta_k^\star\|_2
\le C_\Delta(\Delta+\Delta^2).
\]
\hfill \Halmos
\end{proof}

\section{Additional Details for Computational Experiments}
\label{app:computational_details}

This appendix collects implementation details for the simulation study and the retailer experiment. The main text focuses on the comparative statics in cross-store heterogeneity \(\Delta\) and local sample size \(n\), together with the real-data featured-assortment exercise. The purpose of the present appendix is to document the underlying computational design in enough detail to make the numerical evidence easy to interpret and replicate.

\subsection{Synthetic data-generating process}

All synthetic experiments are based on the same multi-store MNL environment. Each replication generates a network of \(K\) stores, a common catalog of \(N\) products, and \(d\)-dimensional product-feature vectors. These vectors contain an intercept term, \(d-2\) nonprice features, and a negative price coordinate, so each store's ground-truth coefficient vector \(\beta_k^\star\in\mathbb R^d\) contains the corresponding intercept, nonprice coefficients, and price-sensitivity parameter. The outside option is included in the MNL choice set.

The baseline finite-sample regime is intentionally nontrivial. Unless otherwise noted, we use \(K=10\) stores, \(N=15\) products, parameter dimension \(d=7\) with five nonprice features, and \(n_{\mathrm{train}}=50\) local observations per store. For the synthetic assortment problem, the observed training data are generated from offered sets of size \(m_{\mathrm{offer}}=5\), while the exact downstream assortment optimization uses a small capacity parameter \(m_{\max}=3\). The baseline price-sensitivity scale is set to \(\alpha=1.0\).

Cross-store heterogeneity is introduced directly at the parameter level. For each store $k \in \{1, \dots, K\}$, the true local parameter $\beta_k^\star$ is generated by applying a controlled perturbation to the global base:
\[
\beta_k^\star \;=\; \beta_c^\star \;+\; \nu_k,
\]
where the perturbation direction $\nu_k \in \mathbb{R}^d$ is drawn uniformly from the unit sphere in $\mathbb R^d$, and then scaled such that its magnitude equals a prescribed heterogeneity radius, $\|\nu_k\|_2 = \Delta$.

Within each replication, we compare five estimators: the store-specific MLE,
the fully pooled MLE, the realization-specific oracle shrinkage benchmark
described below, the analytic plug-in shrinkage estimator, and a
cross-validated shrinkage estimator.

\subsection{Realization-specific oracle shrinkage}
\label{app:oracle_shrinkage}

The oracle benchmark follows the damped construction in
\eqref{eq:theory_analytic_weight}, using the known simulation truth to form
realized error terms. For store \(k\) in replication \(r\), let
\[
e_{k,r}:=\widehat\beta_{k,r}-\beta_{k,r}^\star,
\qquad
g_{k,r}:=\widehat\beta_{0,r}^{(-k)}-\beta_{k,r}^\star,
\]
where \(\widehat\beta_{0,r}^{(-k)}\) is fitted using all stores except
\(k\). We compute the Euclidean quadratic terms
\begin{equation}
\widetilde\alpha_{k,r}
:=\|e_{k,r}-g_{k,r}\|_2^2,
\qquad
\widetilde\zeta_{k,r}
:=e_{k,r}^\top(e_{k,r}-g_{k,r}).
\label{eq:app_oracle_coefficients}
\end{equation}
These have the same gain-to-cost decomposition as
\eqref{eq:theory_alpha_zeta}: \(\widetilde\alpha_{k,r}
=\|e_{k,r}\|_2^2+\|g_{k,r}\|_2^2-2e_{k,r}^\top g_{k,r}\)
and \(\widetilde\zeta_{k,r}
=\|e_{k,r}\|_2^2-e_{k,r}^\top g_{k,r}\).
Applying the same damping rule gives
\begin{equation}
w_{k,r}^{\mathrm{or}}
:=
\begin{cases}
0, & \widetilde\zeta_{k,r}\le0,\\[4pt]
\dfrac{\widetilde\zeta_{k,r}}
{\widetilde\alpha_{k,r}+\widetilde\zeta_{k,r}},
& \widetilde\zeta_{k,r}>0.
\end{cases}
\label{eq:app_oracle_weight}
\end{equation}
The resulting estimate is
\(\widehat\beta_{k,r}^{\mathrm{or}}
=(1-w_{k,r}^{\mathrm{or}})\widehat\beta_{k,r}
+w_{k,r}^{\mathrm{or}}\widehat\beta_{0,r}^{(-k)}\).

We form each oracle estimate before averaging performance over stores
and replications. No averaging of error terms across replications is
used to construct the weights. The simulation figures label this
realization-specific benchmark Shrinkage (\(w^\star\)).

\subsection{Calibration of the plug-in weight}
\label{app:plugin_calibration}

The plug-in rule uses the weight scale in
Corollary~\ref{cor:theory_weight_scale}, with its variance scale calibrated
from the training data. In replication \(r\), we evaluate the Hessian of
the full pooled empirical loss at the mean of the leave-one-out estimates:
\[
\widetilde\beta_r
:=\frac1K\sum_{\ell=1}^K\widehat\beta_{0,r}^{(-\ell)},
\qquad
\widehat H_{\mathrm{pool},r}
:=\nabla^2L_0(\widetilde\beta_r;Y_r),
\]
where \(Y_r\) contains all stores' training observations. We use
\(\widehat v_r:=K\operatorname{tr}(\widehat H_{\mathrm{pool},r}^{\dagger})\)
as a proxy for local estimation variance, where \(\dagger\) denotes the
Moore--Penrose pseudoinverse. The factor \(K\) adjusts the pooled inverse
information to the local sample size in the balanced simulations,
where \(n_k=n\). The resulting weight is
\begin{equation}
\widehat w_{k,r}^{\mathrm{PI}}
:=\frac{\widehat v_r}{\widehat v_r+\widehat\Delta_{k,r}^2}
=\frac{d}{d+\widehat\lambda_r n\widehat\Delta_{k,r}^2},
\qquad
\widehat\lambda_r:=\frac{d}{n\widehat v_r},
\label{eq:app_plugin_calibration}
\end{equation}
where \(\widehat\Delta_{k,r}
:=\|\widehat\beta_{0,r}^{(-k)}-\widehat\beta_{k,r}\|_2\).
Thus, \(\widehat\lambda_r\) provides a data-dependent calibration of the
order-based rule: it is common across stores within a replication and
recomputed in each replication. No validation data are used for this
calibration; validation tuning is reserved for the separate CV estimator.

\subsection{Main simulation sweeps}

\paragraph{Heterogeneity sweep.}
The main heterogeneity experiment holds the baseline configuration fixed at \(K=10\), \(N=15\), \(d=7\), and \(n_{\mathrm{train}}=50\), and varies the heterogeneity radius over
\[
\Delta \in \{0.00, 0.10, 0.20, 0.35, 0.50, 0.75, 1.00, 1.25, 1.50, 1.75, 2.00, 2.25, 2.50\}.
\]
This is the experiment reported in Figure~\ref{fig:exp1_regret}. The purpose of the sweep is to trace the phase transition between variance reduction and pooling bias: when \(\Delta\) is small, pooling suppresses local estimation noise; when \(\Delta\) is large, the pooled target becomes systematically misaligned with the store-specific truth.

\paragraph{Sample-size sweep.}
The main sample-size experiment holds the same baseline configuration fixed at \(K=10\), \(N=15\), and \(d=7\), fixes heterogeneity at \(\Delta=1\), and varies the local training size over
\[
n_{\mathrm{train}} \in \{30, 50, 75, 100, 150, 200, 300, 500\}.
\]
This is the experiment reported in Figure~\ref{fig:exp2_regret}. The purpose of the sweep is to show how the value of borrowing declines as the local store-specific history becomes richer.

\paragraph{Monte Carlo design.}
Both main sweeps are evaluated over 200 independent Monte Carlo replications. Each point in the simulation figures is the average over these replications and shares represent the standard error.

\subsection{Simulation metrics}

For each synthetic instance and each estimator, we compute downstream decision quality rather than only coefficient error. Specifically, the simulation study reports assortment regret and pricing regret under the true store-level model. The assortment metric compares the expected objective value under the oracle assortment chosen using the true parameter to the value induced by the assortment chosen from the estimated parameter. The pricing metric is defined analogously for the pricing decision. In the figures shown in the paper, these regret measures are normalized and reported as percentages to make comparisons across calibrations easier to read.

\subsection{Retail-data implementation}
\label{app:retail_data_implementation}

This subsection records the implementation details for the retailer
experiment in Section~\ref{sec:empirical}.

The analysis is conducted separately within each of the five product
categories. We use a
rolling-window design with 8 training weeks, 2 validation weeks, and 2
test weeks. The window is then advanced through the panel to generate
the out-of-sample weeks. In each window, Pooled MLE, Store
MLE, and Shrinkage CV are estimated using only data available before the
test period.
As in the main estimator definitions, every likelihood fit in the
training and refitting steps is defined over the common parameter region
\(\Theta\).

All methods are evaluated on the same common-support candidate set. For
each validation or test store-category-week, the candidate set consists
of products that were stocked in that store-category-week and were also
observed in the corresponding training window. Predicted choice
probabilities are renormalized over this candidate set before products
are ranked. This ensures that differences across methods reflect
differences in estimated rankings, rather than differences in product
coverage.

Shrinkage CV uses the validation window to select the shrinkage weight
from a finite grid \(\mathcal W\). For each candidate weight, products in
the validation candidate set are ranked by predicted revenue,
\(p_{ikct}\widehat P_{ikct}\), and the selected weight maximizes
validation revenue capture. The same selected weight is then used to
rank products in the test window. The main empirical results use a
display budget of \(m=20\%\), meaning that the model selects the top
\(\lceil 0.20|\mathcal A_{kct}|\rceil\) products from the relevant
candidate set.

For the volume-bucket analysis, we compute each store-week's local training
volume as the total number of units sold in the corresponding training
window:
\[
V_{kct}^{\mathrm{train}}
=
\sum_{\tau\in\mathcal T_t^{\mathrm{train}}}
\sum_{i\in\mathcal A_{kc\tau}} q_{ikc\tau}.
\]
Within each category, store-weeks are split into bottom,
middle, and top terciles of \(V_{kct}^{\mathrm{train}}\). We refer to
these groups as low-, medium-, and high-volume store-weeks. These buckets
measure recent local sales information; they are not fixed physical
store-size classifications.

For inference, we compute paired revenue-capture lifts at the
store-category-week level. Confidence intervals are obtained using a
clustered bootstrap over stores, preserving all category-week
observations within each resampled store. Win rates report the fraction
of store-weeks in which Shrinkage CV achieves higher revenue
capture than the corresponding benchmark.

\subsection{Additional retail-data robustness checks}
\label{app:retail_robustness}

We report two additional robustness checks for the retailer experiment.
First, we repeat the featured-assortment revenue-capture exercise at
display budgets \(m=10\%\) and \(m=30\%\). The main text uses
\(m=20\%\), so these additional budgets test whether the empirical
ordering is driven by a particular display capacity. A smaller budget
places more emphasis on the top of the product ranking, while a larger
budget evaluates whether the gains persist when the selected set covers
a broader part of the stocked assortment. The results are given in Figure~\ref{fig:app_revenue_capture_10} and Figure~\ref{fig:app_revenue_capture_30}.

Second, we report diagnostics for the validation-selected shrinkage
weights in Figure~\ref{fig:app_weight_diagnostics}. These diagnostics examine how much the estimator borrows from
the network across categories and local-information buckets. The
theoretical analysis predicts that borrowing should be valuable when
local estimation is noisy relative to the complexity of the choice
problem, but that excessive pooling can be harmful when local demand
structure differs across stores. The weight diagnostics therefore
provide a mechanism check: they show whether the validation weight adjusts
the amount of borrowing in a way that is consistent with the
finite-sample bias--variance logic of the paper.

\begin{figure}[t]
\centering
\begin{tabular}{ccc}
\includegraphics[width=0.32\textwidth]{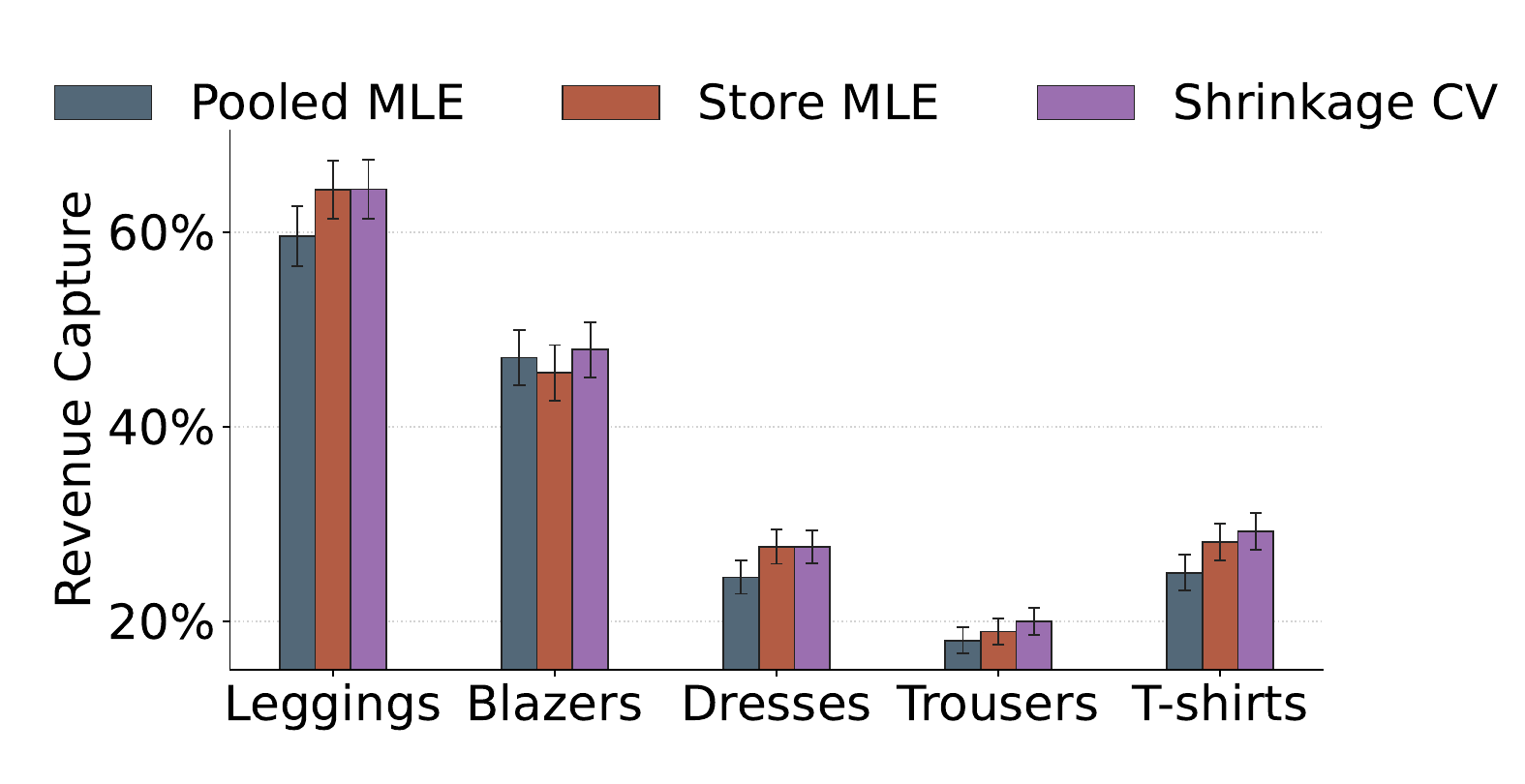} &
\includegraphics[width=0.32\textwidth]{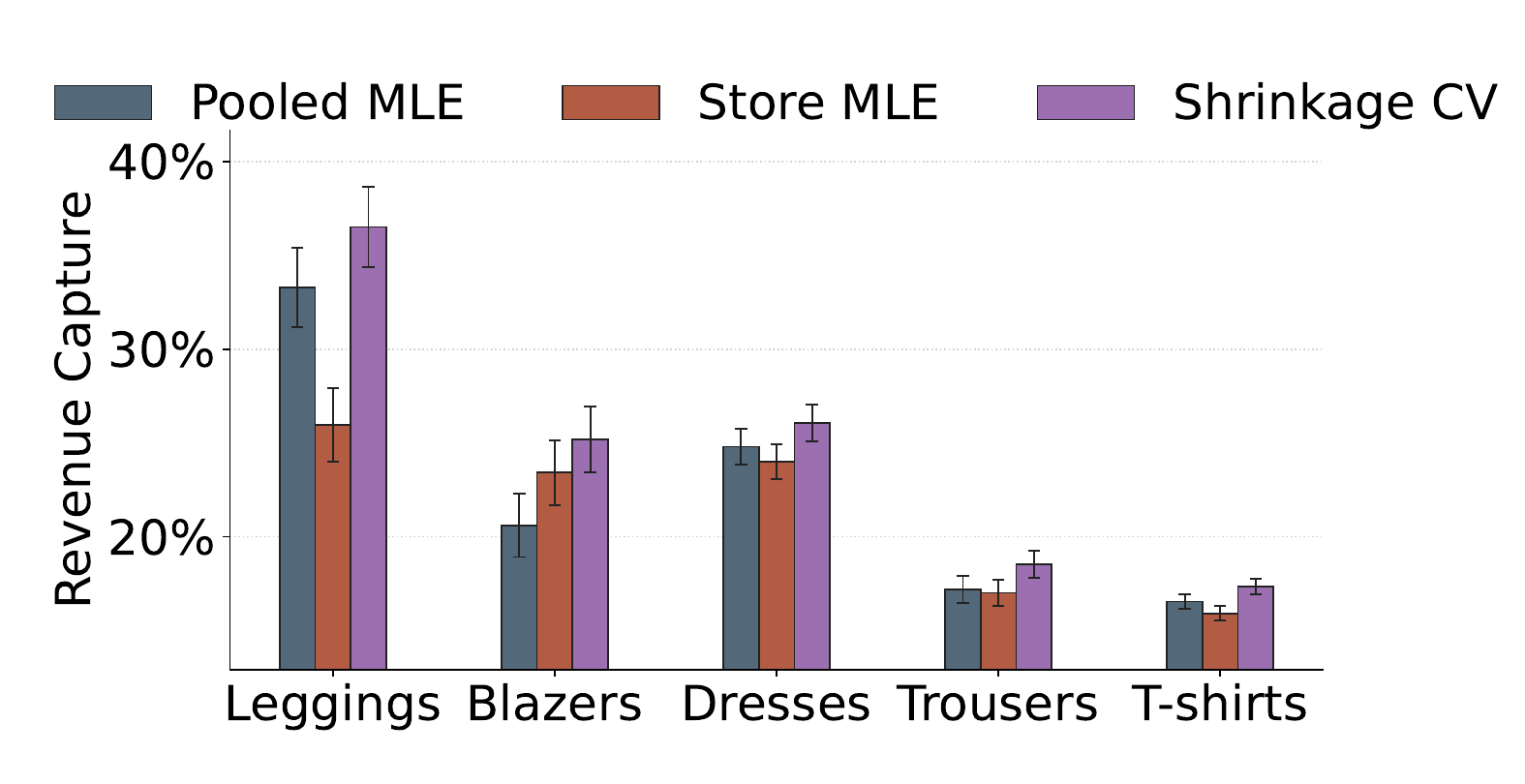} &
\includegraphics[width=0.32\textwidth]{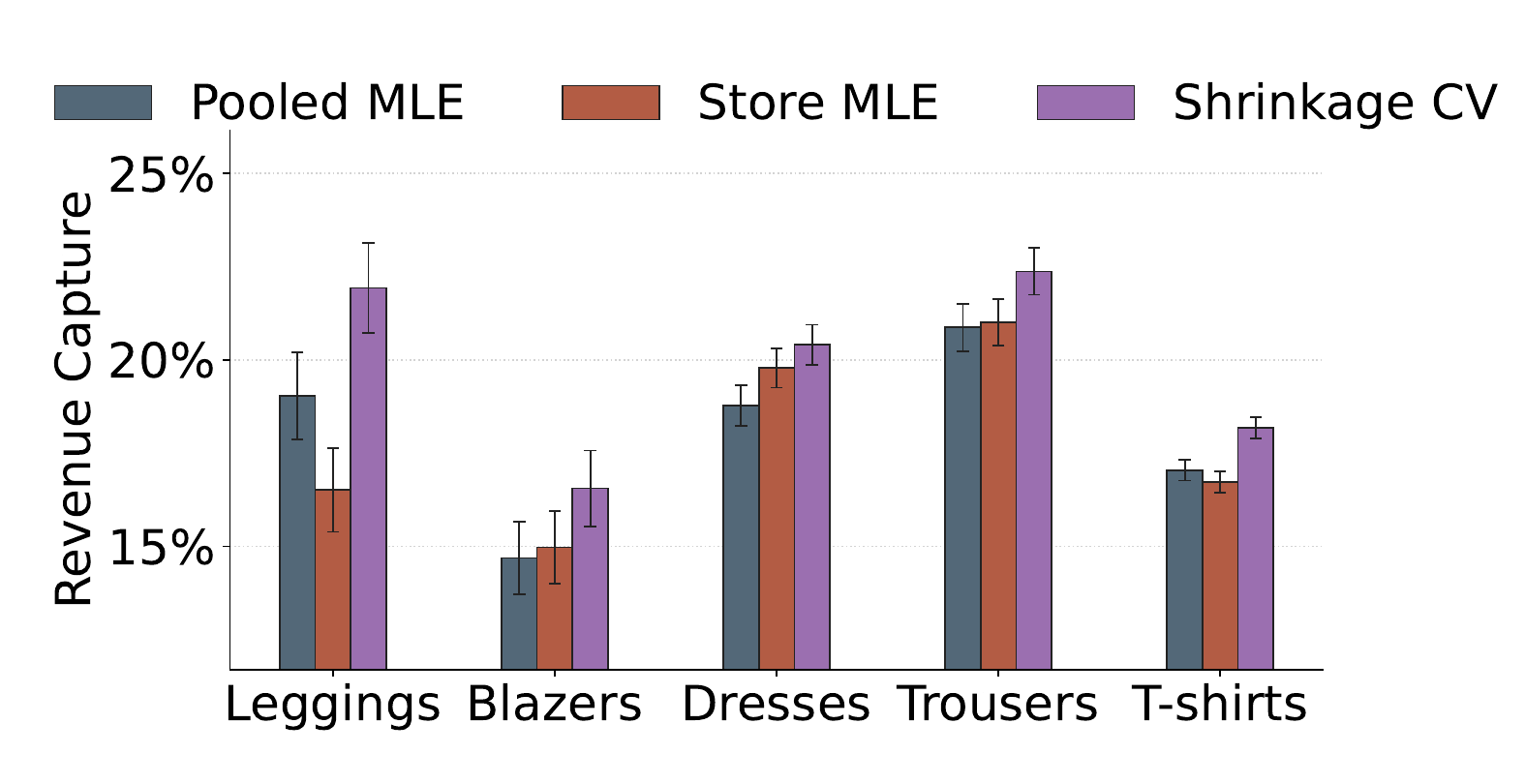}
\end{tabular}
\caption{Revenue capture by product category and local-information
tercile at display budget \(m=10\%\).}
\label{fig:app_revenue_capture_10}
\end{figure}

\begin{figure}[t]
\centering
\begin{tabular}{ccc}
\includegraphics[width=0.32\textwidth]{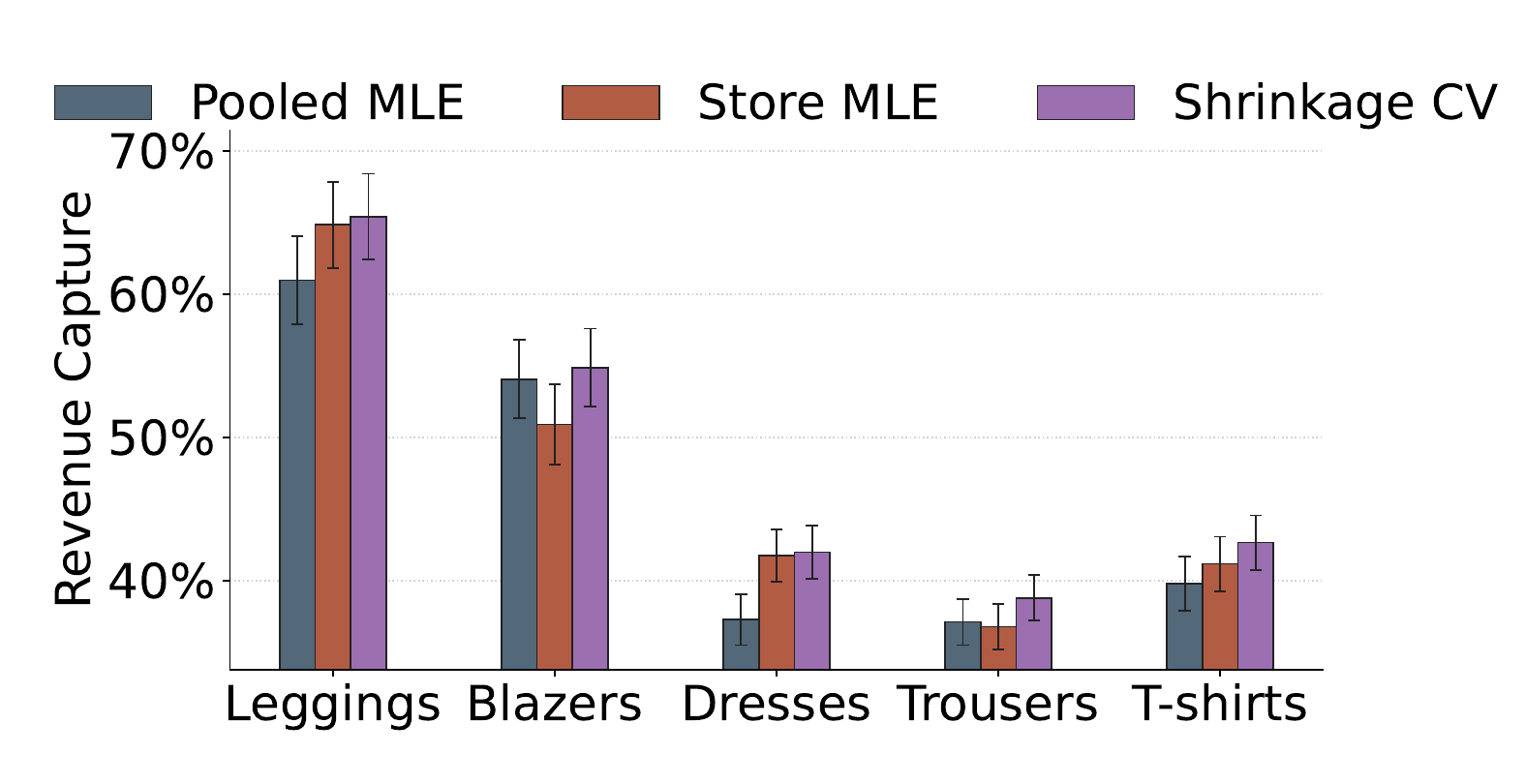} &
\includegraphics[width=0.32\textwidth]{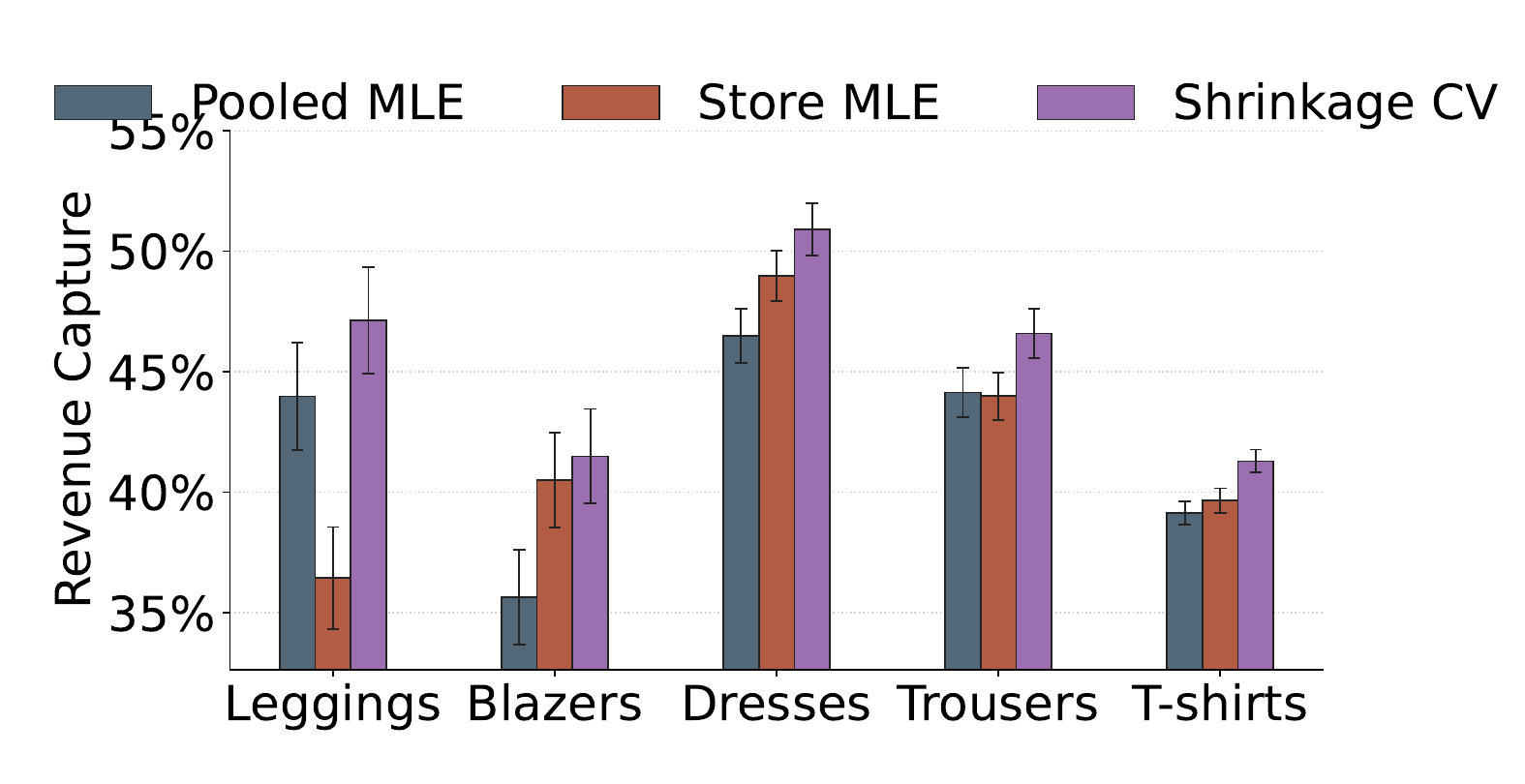} &
\includegraphics[width=0.32\textwidth]{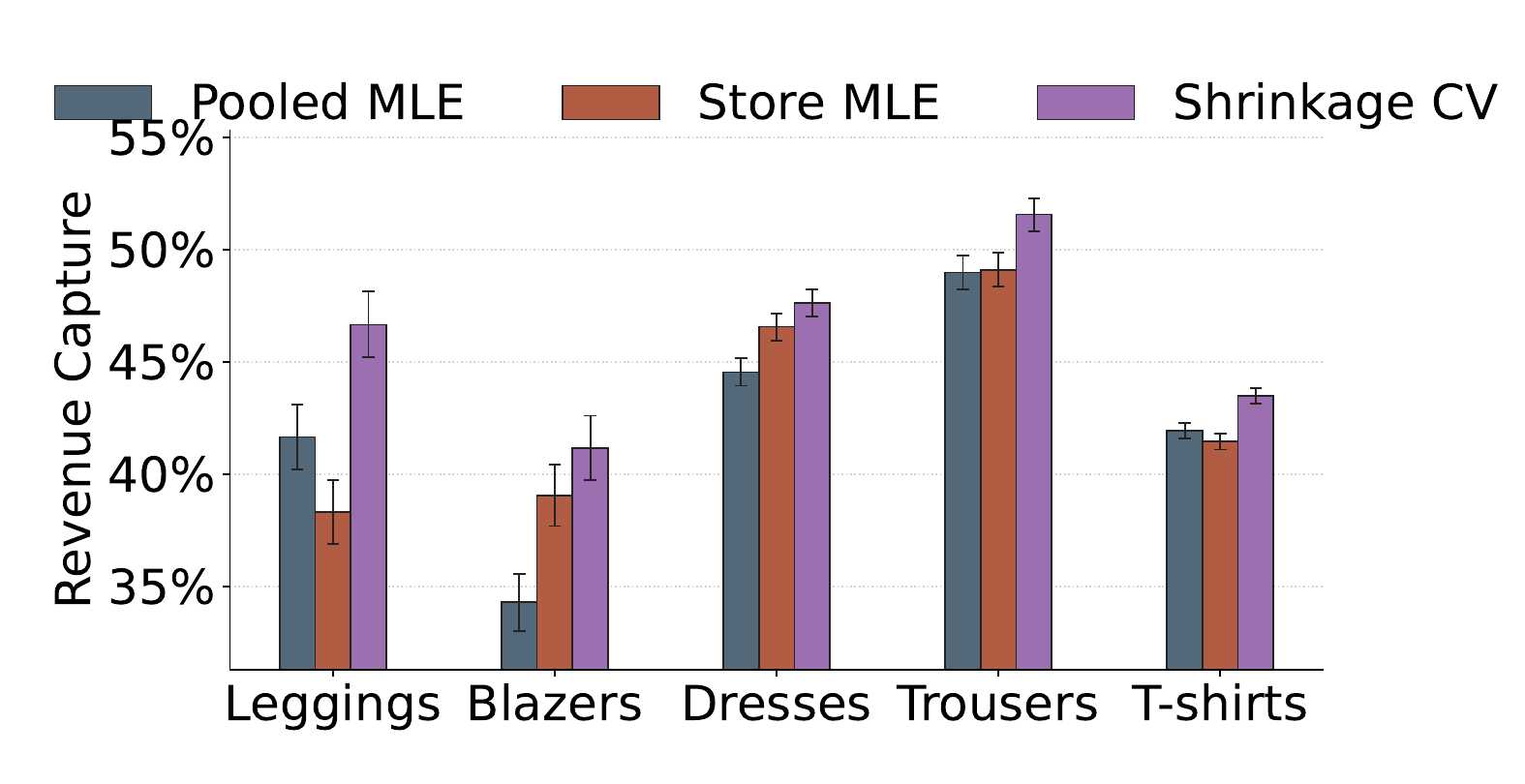}
\end{tabular}
\caption{Revenue capture by product category and local-information
tercile at display budget \(m=30\%\).}
\label{fig:app_revenue_capture_30}
\end{figure}

\begin{figure}[t]
\centering
\includegraphics[width=0.8\textwidth]{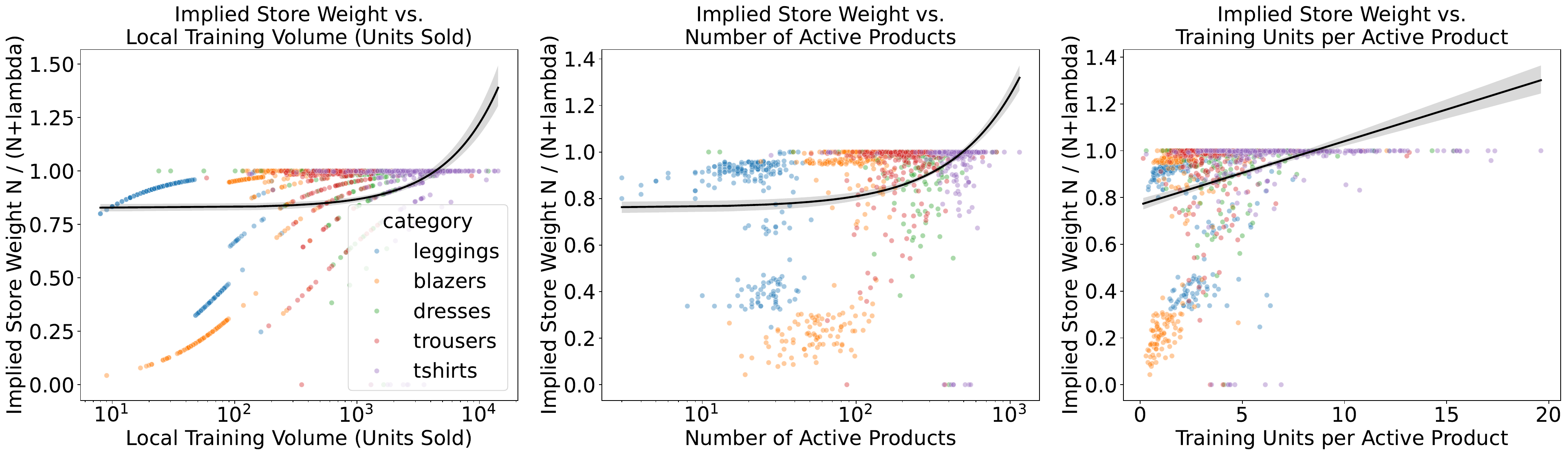}
\caption{Validation-selected shrinkage weights by category and
local-information bucket.}
\label{fig:app_weight_diagnostics}
\end{figure}

\subsection{Sensitivity to Parameter Dimension}
\label{app:dimensionality_sweep}

As an additional robustness check, we vary the parameter dimension \(d\) by changing the number of nonprice features \(d-2\), while holding the local sample size and the overall cross-store heterogeneity radius fixed. To make this sweep interpretable, the synthetic generator rescales the nonprice features by \(1/\sqrt{d-2}\), so that the variance of the linear predictor \(X_i^\top \beta_k^\star\) remains approximately stable as \(d\) changes. This prevents the dimensional sweep from degenerating into a trivial utility-explosion experiment in which probabilities collapse toward \(0\) or \(1\) purely for numerical reasons.

Unlike the heterogeneity and sample-size sweeps in the main text, however, the dimensionality sweep should be interpreted as a robustness exercise rather than as a sharp comparative-static test of the theory. The reason is that increasing \(d\) changes not only the statistical burden of estimation, but also the way a fixed amount of cross-store heterogeneity is distributed across coordinates. In our calibration, the total heterogeneity radius \(\Delta\) is held fixed as \(d\) grows. As a result, the same overall mismatch between stores is spread over a larger number of coefficients, which can reduce the effective per-coordinate bias of the pooled estimator.

Figure~\ref{fig:exp6_regret} reports assortment and pricing regret as \(d\) varies. The assortment results are especially informative. The local estimator deteriorates only mildly with dimension, but the pooled estimator improves steadily as \(d\) rises. This pattern is consistent with the heterogeneity-dilution effect described above: when the same total cross-store mismatch is spread across more coordinates, the pooled estimator can become less biased even as the parameter vector grows. The shrinkage estimators remain close to the lower envelope throughout, with the cross-validated weight performing best across the full range of \(d\).

The pricing results display a different but still interpretable pattern. The pooled estimator exhibits a pronounced U-shape: its regret falls as \(d\) rises from very low levels, but increases again at higher dimensions. The local estimator, by contrast, deteriorates sharply as dimensionality increases. The initial decline in pooled pricing regret is again consistent with the fact that fixed total heterogeneity becomes less concentrated in any given coordinate, reducing pooled bias. The subsequent increase reflects the fact that higher dimensionality still makes the pricing problem statistically harder, so the pooled estimator eventually begins to pay a variance or misspecification cost even after the initial bias reduction. In this setting, the shrinkage estimators remain competitive across the full range of \(d\), although the relative ranking among the practical and infeasible shrinkage benchmarks is more calibration-sensitive than in the main \(\Delta\)- and \(n\)-experiments.

Overall, the dimensionality sweep should be read as a robustness check rather than as a direct confirmation of the main theory. The cleanest theoretical comparative statics remain those in heterogeneity \(\Delta\) and local sample size \(n\). The \(d\)-pass nevertheless provides a useful additional perspective: even after stabilizing the utility scale, higher-dimensional choice estimation can interact with fixed total heterogeneity in nontrivial ways, and the shrinkage estimators continue to provide a robust compromise between noisy local estimation and rigid full pooling.

\begin{figure}[t]
\centering
\begin{tabular}{cc}
\includegraphics[trim=0bp 33bp 0bp 0bp,clip,width=0.47\textwidth]{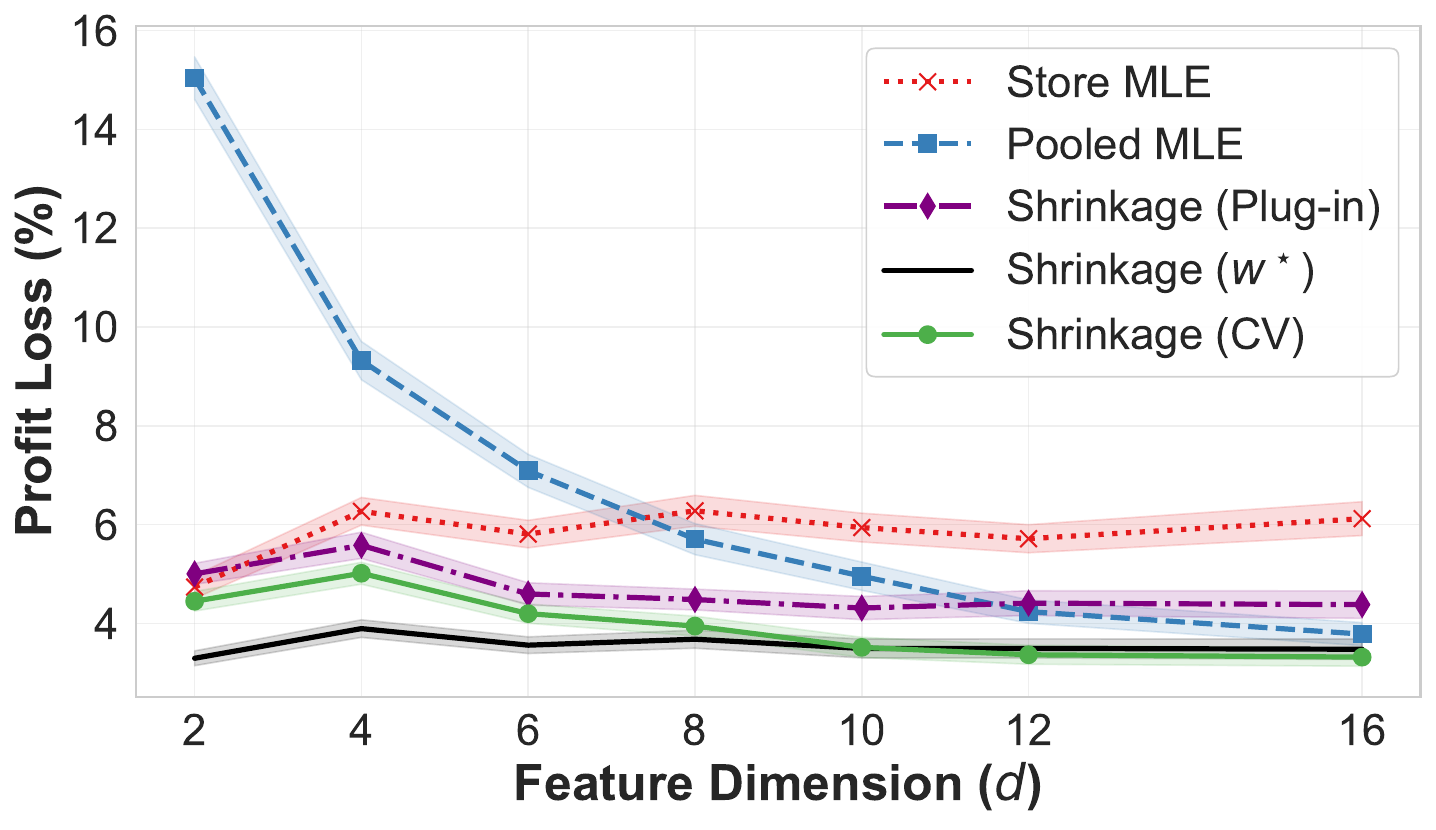} &
\includegraphics[trim=0bp 33bp 0bp 0bp,clip,width=0.47\textwidth]{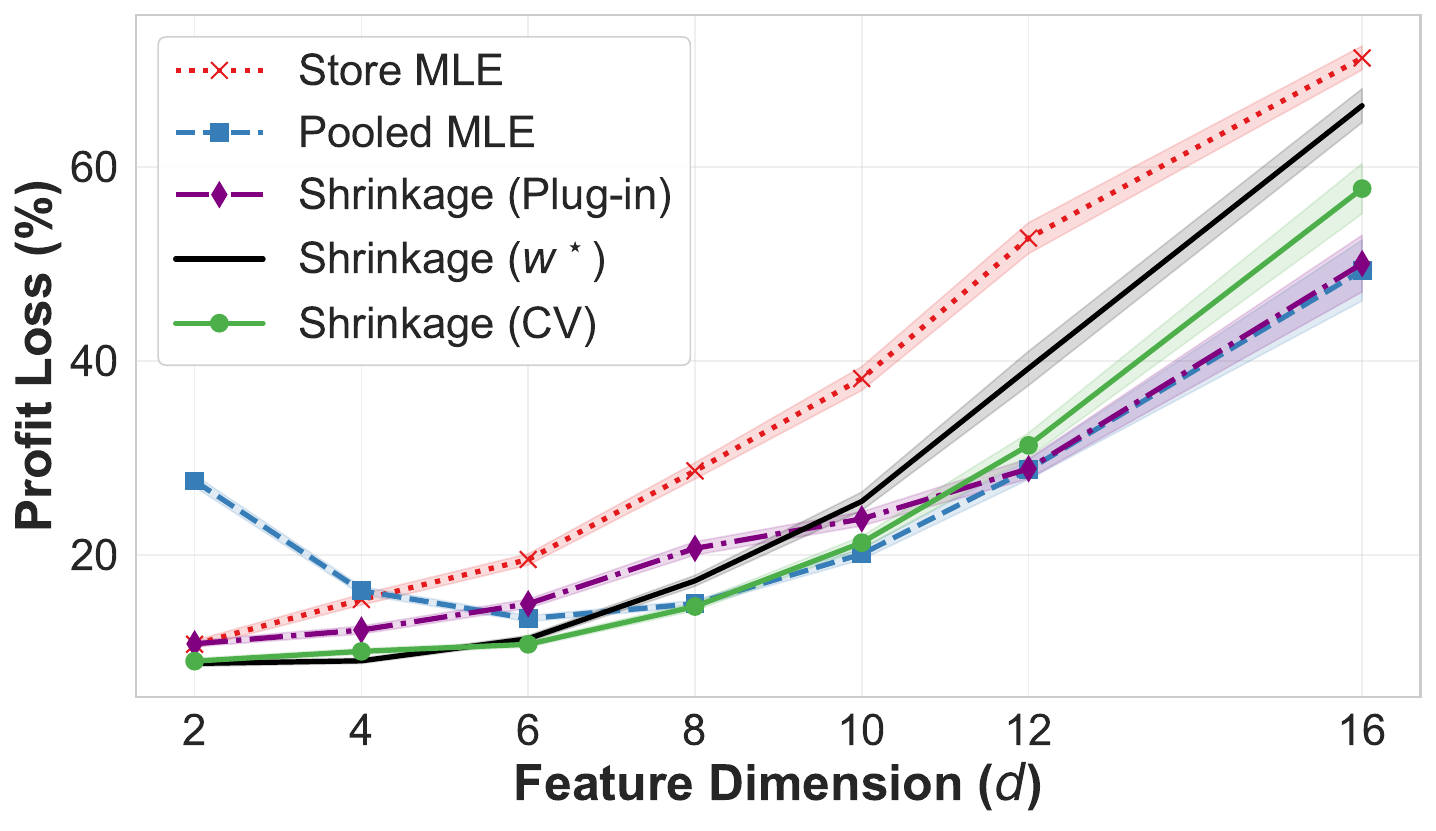}
\end{tabular}
\par
{\small\sffamily\bfseries Nonprice features (\(d-2\))\par}
\caption{Sensitivity of assortment (left) and pricing (right) regret to the number of nonprice features \(d-2\). In this sweep, nonprice features are rescaled by \(1/\sqrt{d-2}\) so that the variance of the linear predictor remains approximately constant as dimension grows. The heterogeneity radius \(\Delta\) is held fixed, so increasing \(d\) spreads a fixed amount of cross-store mismatch over more coordinates.}
\label{fig:exp6_regret}
\end{figure}

\end{document}